\documentclass[11pt]{article}

\usepackage[T1]{fontenc}
\usepackage{lmodern}
\usepackage{microtype}

\usepackage{amsmath,amssymb,amsthm,mathtools}

\usepackage{booktabs,array,longtable,tabularx}

\usepackage{graphicx}
\usepackage{float}
\usepackage{placeins}
\usepackage{needspace}
\usepackage{etoolbox}

\usepackage{authblk}
\usepackage{xcolor}
\usepackage{caption}
\usepackage{enumitem}

\usepackage{natbib}

\usepackage[
    letterpaper,
    top=2cm,
    bottom=2cm,
    left=3cm,
    right=3cm
]{geometry}

\usepackage[colorlinks=true,allcolors=blue]{hyperref}

\newtheorem{theorem}{Theorem}[section]

\newtheorem{lemma}[theorem]{Lemma}
\newtheorem{proposition}[theorem]{Proposition}
\newtheorem{definition}{Definition}[section]
\newtheorem{assumption}{Assumption}[section]

\numberwithin{equation}{section}

\providecommand{\TPR}{\operatorname{TPR}}

\providecommand{\AUC}{\operatorname{AUROC}}

\BeforeBeginEnvironment{longtable}{\Needspace{10\baselineskip}}

\title{Robust Detection of LLM-Generated Text under Contamination}
\author{Jiaxun Li, Saptarshi Chakraborty, Ambuj Tewari}
\affil{Department of Statistics, University of Michigan}
\affil{\texttt{\{jasonli, saptarsc, tewaria\}@umich.edu}}

\hypersetup{
  pdfauthor={},
  pdftitle={Robust Detection of Human-LLM Mixed Text}
}

\begin{document}
\maketitle

\begin{abstract}
We study the detection of LLM-generated text under editing and
contamination. Modeling human and machine text as finite-order
Markov processes with Huber contamination, we characterize an exact
boundary for reliable detection under our assumptions. Detection
is impossible when contamination is sufficiently large relative
to clean-source separation. Below this boundary, a collection of
clipped likelihood-ratio tests achieves vanishing worst-case errors.
This construction motivates clipping as a simple modification of
existing statistical detectors. For a broad class of additive scores,
we identify conditions under which the clipped test is consistent
while the raw test's worst-case power tends to zero. We evaluate
seven detectors across three datasets and three generation models,
and on the RAID benchmark. Clipping improves robustness in both
studies, with gains varying across detectors and contamination
settings. For example, at a target false-positive rate of 5\%,
clipping improves the log-likelihood--log-rank ratio (LRR) detector's
true-positive rate by a median of 8.3 percentage points in the
controlled study and 2.1 and 4.3 points in rate- and attack-specific
RAID evaluations, respectively.
\end{abstract}

\section{Introduction}
\label{sec:introduction}

Large language models (LLMs) produce increasingly fluent text, making it difficult to distinguish machine-generated text from human writing. Reliable detection matters
for content provenance and academic integrity. Yet text encountered
in practice may contain both machine-generated content and human
contributions. Editing and other modifications can substantially
weaken detectors that perform well on clean machine text
\citep{wu-etal-2025-survey}. This raises two questions: when does
detection remain possible under contamination, and how can existing
detectors be made more robust?

We model human and machine text as finite-order Markov processes
\citep{shannon1948mathematical,bengio2003neural,zekri2024large}.
A class of human transition kernels captures variation across
writers, while a fixed kernel represents the target LLM.
We model modifications through Huber contamination \citep{huber}.
At each position, the token distribution mixes the target LLM
distribution with an arbitrary distribution. This allows contamination to adapt as the
text develops.

Under our assumptions, we characterize the exact boundary between
possible and impossible detection. The boundary compares the
contamination level with a measure of clean-source separation.
When contamination is sufficiently strong, the contaminated machine
can reproduce an admissible human source exactly. No detector can
then reliably distinguish the two. Otherwise, we construct a test combining lower-clipped log-likelihood ratios for representative human sources. Both its worst-case false-positive and false-negative probabilities vanish as text length grows.

This construction motivates improving existing statistical
detectors through clipping \citep{huber,huber1973minimax}.
Many detectors aggregate token-level scores into document-level.
A few modified tokens with extreme scores can offset
the evidence from the rest of the text. We apply clipping
to limit these individual contributions. For a general class of additive scores, we identify conditions in which both worst-case errors of the clipped test vanish, while the raw test’s worst-case power tends to zero whenever its worst-case false-positive probability vanishes.

We evaluate seven statistical detectors in two complementary studies.
The controlled study crosses three datasets with three generators
and replaces machine-generated passages with human text.
At 20\% replacement and a target false-positive rate of 5\%,
clipping improves mean detection rates for all seven methods under
both random and tail replacement. For example, Binoculars improves by 27 percentage points under random replacement.
The RAID benchmark \citep{dugan2024raid} evaluates a broader range
of attacks across multiple domains and generators.
We study clipping tailored to the modification rate or attack type. For example, 
at a target false-positive rate of 5\%, attack-specific clipping
increases rank's detection rate by 30 percentage points under case
changes and LRR's by 10 points under paraphrasing.
The gains vary across detectors and attacks. These experiments extend beyond the Markov and Huber contamination
models used in our theory. The observed gains provide complementary
evidence that clipping can improve robustness under more varied
text modifications.

\subsection{Related Work}
\label{subsec:related-work}

\paragraph{Detection methods and clipping.}
LLM detectors include trained classifiers
\citep{ippolito-etal-2020-automatic}, watermark-based methods
\citep{Li_2025}, and statistical methods.
The latter use token probabilities and ranks
\citep{gehrmann-etal-2019-gltr,su-etal-2023-detectllm},
probability curvature
\citep{mitchell2023detectgptzeroshotmachinegeneratedtext,bao2024fastdetectgpt},
or comparisons between models \citep{hans2024binoculars}.
Closest to our approach, \citet{wong2026knnproxy} clip token
log-likelihoods to reduce outliers caused by proxy-model mismatch.
We establish contamination-dependent power guarantees for a general
class of additive scores, including conditions under which clipping
succeeds while the raw test fails.
\paragraph{Editing and robustness.}
Prior studies evaluate editing, paraphrasing, and other attacks
\citep{wang-etal-2024-stumbling}, mixed human--machine authorship
\citep{zhang-etal-2024-llm}, and robustness across generators and
domains \citep{dugan2024raid}.
\citet{li2026robustwatermarks} model human edits through mixtures
and develop a truncated test for watermark detection.
Our analysis instead concerns contamination of the text source
without an embedded watermark. Controlled replacement and RAID
experiments assess clipping beyond the theoretical setting.

\paragraph{Detection limits and robust testing.}
Detection limits have been studied through source separation \citep{sadasivan2025aigeneratedtextreliablydetected} and text length \citep{chakraborty2024possibilities}. These studies connect detection difficulty to both distributional similarity and the amount of observed text. We characterize the detection boundary under contamination and a composite human model. Classical robust testing motivates clipping through least-favorable distributions and censored likelihood ratios \citep{huber,huber1973minimax}; recent work extends robust testing to composite hypotheses and adaptive contamination, with validity at data-dependent stopping times \citep{saha2025huberrobustlikelihoodratiotests}. Our contribution is to extend these ideas to Markov text sources and general additive scores, whose robustness is not implied by the classical optimality of clipped likelihood-ratio tests.

\section{Preliminaries}
\label{sec:preliminaries}

\subsection{Source and contamination model}
\label{subsec:markov-sources-contamination}

We model clean human and LLM-generated text as finite-order Markov processes.
Our results apply to any fixed order \(K\geq1\) and no
particular numerical scale for \(K\) is required. Finite-context Markov
sources are classical in language modeling and have also been used to analyze
autoregressive LLM generation
\citep{shannon1948mathematical,bengio2003neural,zekri2024large}.

Let \(\mathcal X\) be a finite vocabulary and let
\(\mathcal S:=\mathcal X^K\). We call
\(S_i:=(X_{i-K+1},\ldots,X_i)\in\mathcal S\) the history state at position
\(i\). Under the order-\(K\) model, this state contains all information from
the preceding text needed to determine the next-token distribution. The
prompt fixes the initial history state \(S_0\). For
\(s=(x_1,\ldots,x_K)\), write
\(T(s,x):=(x_2,\ldots,x_K,x)\), so that
\(S_i=T(S_{i-1},X_i)\). Let \(\mathcal F_i:=\sigma(S_0,X_1,\ldots,X_i)\) denote the information
available after observing \(X_i\), with
\(\mathcal F_0:=\sigma(S_0)\). We write \(\Delta(\mathcal X)\) for the
probability simplex on \(\mathcal X\).

The human hypothesis is composite. Let
$
\mathcal P_0
\subseteq
\left\{
P_0:
P_0(\cdot\mid s)\in\Delta(\mathcal X)
\text{ for every }s\in\mathcal S
\right\}
$
be a class of admissible human transition kernels. Each \(P_0\in\mathcal P_0\) specifies a next-token distribution \(P_0(\cdot\mid s)\) at every history state \(s\). Different kernels may represent different writers or writing styles.

We test human-generated text against text produced by one particular target
LLM. The target LLM has a fixed transition kernel \(P_1\),
with \(P_1(\cdot\mid s)\in\Delta(\mathcal X)\) at each history.
Both clean sources are time-homogeneous Markov processes of order \(K\).

We model modifications through history-dependent Huber contamination
\citep{huber,huber1973minimax}. For \(0\le\epsilon<1\), we test
\[
\begin{aligned}
H_0:\quad&
X_i\mid\mathcal F_{i-1}
\sim
P_0(\cdot\mid S_{i-1})
\quad
\text{for some }P_0\in\mathcal P_0,\\
H_1:\quad&
X_i\mid\mathcal F_{i-1}
\sim
(1-\epsilon)P_1(\cdot\mid S_{i-1})
+
\epsilon B_i(\cdot\mid\mathcal F_{i-1}).
\end{aligned}
\]
Here \(B_i(\cdot\mid\mathcal F_{i-1})\in\Delta(\mathcal X)\)
may depend arbitrarily on the preceding text and need not belong
to the human class.  For later use, define the statewise contamination neighborhood
\[
\mathcal Q_{1,\epsilon}(s)
:=
\left\{
(1-\epsilon)P_1(\cdot\mid s)+\epsilon B:
B\in\Delta(\mathcal X)
\right\}.
\]

This is a one-sided, history-dependent version of Huber's
contamination model
\citep{huber,huber1973minimax}, related to recent work on adaptive
robust testing \citep{saha2025huberrobustlikelihoodratiotests}.
Huber contamination have been studied in
covariance estimation, online learning, and change-point detection
\citep{chen2018robustcovariance,chen2021online,bhatt2022offline}.
We next impose two regularity assumptions. 

\begin{assumption}
\label{ass:composite-markov-model}
The human-kernel class \(\mathcal P_0\) is nonempty and compact. For every \(s\in\mathcal S\), there is a nonempty set
\(\mathcal A(s)\subseteq\mathcal X\) such that
\[
\operatorname{supp}P_0(\cdot\mid s)
=
\operatorname{supp}P_1(\cdot\mid s)
=
\mathcal A(s)
\qquad
\text{for every }P_0\in\mathcal P_0.
\]
\end{assumption}

Assumption~\ref{ass:composite-markov-model} ensures that, at any given history, every token
possible under one clean source is also possible under the other,
so a single token cannot identify the source with certainty.
\begin{assumption}
\label{ass:common-support-ergodicity}
Let
\(\mathcal G=(\mathcal S,\mathcal E)\) be the directed graph with edge set
\[
\mathcal E
:=
\left\{
\bigl(s,T(s,x)\bigr):
s\in\mathcal S,\ x\in\mathcal A(s)
\right\}
\]
We assume that \(\mathcal G\) is strongly connected and aperiodic.
\end{assumption}

Assumption~\ref{ass:common-support-ergodicity} ensure that every history has positive long-run frequency under each human kernel. Appendix~\ref{app:uniform-ergodicity} defines strong connectivity and
aperiodicity formally and proves the long-run stability needed in
Section~\ref{sec:minimax-exponent}. An intuitive picture of connectivity is that a discussion can move between topics through a shared “bridge” topic. Philosophy offers one such illustration: discussions of science, art, or everyday decisions can turn to philosophical questions and those questions can lead back to the original subjects.  Related positivity and ergodicity conditions also appear in theoretical analyses of language models and generated-text detection \citep{zekri2024large,varshney2020limits}. With these conditions being specified, we next define the criterion for successful detection.

\subsection{Testing criterion}
\label{subsec:testing-criterion}

For a human kernel \(P_0\in\mathcal P_0\), let \(P_0^{(n)}\) denote the
induced distribution of \(X^n=(X_1,\ldots,X_n)\). Let
\(\mathbf B=(B_i)_{i\geq1}\) denote a contamination strategy, where each
\(B_i(\cdot\mid\mathcal F_{i-1})\in\Delta(\mathcal X)\) may depend on the
 history, and let \(Q_{1,\mathbf B}^{(n)}\) denote the resulting distribution
of \(X^n\) under the alternative.

Consider a test that accepts the human hypothesis when \(X^n\in A_n\)
and the machine hypothesis otherwise.
Its worst-case false-positive and false-negative probabilities are
\[
\alpha_n^{\mathrm{rb}}(A_n)
:=\sup_{P_0\in\mathcal P_0}P_0^{(n)}(A_n^c),
\qquad
\beta_n^{\mathrm{rb}}(A_n)
:=\sup_{\mathbf B}Q_{1,\mathbf B}^{(n)}(A_n),
\]
where the second supremum ranges over admissible contamination strategies. We quantify detection performance using the robust minimax Stein exponent
\citep{chernoff1952measure,hoeffding1965asymptotically,han2003information}:
\[
E_{\mathrm{rb}}
:=
\sup_{\{A_n\}:\,\alpha_n^{\mathrm{rb}}(A_n)\to0}
\liminf_{n\to\infty}
\frac1n
\log\frac1{\beta_n^{\mathrm{rb}}(A_n)}.
\]
Informally, \(E_{\mathrm{rb}}\) describes how quickly longer texts help us detect
contaminated LLM output while keeping false alarms vanishingly rare. A larger \(E_{\mathrm{rb}}\) means that the detection problem is easier.

\section{Possibility and impossibility of robust detection}
\label{sec:minimax-exponent}

Contamination can make machine text indistinguishable from human text. If a contamination strategy makes the machine source follow the same distribution as one admissible human source, no test can reliably tell them apart. We formalize this intution by using a measure of separation between the clean human and machine sources. Define
\begin{equation}
D_0:=\min_{P_0\in\mathcal P_0}
\max_{\substack{s\in\mathcal S\\x\in\mathcal A(s)}}
\log\frac{p_1(x\mid s)}{p_0(x\mid s)}.
\label{eq:clean-kernel-separation}
\end{equation}
It measures the smallest worst-case relative excess of machine probabilities
over human probabilities. It is a directional separation measure and equals zero exactly when
\(P_1\in\mathcal P_0\).

Equivalently, \(D_0\) is the smallest number that there is a human kernel \(P_0^\star\in\mathcal P_0\) satisfying
$$
p_0^\star(x\mid s)\geq e^{-D_0}p_1(x\mid s)
\qquad\text{for every }s\in\mathcal S,\ x\in\mathcal X.
$$
Intuitively, \(1-e^{-D_0}\) is the smallest replacement probability
that allows some edits to make machine-generated text
indistinguishable from one human source.
A larger \(D_0\) means more editing is needed.

\begin{theorem}
\label{thm:clean-centered-two-sided-markov}
Under Assumptions~\ref{ass:composite-markov-model} and~\ref{ass:common-support-ergodicity}, fix \(0\le\epsilon<1\).
\begin{enumerate}
\item If \(D_0\le-\log(1-\epsilon)\), then \(\alpha_n^{\mathrm{rb}}(A_n)+\beta_n^{\mathrm{rb}}(A_n)\ge1\) for every test and every \(n\), so \(E_{\mathrm{rb}}=0\).
\item If \(D_0>-\log(1-\epsilon)\), there exist tests \(A_n^\star\) and \(c>0\) such that \(\alpha_n^{\mathrm{rb}}(A_n^\star)\to0\) and \(\beta_n^{\mathrm{rb}}(A_n^\star)\le e^{-cn}\) for all sufficiently large \(n\), so \(E_{\mathrm{rb}}>0\).
\end{enumerate}
\end{theorem}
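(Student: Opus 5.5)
For part 1, I will construct a contamination strategy under which the contaminated machine source has exactly the law of some human source. By compactness of \(\mathcal P_0\) (Assumption~\ref{ass:composite-markov-model}) the minimum in \eqref{eq:clean-kernel-separation} is attained at some \(P_0^\star\). Then \(p_0^\star(x\mid s)\ge e^{-D_0}p_1(x\mid s)\ge(1-\epsilon)p_1(x\mid s)\) for all \(s,x\), using the common support \(\mathcal A(s)\). If \(\epsilon>0\), set
\[
B^\star(x\mid s):=\frac{p_0^\star(x\mid s)-(1-\epsilon)p_1(x\mid s)}{\epsilon}.
\]
This is nonnegative and sums to \(1\), so it lies in \(\Delta(\mathcal X)\). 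Moreover \((1-\epsilon)P_1(\cdot\mid s)+\epsilon B^\star(\cdot\mid s)=P_0^\star(\cdot\mid s)\). If \(\epsilon=0\), then \(D_0\le0\) forces \(p_0^\star\ge p_1\) pointwise, so \(P_0^\star=P_1\). Taking \(B_i(\cdot\mid\mathcal F_{i-1})=B^\star(\cdot\mid S_{i-1})\), the conditional laws coincide at every step, so \(Q^{(n)}_{1,\mathbf B}=P_0^{\star(n)}\). For any \(A_n\) this gives \(\alpha_n^{\mathrm{rb}}+\beta_n^{\mathrm{rb}}\ge P_0^{\star(n)}(A_n^c)+P_0^{\star(n)}(A_n)=1\). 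Hence \(\beta_n^{\mathrm{rb}}\to1\) whenever \(\alpha_n^{\mathrm{rb}}\to0\), and \(E_{\mathrm{rb}}=0\).

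For part 2, write \(\tau:=-\log(1-\epsilon)<D_0\). The plan is to use clipped log-likelihood ratios. For each \(P_0\), the statewise least favorable pair suggests the score
\[
\ell_{P_0}(x\mid s):=\min\Bigl\{\log\frac{p_1(x\mid s)}{p_0(x\mid s)},\,\log\frac1{1-\epsilon}\Bigr\}\quad\text{(on }\mathcal A(s)\text{)},
\]
with score \(-\infty\) allowed, or a large negative constant, off \(\mathcal A(s)\), since that region has zero human mass. A token with \(x\notin\mathcal A(s)\) proves contamination, so I would simply reject there.

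The key statewise step is as follows. For any \(Q\in\mathcal Q_{1,\epsilon}(s)\), \(E_Q[\ell_{P_0}]\ge E_{Q^\star_s}[\ell_{P_0}]\), where the right side is Huber's least favorable element. This holds because the clip caps the score at \(\tau\), and contamination mass is best placed at the minimum-score point; one can also argue directly using \(\ell\le\tau\). I then need a uniform per-state gap
\[
\inf_{Q\in\mathcal Q_{1,\epsilon}(s)}E_Q[\ell_{P_0}]-E_{P_0(\cdot\mid s)}[\ell_{P_0}]\;\ge\;0,
\]
strictly positive at some state. The positivity comes from \(D_0>\tau\): for every \(P_0\) there is some \((s,x)\) with \(p_1/p_0>e^{\tau}\), so the cap binds. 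The contaminated law therefore cannot equal \(P_0(\cdot\mid s)\) there, and by the least favorable pair argument the gap is a strict KL-type quantity. Because the gap can vanish at some states, I need ergodicity (Assumption~\ref{ass:common-support-ergodicity} and the appendix result on uniform ergodicity) to ensure the bad state is visited with frequency bounded below uniformly over \(P_0\in\mathcal P_0\). The catch is that under \(H_1\) the path follows a contaminated kernel. I would handle this with a potential/Poisson-equation correction, or by working with \(k\)-step blocks, so that the \(H_1\) mean drift is bounded below.

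Finally, compactness lets me cover \(\mathcal P_0\) by finitely many representatives \(P_0^{(1)},\dots,P_0^{(m)}\). The scores are continuous in \(P_0\) on the common support, with \(\log p_0\) bounded because of compactness and the fixed support. The test rejects iff \(\min_j n^{-1}\sum_i\ell_{P_0^{(j)}}(X_i\mid S_{i-1})>\) a threshold placed between the uniform human mean and the uniform contaminated mean. For the errors, Azuma–Hoeffding on the bounded martingale differences bounds the miss probability by \(e^{-cn}\) uniformly over strategies \(\mathbf B\). The false-positive bound uses Markov-chain concentration uniform over \(P_0\). I expect the main obstacle to be the uniform separation of means under adaptive contamination: the contaminated chain's state occupation is adversarial, so a per-state gap does not directly give a path-level gap. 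My first attempt would be to show that the occupation of states where the gap is positive is bounded below under any \(\mathbf B\). This follows because the \((1-\epsilon)P_1\) component alone already moves along \(\mathcal G\)'s edges with positive probability, so a Doeblin minorization of order \(\mathrm{diam}(\mathcal G)\) holds uniformly.
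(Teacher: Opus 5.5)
Part~1 is correct and is essentially the paper's argument: Lemma~\ref{lem:whole-kernel-overlap}, followed by exact reproduction of \(P_0^\star\). The skeleton of Part~2 also matches the paper: finitely many representatives by compactness, accept \(H_0\) when some representative's average is low, and uniform ergodic concentration under \(H_0\).

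The genuine gap is the score. Your \(\ell_{P_0}=\min\{\log(p_1/p_0),\tau\}\) clips the wrong tail. Under \(H_1\) the adversary lowers the statistic by placing its \(\epsilon\) mass on the lowest-scoring supported token. Hence \(\inf_{Q\in\mathcal Q_{1,\epsilon}(s)}\mathbb E_Q[\ell_{P_0}]=(1-\epsilon)\mathbb E_{P_1(\cdot\mid s)}[\ell_{P_0}]+\epsilon\min_{x\in\mathcal A(s)}\log\{p_1(x\mid s)/p_0(x\mid s)\}\). This infimum is in general not attained at Huber's least favorable element, and a cap at \(\tau\) does nothing to control the minimum. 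So neither \(\mathbb E_Q[\ell]\ge\mathbb E_{Q^\star_s}[\ell]\) nor your per-state gap holds, even at the state where the cap binds.

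Concretely, take history-independent kernels on \(\mathcal X=\{a,b,c\}\); both assumptions then hold with \(\mathcal A(s)=\mathcal X\). Let \(P_1=(0.8,\,0.2-10^{-6},\,10^{-6})\), \(\mathcal P_0=\{(0.4,\,0.59,\,0.01)\}\), and \(\epsilon=1/4\). Then \(D_0=\log2>\tau=\log(4/3)\), so the theorem promises detection. Yet \(\ell\approx(0.29,\,-1.08,\,-9.21)\) has human mean \(\approx-0.62\), while the strategy placing all contaminating mass on \(c\) gives contaminated conditional mean \(\approx-2.29\). Every test rejecting for large averages of \(\ell\) with vanishing false-positive probability therefore has \(\beta_n^{\mathrm{rb}}\to1\). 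For the same reason, the off-support score must be \(+\infty\) (i.e.\ reject), as you eventually say; \(-\infty\) or a large negative constant would hand the adversary a free win.

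The missing idea is a floor rather than a cap, at a level fixed by normalization. The paper uses \(Z_{P_0}=\log(q^\star_{P_0}/p_0)=\max\{\log((1-\epsilon)p_1/p_0),\log c_{P_0}(s)\}\). Here \(q^\star_{P_0}=\max\{(1-\epsilon)p_1,c_{P_0}(s)p_0\}\) sums to one and is the KL projection of \(P_0(\cdot\mid s)\) onto \(\mathcal Q_{1,\epsilon}(s)\). The paper then proves \(\mathbb E_Q[e^{-Z_{P_0}}]\le1\) for every \(Q\in\mathcal Q_{1,\epsilon}(s)\), using \(p_0/q^\star\le1/c\) with equality wherever \(q^\star>(1-\epsilon)p_1\).

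With this score the obstacle you anticipated disappears, and your occupation argument is both unnecessary and, on its own, insufficient. Because \(\mathbb E[e^{-Z_{P_0}(X_i\mid S_{i-1})}\mid\mathcal F_{i-1}]\le1\) at every step, \(\exp\{-\sum_{i\le n}Z_{P_0}(X_i\mid S_{i-1})\}\) is a nonnegative supermartingale under every adaptive strategy. Markov's inequality then gives \(Q^{(n)}_{1,\mathbf B}\{n^{-1}\sum_{i\le n}Z_{P_0}(X_i\mid S_{i-1})\le-t\}\le e^{-nt}\), no matter where the contaminated chain spends its time. (Alternatively, Jensen gives \(\mathbb E_Q[Z_{P_0}]\ge0\) at every state.)

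All of the separation sits on the human side: \(\sum_s\pi_P(s)\mathbb E_{P(\cdot\mid s)}[Z_P]=-\sum_s\pi_P(s)D_{\mathrm{KL}}(P(\cdot\mid s)\|Q^\star_P(\cdot\mid s))<0\). This holds because \(D_0>\tau\) forces some row of \(P\) outside its neighborhood and \(\pi_P\) has full support. Continuity in the human kernel and compactness then give finitely many representatives with human means at most \(-2t\).

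By contrast, a statewise inequality \(\inf_Q\mathbb E_Q[\ell\mid s]\ge\mathbb E_{P_0}[\ell\mid s]\), plus a Doeblin lower bound on the \(H_1\) occupation of the states with positive gap, would not close the argument even for a sound score. The machine's average is weighted by its own, adversarially steerable, state occupation, while the human's is weighted by \(\pi_{P_0}\), and a statewise comparison does not relate the two. What is needed is a lower bound on the machine's conditional mean that is uniform over states and exceeds the human stationary mean; \(Z_{P_0}\) supplies this with the constant \(0\).
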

Figure~\ref{fig:detection-boundary} illustrates the detection boundary.
When contamination is sufficiently large, the contaminated machine source can reproduce one admissible
human source exactly. Otherwise, the two sources remain distinguishable. In the proof, we
construct a test that combines a collection of clipped likelihood-ratio
tests to distinguish between them.

\begin{figure}[htbp]
\centering
\includegraphics[width=0.5\linewidth]{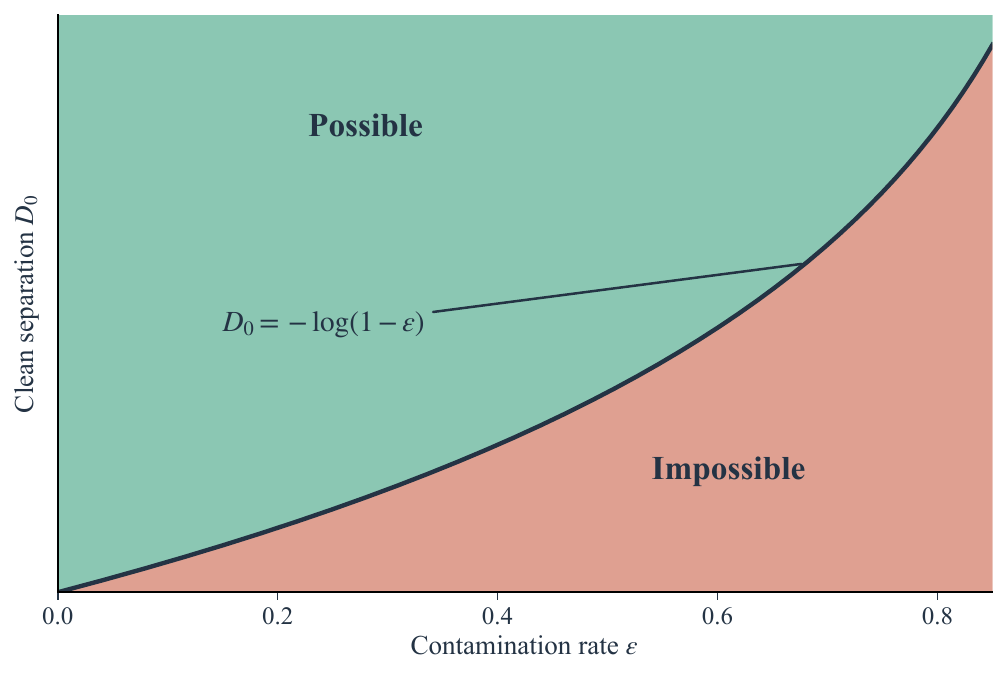}
\caption{Possibility and impossibility of robust detection.}
\label{fig:detection-boundary}
\end{figure}

For each candidate human kernel \(P_0\), consider the token score
\begin{equation}
Z_{P_0}(x\mid s)
=
\max\left\{
\log\frac{(1-\epsilon)p_1(x\mid s)}{p_0(x\mid s)},
\log c_{P_0}(s)
\right\},
\qquad x\in\mathcal A(s),
\label{eq:clipped-likelihood-score}
\end{equation}
where \(c_{P_0}(s)>0\) normalizes the corresponding probability
distribution. The test accepts the human hypothesis if the average
score is sufficiently negative for at least one representative human
kernel. Appendix~\ref{app:proof-theorem-3-1} gives the details and proofs.

Implementing this test, however, requires knowledge of the human-kernel class
and the contamination level, which are generally unavailable in
practice. Nevertheless, its structure suggests a practical strategy:
apply clipping to existing token-level detection scores. We study
this approach next.

\section{Robustifying existing detectors by clipping}
\label{sec:general-clipped-tests}

Many statistical LLM detectors combine token-level evidence into a score for
the entire text. Examples include likelihood- and rank-based methods
\citep{gehrmann-etal-2019-gltr}, Binoculars \citep{hans2024binoculars}, and
LRR \citep{su-etal-2023-detectllm}. We focus on decision rules that, at a
fixed threshold, can be expressed through an average of token scores. After
fixing the sign so that larger values favor \(H_1\), write
\[
T_n(\phi):=\frac1n\sum_{i=1}^n\phi(X_i\mid S_{i-1}).
\]

Clipping is a classical approach to robust hypothesis testing under
Huber's contamination model \citep{huber,huber1973minimax}.
The construction in Section~\ref{sec:minimax-exponent} shows that a
collection of lower-clipped likelihood-ratio tests achieves the
detection boundary under our assumptions. We now study whether
clipping can also improve existing detection scores.

When larger scores favor \(H_1\), unusually negative token scores
can outweigh the evidence from the rest of the text. Such scores
may reflect contamination rather than human authorship.
To limit their influence, define
$
\phi^{(a)}(x\mid s)
:=
\max\{\phi(x\mid s),a\},
x\in\mathcal A(s).
$
The clipped detector replaces \(\phi\) by \(\phi^{(a)}\) in the
token-score average. 
If any observed token lies outside the common clean support, the test rejects the human hypothesis.
Figure~\ref{fig:clipping-illustration} illustrates how clipping
raises unusually low token scores while leaving the others unchanged.
Although clipping limits extreme evidence against machine generation,
it can also remove genuine evidence of human authorship.
We identify conditions under which this tradeoff gives the
clipped test an asymptotic power advantage over the raw test.

\begin{figure}[t]
\centering
\includegraphics[width=0.8\linewidth]{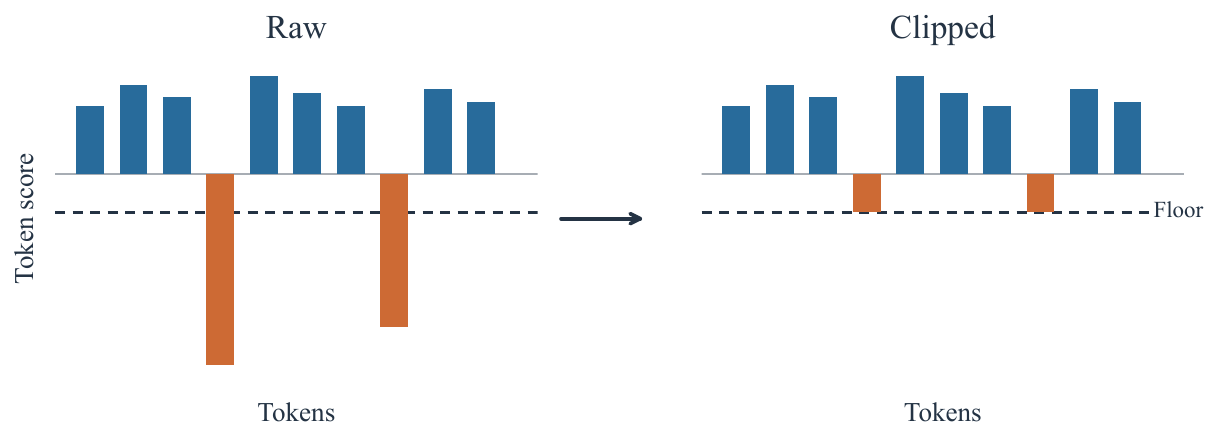}
\caption{Lower clipping of token scores. Scores below the floor
are raised to it; all other scores remain unchanged.
Larger scores favor machine-generated text.}
\label{fig:clipping-illustration}
\end{figure}

Let \(\phi\) be real-valued on the common clean support, with
\(\underline\phi:=\min_{s,\,x\in\mathcal A(s)}\phi(x\mid s)\) and
\(\overline\phi:=\max_{s,\,x\in\mathcal A(s)}\phi(x\mid s)\).
Write
\(\mu_0:=\sup_{P_0\in\mathcal P_0}
\sum_s\pi_{P_0}(s)\mathbb E_{P_0(\cdot\mid s)}[\phi(X\mid s)]\)
for the largest stationary human mean, where \(\pi_{P_0}\)
is the stationary history distribution.
We next state assumptions on machine-score variation,
human--machine separation, and the effect of clipping on human scores.

\begin{assumption}
\label{ass:score-variation-tail}
There exist \(\mu_1\in\mathbb R\) and \(\delta,\eta\ge0\)
such that, for every \(s\in\mathcal S\),
\[
\left|\mathbb E_{P_1(\cdot\mid s)}[\phi(X\mid s)]-\mu_1\right|
\le\delta,
\qquad
\min_{x\in\mathcal A(s)}\phi(x\mid s)\le\underline\phi+\eta.
\]
\end{assumption}

\begin{assumption}
\label{ass:human-score-separation}
The largest stationary human mean satisfies
$
\underline\phi+\eta<\mu_0<\mu_1-\delta.
$
\end{assumption}

Assumption~\ref{ass:score-variation-tail} says that clean machine scores are stable on average, while low-scoring tokens remain possible. Assumption~\ref{ass:human-score-separation} ensures that machine text scores higher on average than human text. Appendix~\ref{app:simulation} illustrates these conditions through simulations with several detectors.

To quantify the cost of clipping, choose a width
$
0<r<\min\{\overline\phi-\underline\phi,
\mu_1-\delta-\underline\phi\},
$
and define
\begin{equation}
\rho_0:=\sup_{P_0\in\mathcal P_0}\sum_s\pi_{P_0}(s)
P_0\{\phi(X\mid s)<\underline\phi+r\mid s\}\in[0,1).
\label{eq:human-clipping-mass}
\end{equation}
At \(a=\underline\phi+r\), \(\rho_0\) is the largest long-run
fraction of human tokens affected by clipping. 
Define
\[
\epsilon_-:=
\frac{\mu_1+\delta-\mu_0}
{\mu_1+\delta-\underline\phi-\eta},
\qquad
\epsilon_+:=\min\left\{1,
\frac{\mu_1-\delta-\mu_0-\rho_0r}
{\mu_1-\delta-\underline\phi-r}\right\}.
\]

The thresholds describe two detection regimes: below \(\epsilon_+\), clipping preserves consistent detection; above \(\epsilon_-\), the raw test loses its worst-case power. When these regimes overlap, clipping gives a strict asymptotic power advantage.

\begin{theorem}
\label{thm:one-sided-clipping-improvement}
Under Assumptions~\ref{ass:composite-markov-model},
\ref{ass:common-support-ergodicity},
\ref{ass:score-variation-tail}, and~\ref{ass:human-score-separation},
fix \(0\le\epsilon<1\).
\begin{enumerate}
\item If \(\epsilon<\epsilon_+\), set \(a=\underline\phi+r\).
There exists a fixed threshold \(\theta_a\) such that
\(A_n^{(a)}:=\{T_n(\phi^{(a)})\le\theta_a\}\) satisfies
\(\alpha_n^{\mathrm{rb}}(A_n^{(a)})\to0\) and
\(\beta_n^{\mathrm{rb}}(A_n^{(a)})\to0\).
\item If \(\epsilon>\epsilon_-\), every raw test
\(A_n^{\mathrm{raw}}:=\{T_n(\phi)\le\theta_n\}\) satisfying
\(\alpha_n^{\mathrm{rb}}(A_n^{\mathrm{raw}})\to0\) also satisfies
\(\beta_n^{\mathrm{rb}}(A_n^{\mathrm{raw}})\to1\).
\end{enumerate}
\end{theorem}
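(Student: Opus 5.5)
We prove the two parts separately. Both rely on one tool: a uniform law of large numbers for additive functionals of the Markov chain. Under Assumptions~\ref{ass:composite-markov-model} and~\ref{ass:common-support-ergodicity}, the uniform ergodicity result of Appendix~\ref{app:uniform-ergodicity} applies. For any bounded $f$ on $\{(s,x):x\in\mathcal A(s)\}$,
\[
\sup_{P_0\in\mathcal P_0}P_0^{(n)}\Bigl(\tfrac1n\textstyle\sum_i f(X_i\mid S_{i-1})>\sum_s\pi_{P_0}(s)\mathbb E_{P_0}[f\mid s]+\gamma\Bigr)\to0
\]
for every $\gamma>0$. Compactness of $\mathcal P_0$ together with a uniform spectral gap gives this uniformity. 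Under $H_1$ there is no stationarity, so we use a martingale argument instead. Write $\frac1n\sum_i\phi(X_i\mid S_{i-1})=\frac1n\sum_i \mathbb E[\phi\mid\mathcal F_{i-1}]+\frac1n M_n$. Since the increments are bounded, Azuma--Hoeffding gives $M_n/n\to0$ uniformly over contamination strategies $\mathbf B$. Tokens outside $\mathcal A(s)$ lead to rejection, which only helps the power in part 1. In part 2 an optimal adversary never produces such tokens, because it will always pick tokens inside the support.

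\textbf{Part 1.} Set $a=\underline\phi+r$. Under $H_0$, note that $\phi^{(a)}\le\phi+r\,\mathbf 1\{\phi<a\}$, since $\phi^{(a)}-\phi=(a-\phi)_+\le r$ whenever $\phi\ge\underline\phi$. The stationary human mean of $\phi^{(a)}$ is therefore at most $\mu_0+\rho_0 r$. Under $H_1$, the conditional mean at step $i$ is
\[
(1-\epsilon)\mathbb E_{P_1}[\phi^{(a)}\mid S_{i-1}]+\epsilon\,\mathbb E_{B_i}[\phi^{(a)}].
\]
This is at least $(1-\epsilon)(\mu_1-\delta)+\epsilon(\underline\phi+r)$, using $\phi^{(a)}\ge\phi$ and $\phi^{(a)}\ge a$. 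A short rearrangement shows that $\epsilon<\epsilon_+$ is equivalent to
\[
(1-\epsilon)(\mu_1-\delta)+\epsilon(\underline\phi+r)>\mu_0+\rho_0r.
\]
The condition $r<\mu_1-\delta-\underline\phi$ ensures that the denominator in $\epsilon_+$ is positive, so the direction of the inequality is preserved. We take $\theta_a$ to be the midpoint of the two sides. Both error probabilities then vanish, by the uniform LLN under $H_0$ and the uniform Azuma bound under $H_1$.

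\textbf{Part 2.} We construct a specific adversary. At each history $s$, let $B_i=\delta_{x^\ast(s)}$, where $x^\ast(s)\in\arg\min_{x\in\mathcal A(s)}\phi(x\mid s)$. By Assumption~\ref{ass:score-variation-tail}, the conditional mean is then at most $(1-\epsilon)(\mu_1+\delta)+\epsilon(\underline\phi+\eta)$. The condition $\epsilon>\epsilon_-$ is equivalent to this bound lying strictly below $\mu_0$. The denominator in $\epsilon_-$ is positive because Assumption~\ref{ass:human-score-separation} gives $\underline\phi+\eta<\mu_1-\delta\le\mu_1+\delta$. Azuma then gives $T_n(\phi)\le\mu_0-\gamma$ with probability tending to $1$ under this adversary, for some $\gamma>0$.

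Next we show that any raw test with vanishing worst-case false-positive rate must eventually have $\theta_n\ge\mu_0-\gamma/2$. Take a human kernel $P_0^\dagger$ whose stationary mean exceeds $\mu_0-\gamma/4$; such a kernel exists by the definition of the supremum, and compactness even lets us attain it. By the ergodic theorem, $T_n(\phi)\to$ its stationary mean in $P_0^{\dagger}$-probability. If $\theta_n<\mu_0-\gamma/2$ held along a subsequence, the false-positive probability would tend to $1$ along that subsequence, a contradiction. Hence, under the adversary, $Q^{(n)}(T_n\le\theta_n)\to1$, so $\beta_n^{\mathrm{rb}}\to1$.

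The main obstacle is the uniform ergodic LLN over the compact class $\mathcal P_0$ used in part 1. Pointwise convergence for each kernel is not enough to control the supremum defining $\alpha_n^{\mathrm{rb}}$. To handle this, we would invoke the appendix result on uniform geometric ergodicity: there is a common minorization after $m$ steps, which holds because the support graph is fixed, strongly connected and aperiodic, and compactness bounds the transition probabilities on $\mathcal A(s)$ away from zero. A Chebyshev bound on additive functionals, with variance $O(n)$ uniformly over the class, then completes the argument.
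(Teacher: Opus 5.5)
Your proposal is correct and follows essentially the same route as the paper. In part~1 you bound the clipped stationary human mean by $\mu_0+\rho_0 r$ and the contaminated conditional mean by $(1-\epsilon)(\mu_1-\delta)+\epsilon(\underline\phi+r)$, take the midpoint threshold, and apply a uniform ergodic LLN under $H_0$ and Azuma under $H_1$; in part~2 you use the same argmin contamination strategy and the same near-supremum human kernel to force $\theta_n$ eventually above the contaminated mean. The differences are minor: the paper handles off-support tokens through a finite surrogate score equal to $\overline\phi$ where you argue informally, and it invokes its uniform LLN lemma in part~2 where your single-kernel ergodic theorem already suffices.
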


The clipping bound \(a=\underline\phi+r\) is one convenient choice. Other clipping levels also ensure consistency. Appendix~\ref{app:conditions} describes these alternatives.

Consequently, when \(\epsilon_-<\epsilon_+\), clipping restores worst-case power tending to one throughout \((\epsilon_-,\epsilon_+)\), where the raw test’s worst-case power tends to zero provided its worst-case false-positive probability vanishes. The proof of Theorem~\ref{thm:one-sided-clipping-improvement} appears in Appendix~\ref{app:proof-clipping}. Appendix~\ref{app:conditions} discusses when the score conditions yield a nonempty contamination interval, and Appendix~\ref{app:simulation} illustrates their feasibility for several detectors in finite-state simulations.

The theorem identifies a balance between protection against contamination and loss of human evidence. Setting the clipping floor to \(\underline\phi+r\) increases the guaranteed contribution from contamination by \(\epsilon r\), while increasing the largest stationary human mean by at most \(\rho_0r\). Clipping therefore improves the guaranteed mean separation when \(\epsilon>\rho_0\). These guarantees concern additive scores satisfying the stated assumptions. Section~\ref{sec:experiments}  evaluates clipping under practical text modifications, including detectors and clipping procedures as empirical extensions beyond this theorem.

\section{Experiments}
\label{sec:experiments}

We conduct two experiments to evaluate the clipping method motivated by our theory.
The controlled study in Section~\ref{subsec:controlled_contamination} spans
three datasets and three language models, replacing passages of generated text
with human text at specified locations and rates. The RAID evaluation in
Section~\ref{subsec:raid} uses a benchmark of human, machine-generated, and
attacked texts across diverse domains, generators, and attack mechanisms
\citep{dugan2024raid,kandula2025alert,edikala2025leidos,zhu2025reliably}.

Our theory provides guarantees uniformly over all history-dependent contamination strategies within the conditional Huber model, without specifying a particular attack mechanism. The experiments assess whether the resulting clipping principle remains effective under practical text modifications beyond this model. In both studies, human transition distributions are unknown and edits to completed text need not follow the conditional contamination mechanism; RAID additionally includes generators that differ from the scoring model. We observe robustness gains in both studies, with benefits varying across detectors and modification types, supporting clipping as a theoretically motivated and empirically useful approach.

\subsection{Detectors and evaluations}
\label{subsec:detector_and_evaluation}
\paragraph{Detectors.}
We evaluate seven statistical detectors: log likelihood, rank, log rank, entropy, LRR \citep{su-etal-2023-detectllm}, entropy gap based on \citet{radvand2025trainingfree}, and Binoculars \citep{hans2024binoculars}. Table~\ref{tab:primary-methods} summarizes their scores. Here, \(p_i=p_i(x_i)\) is the probability assigned to the observed token, \(r_i=1+\#\{v:p_i(v)>p_i(x_i)\}\) is its rank, and \(h_i=-\sum_v p_i(v)\log p_i(v)\) is predictive entropy. An overbar denotes the average over scored positions.

LRR combines token surprise with log rank, while entropy gap compares observed surprise with predictive entropy. Binoculars compares observed surprise under a performer model with the cross-entropy \(h_i^\times=-\sum_v p_i^{\mathrm O}(v)\log p_i^{\mathrm P}(v)\) between an observer and the performer. We use the original Falcon-7B observer and Falcon-7B-Instruct performer.

\paragraph{Clipping.}
We orient all scores so that larger values favor machine-generated text. LRR and Binoculars use their original directions; the remaining directions are learned from clean tuning data. Table~\ref{tab:primary-methods} summarizes the directions and clipping operations.

For additive detectors, we clip the oriented token scores from below before averaging. For LRR, we cap the token-level contributions to both the numerator and denominator; for Binoculars, we cap only the numerator contributions. We then average the contributions, form the ratios, and apply their respective directions. Appendix~\ref{app:protocol} provides implementation details.

\begin{table}[!htb]
\centering\small
\caption{Detector statistics and clipping operations. All bounds are
applied to token-level quantities before averaging.}
\label{tab:primary-methods}
\renewcommand{\arraystretch}{1.2}
\setlength{\tabcolsep}{6pt}
\begin{tabularx}{\linewidth}{@{}l c X@{}}
\toprule
Detector & Raw statistic & Orientation and clipping \\
\midrule
Log likelihood & \(\overline{\log p_i}\)
    & Keep sign; clip token scores from below \\
Rank & \(\overline{r_i}\)
    & Reverse sign; clip oriented scores from below \\
Log rank & \(\overline{\log r_i}\)
    & Reverse sign; clip oriented scores from below \\
LRR & \(\overline{-\log p_i}/\overline{\log r_i}\)
    & Keep ratio's sign; clip both contributions from above \\
Entropy & \(\overline{h_i}\)
    & Orient using tuning data; clip oriented scores from below \\
Entropy gap & \(\overline{-\log p_i-h_i}\)
    & Reverse sign; clip oriented scores from below \\
Binoculars & \(\overline{-\log p_i}/\overline{h_i^\times}\)
    & Reverse numerator's sign; only clip numerator from below \\
\bottomrule
\end{tabularx}
\end{table}

\paragraph{Evaluation.}
Both studies use separate tuning, calibration, and test sets, keeping
each human text and all corresponding machine-generated versions together.
Tuning selects score directions and clipping bounds: the controlled study
uses weighted AUROC, which summarizes discrimination across thresholds;
RAID uses weighted TPR, the fraction of machine-origin texts detected,
with cross-fitted thresholds.

Independent human calibration data set separate raw and clipped thresholds
at the target FPR, the fraction of human texts mislabeled as machine-generated.
Both rules use the same test texts. Pointwise 95\% confidence intervals use
2,000 paired source-cluster bootstrap resamples with fitted parameters fixed.
Appendix~\ref{app:protocol} provides details.

\subsection{Controlled contamination study}
\label{subsec:controlled_contamination}
\paragraph{Setup}
We pair three datasets---SQuAD contexts
\citep{rajpurkar2016squad}, XSum news articles
\citep{narayan2018xsum}, and WritingPrompts stories
\citep{fan2018writingprompts}---with Granite~3.3~8B Base,
Mistral Small~24B Base, and Qwen~2.5~32B, yielding nine
dataset--generator combinations. Single-model detectors use the
corresponding generator as scorer; Binoculars retains the original
Falcon-7B observer and Falcon-7B-Instruct performer.

Each combination contains 3,000 source groups: 500 for tuning,
500 for calibration, and 2,000 for testing. An approximately
30-token human prefix prompts a machine continuation targeting
220 tokens. At each positive replacement rate
(5, 10, 20, 30, 40, and 50\%), we create three random variants
by replacing randomly located portions with human passages,
and one tail variant by replacing the ending with human passages
selected for high average surprise under the generator.
Clean texts provide the 0\% baseline.  For each detector and dataset--generator combination, we tune one
clipping specification using a weighted AUROC objective and apply
it across all contamination rates.
Appendices~\ref{app:data} and~\ref{app:protocol} provide
construction and fitting details.

\paragraph{Results}
Table~\ref{tab:primary-summary} shows the trade-off between clean
detection and robustness. Clipping decreases mean clean TPR for
five of seven detectors, but improves mean TPR for every detector
at 20\% replacement under both constructions. Under random
replacement, Binoculars increases from 58.8\% to 85.7\%
(26.9 percentage points), followed by LRR with a 26.2-point gain.
Under tail replacement, rank gains the most, rising from 0.3\%
to 34.3\%. Entropy benefits least, improving by 1.4 and 2.0 points,
respectively.

\begin{table}[!htb]
\centering
\caption{Controlled-study TPR at target 5\% FPR, averaged equally
across nine settings. Raw: TPR (\%); gain: clipped minus raw
(percentage points). Replacement rates are requested.}
\label{tab:primary-summary}
\begin{tabular}{lrrrrrr}
\toprule
& \multicolumn{2}{c}{0\% replacement}
& \multicolumn{4}{c}{20\% replacement} \\
\cmidrule(lr){2-3}\cmidrule(lr){4-7}
& \multicolumn{2}{c}{Clean}
& \multicolumn{2}{c}{Random}
& \multicolumn{2}{c}{Tail} \\
\cmidrule(lr){2-3}\cmidrule(lr){4-5}\cmidrule(lr){6-7}
Detector & Raw & Gain & Raw & Gain & Raw & Gain \\
\midrule
Log likelihood & 62.6 & -2.9 & 9.2  & +7.6  & 21.0 & +7.7 \\
Rank           & 70.7 & -5.3 & 0.1  & +21.4 & 0.3  & +34.0 \\
Log rank       & 67.0 & -3.1 & 12.1 & +9.5  & 24.8 & +9.1 \\
LRR            & 71.2 & -3.5 & 22.3 & +26.2 & 30.0 & +21.4 \\
Entropy        & 23.4 & +0.9 & 21.3 & +1.4  & 18.4 & +2.0 \\
Entropy gap    & 91.6 & -4.1 & 61.5 & +23.1 & 71.3 & +11.5 \\
Binoculars     & 92.0 & +0.5 & 58.8 & +26.9 & 65.3 & +15.7 \\
\bottomrule
\end{tabular}
\end{table}

Figure~\ref{fig:primary-binoculars} examines Binoculars across
all nine settings. Although performance varies across datasets
and generators, clipping increases the area under the
TPR--contamination curve under both constructions in every setting.
Mean test FPR remains below 5\% for raw (4.48\%) and clipped
(4.82\%) scores. Appendix~\ref{app:primary} reports complete
results for all detectors.

\begin{figure}
\centering
\includegraphics[width=0.7\linewidth]{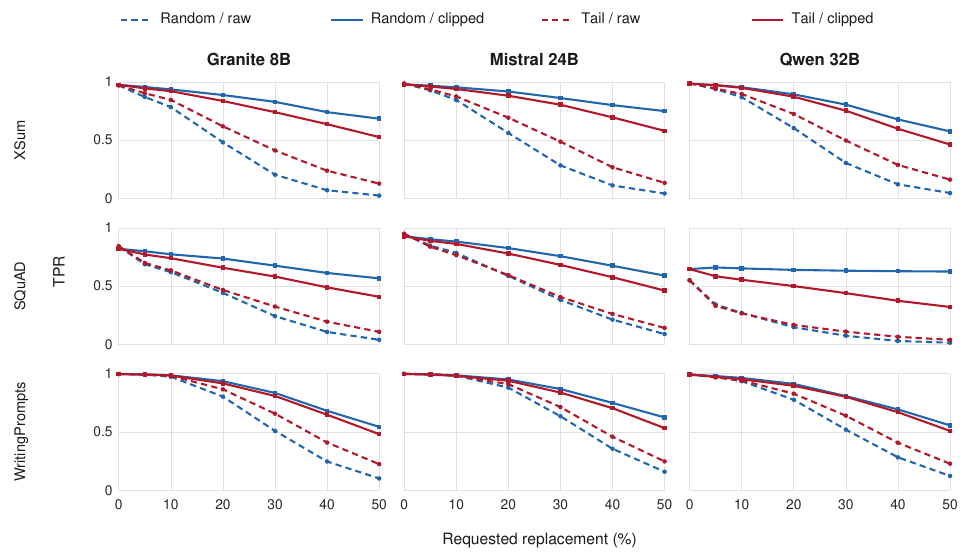}
\caption{Binoculars TPR across nine settings versus requested
replacement rate. Raw and clipped thresholds are separately
calibrated at target 5\% FPR.}
\label{fig:primary-binoculars}
\end{figure}

\subsection{RAID benchmark}
\label{subsec:raid}
\paragraph{Setup}
We use RAID \citep{dugan2024raid}, a benchmark for evaluating
machine-generated text detection across diverse domains, generators,
and attacks. It contains human texts, machine-generated counterparts,
and attacked versions spanning eight domains and eleven attack
mechanisms, including paraphrasing, word substitution, spelling
changes, and so on.

For each human source, we randomly select one machine counterpart
with all eleven attacked versions. After excluding 500 pilot sources,
the sample contains 12,871 source groups and 167,323 documents:
5,148 groups for tuning, 2,574 for calibration, and 5,149 for testing.
We use Falcon-7B for all the six single-model scores, while Binoculars uses
the original Falcon-7B observer and Falcon-7B-Instruct performer.
We score at most 512 output tokens.

We measure modification using the normalized token edit rate
\(\rho\): the minimum number of insertions, deletions, and substitutions
between clean and attacked texts, divided by the longer sequence length.
Unlike the controlled experiment study, we train a different clipping bound for different contamination rates and attack types.
All results target 5\% FPR. Appendices~\ref{app:data}
and~\ref{app:stat-clipping} provide further details.

\paragraph{Results}
Table~\ref{tab:raid-rate-specific} compares seven detectors across
three rate intervals. Rank gains the most, with an average increase of around 20 percentage points. LRR, log rank,
entropy gap, and Binoculars improve in all three intervals,
whereas log likelihood improves only in the lowest interval
and entropy changes little.

\begin{table}[!htb]
\centering
\caption{Rate-specific RAID TPR at target 5\% FPR.
Raw: TPR (\%); gain: clipped minus raw (percentage points).
Test counts: 19,894 / 4,682 / 7,137.}
\label{tab:raid-rate-specific}
\setlength{\tabcolsep}{6pt}
\begin{tabular}{@{}lrrrrrr@{}}
\toprule
& \multicolumn{2}{c}{\(0<\rho\leq0.05\)}
& \multicolumn{2}{c}{\(0.05<\rho\leq0.1\)}
& \multicolumn{2}{c}{\(0.1<\rho\leq0.2\)} \\
\cmidrule(lr){2-3}\cmidrule(lr){4-5}\cmidrule(lr){6-7}
Detector & Raw & Gain & Raw & Gain & Raw & Gain \\
\midrule
Log likelihood & 49.17 & +0.32  & 55.62 & +0.00  & 30.47 & +0.00 \\
Rank           & 29.48 & +25.23 & 35.05 & +26.40 & 24.87 & +15.96 \\
Log rank       & 53.24 & +0.74  & 59.38 & +0.68  & 35.79 & +0.28 \\
LRR            & 59.51 & +2.28  & 64.48 & +4.21  & 47.60 & +0.73 \\
Entropy        & 26.11 & +0.09  & 35.07 & +0.43  & 18.50 & +0.01 \\
Entropy gap    & 66.55 & +1.60  & 71.64 & +1.13  & 46.77 & +1.02 \\
Binoculars     & 74.27 & +0.22  & 82.79 & +0.28  & 60.88 & +0.87 \\
\bottomrule
\end{tabular}
\end{table}

Table~\ref{tab:raid-attack-specific} examines three representative
attacks: case changes alter capitalization, paraphrasing rewrites
the text, and alternative spelling substitutes spelling variants.
All seven detectors improve under case changes, including gains
of 3.42 points for entropy gap and 2.21 points for log rank.
Under paraphrasing, gains are concentrated in LRR and rank,
while entropy gap and Binoculars remain unchanged.
Alternative spelling benefits six detectors, including entropy,
entropy gap, and Binoculars, while log likelihood remains unchanged.
These results show that the benefit of clipping depends on both
the detector and the attack.

\begin{table}[!htb]
\centering
\caption{Attack-specific RAID TPR at target 5\% FPR,
with 5,149 texts per attack. Raw: TPR (\%); gain:
clipped minus raw (percentage points).}
\label{tab:raid-attack-specific}
\setlength{\tabcolsep}{6pt}
\begin{tabular}{@{}lrrrrrr@{}}
\toprule
& \multicolumn{2}{c}{Case change}
& \multicolumn{2}{c}{Paraphrase}
& \multicolumn{2}{c}{Alternative spelling} \\
\cmidrule(lr){2-3}\cmidrule(lr){4-5}\cmidrule(lr){6-7}
Detector & Raw & Gain & Raw & Gain & Raw & Gain \\
\midrule
Log likelihood & 44.96 & +1.15  & 27.91 & +0.00 & 59.22 & +0.00 \\
Rank           & 22.02 & +31.37 & 27.89 & +7.79 & 39.50 & +23.83 \\
Log rank       & 50.15 & +2.21  & 31.87 & -0.10 & 62.11 & +0.19 \\
LRR            & 59.16 & +3.52  & 38.80 & +9.30 & 65.04 & +4.87 \\
Entropy        & 24.20 & +0.43  & 11.87 & +0.10 & 33.52 & +0.54 \\
Entropy gap    & 65.43 & +3.42  & 75.78 & +0.00 & 72.54 & +0.85 \\
Binoculars     & 74.07 & +0.43  & 80.83 & +0.00 & 78.71 & +0.31 \\
\bottomrule
\end{tabular}
\end{table}

\section{Conclusion}

We studied when LLM-generated text remains detectable after editing
and contamination. Under a finite-order Markov model with
history-dependent Huber contamination, we characterized an exact
boundary determined by contamination and clean-source separation.
Above or at this boundary, contamination can reproduce an admissible
human source, making reliable detection impossible. Below it,
a collection of lower-clipped likelihood-ratio tests achieves
vanishing worst-case errors. This construction connects the
possibility of robust detection to clipping individual token scores.

Motivated by this result, we studied clipping for general additive
detection scores. Under explicit conditions, clipping preserves
consistent detection over a contamination interval in which the raw
test's worst-case power tends to zero. Experiments with seven detectors
across three datasets and three generators, together with evaluation
on RAID, show that clipping can also improve robustness beyond the
theoretical contamination model. We also find the gains depend on the detector
and modification mechanism, and may come at the cost of reduced
clean-text detection.

Several limitations remain. The theoretical guarantees are asymptotic and rely on finite-state assumptions, leaving the text lengths needed for reliable detection unresolved. The boundary-achieving test also requires knowledge of the human class and contamination level. Future work could develop finite-sample guarantees and study how these bounds transfer to unseen modification mechanisms. Our results provide a theoretical foundation and empirical support for clipping as a simple modification that can strengthen existing detectors.

\bibliographystyle{plainnat}
\bibliography{references}

%\clearpage
\appendix
\section*{Appendices}

\section{Uniform ergodicity of the human-kernel family}
\label{app:uniform-ergodicity}
We collect the finite-state Markov-chain definitions needed below; see, for
example, \citet{levin2017markov}.  Recall the common directed graph
\(\mathcal G=(\mathcal S,\mathcal E)\) from
Assumptions~\ref{ass:composite-markov-model} and~\ref{ass:common-support-ergodicity}.

A human token kernel \(P_0\in\mathcal P_0\) induces a transition matrix on
\(\mathcal S\), denoted by \(M_{P_0}\), with entries
\[
M_{P_0}(s,t)
:=
\sum_{\substack{x\in\mathcal A(s):\\T(s,x)=t}}
p_0(x\mid s).
\]
Thus \(M_{P_0}(s,t)\) is the probability that the next state is \(t\)
when the current state is \(s\).  Under a fixed human kernel \(P_0\),
\((S_i)_{i\geq0}\) is a time-homogeneous Markov chain satisfying
\[
S_i\mid\mathcal F_{i-1}
\sim M_{P_0}(S_{i-1},\cdot).
\]
For this transition matrix, the entry \(M_{P_0}^n(s,t)\) of the \(n\)-th matrix power is the
\(n\)-step transition probability from \(s\) to \(t\).  The common-support
condition gives
\[
M_{P_0}(s,t)>0
\quad\Longleftrightarrow\quad
(s,t)\in\mathcal E.
\]
A directed walk of length \(n\geq0\) from \(s\) to \(t\) is a
sequence \(s=s_0,s_1,\ldots,s_n=t\) with
\((s_{j-1},s_j)\in\mathcal E\) for \(j=1,\ldots,n\). States and edges
may repeat; a walk is \emph{closed} if it starts and ends at the same state.
Thus \(M_{P_0}^n(s,t)>0\) exactly when the graph contains a directed walk
of length \(n\) from \(s\) to \(t\).

\begin{definition}[Strong connectivity and irreducibility]
\label{def:strong-connectivity-irreducibility}
A directed graph is \emph{strongly connected} if every state can be reached
from every other by a directed walk. A finite-state Markov chain is
\emph{irreducible} if, for every pair \(s,t\), there exists \(n\geq1\)
such that \(M_{P_0}^n(s,t)>0\).
\end{definition}

Strong connectivity is a property of the graph, whereas irreducibility is
a property of the chain. They are equivalent for a chain and its support
graph: Let \(\mathcal G=(\mathcal S,\mathcal E)\), where \(\mathcal E=\{(s,t)\in\mathcal S^2:M_{P_0}(s,t)>0\}\). Then \(\mathcal G\) is strongly connected if and only if \(M_{P_0}\) is irreducible for every \(P_0\in\mathcal P_0\).

\begin{definition}[Period and aperiodicity]
\label{def:period-aperiodicity}
For an irreducible chain, the period of a state \(s\) is
\[
\gcd\left\{
n\geq1:M_{P_0}^n(s,s)>0
\right\}.
\]
All states in an irreducible chain have the same period
\citep[Lemma~1.6]{levin2017markov}.
The chain and its support graph are \emph{aperiodic} if this common period
equals one.
\end{definition}

Strong connectivity concerns where the process can go; aperiodicity concerns
when it can return. Since every \(M_{P_0}\) has support graph \(\mathcal G\),
the graph conditions apply to every admissible human kernel.

\begin{definition}[Stationarity and uniform geometric ergodicity]
\label{def:stationarity-uniform-geometric-ergodicity}
A probability distribution \(\pi_{P_0}\) on \(\mathcal S\) is \emph{stationary} if
\[
\pi_{P_0}M_{P_0}=\pi_{P_0}.
\]
 The family
\(\{M_{P_0}:P_0\in\mathcal P_0\}\) is \emph{uniformly geometrically
ergodic} if each member has a unique stationary distribution and there
exist constants
\(0<C<\infty\) and \(\rho\in(0,1)\) such that
\begin{equation}
\label{eq:uniform-geometric-ergodicity}
\sup_{P_0\in\mathcal P_0}
\sup_{s\in\mathcal S}
\left\|
M_{P_0}^n(s,\cdot)-\pi_{P_0}
\right\|_{\mathrm{TV}}
\leq
C\rho^n
\qquad
\text{for every }n\geq0.
\end{equation}
\end{definition}

This bound gives convergence to stationarity at a geometric rate, uniformly
over the human kernels and initial states.

The following lemma establishes convergence uniformly over human kernels and initial history states. We assume\(P_0^{(n)}\) is evaluated with the corresponding initial state \(S_0\).

\begin{lemma}[Uniform ergodicity and uniform empirical averages]
\label{lem:uniform-ergodicity}
Under Assumptions~\ref{ass:composite-markov-model} and~\ref{ass:common-support-ergodicity}, every
\(P_0\in\mathcal P_0\) induces
an irreducible and aperiodic state chain with a unique full-support stationary
distribution \(\pi_{P_0}\).  The family is uniformly geometrically ergodic.
Moreover, for every bounded \(g:\mathcal S\to\mathbb R\) and every
\(\eta>0\),
\begin{equation}
\label{eq:uniform-empirical-state-average}
\sup_{S_0\in\mathcal S}
\sup_{P_0\in\mathcal P_0}
P_0^{(n)}
\left(
\left|
\frac1n\sum_{i=1}^n g(S_{i-1})
-
\sum_{s\in\mathcal S}\pi_{P_0}(s)g(s)
\right|
>\eta
\right)
\longrightarrow0.
\end{equation}
\end{lemma}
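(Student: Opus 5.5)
The plan has three steps: irreducibility and aperiodicity for each kernel, a uniform positive lower bound on multi-step transition probabilities via compactness, and then a Doeblin minorization giving both uniform geometric ergodicity and, through a second-moment bound, the uniform law of large numbers.

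\emph{Step 1: each chain is irreducible and aperiodic.} By Assumption~\ref{ass:composite-markov-model}, every \(P_0\in\mathcal P_0\) has \(M_{P_0}(s,t)>0\) exactly when \((s,t)\in\mathcal E\). Hence the support graph of \(M_{P_0}\) is \(\mathcal G\) for every kernel. Assumption~\ref{ass:common-support-ergodicity} then makes each \(M_{P_0}\) irreducible and aperiodic.

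\emph{Step 2: a uniform lower bound on \(m\)-step transitions.} Since \(\mathcal S\) is finite, there is an \(m\ge1\) such that every pair \((s,t)\) is joined by a directed walk of length exactly \(m\) in \(\mathcal G\) (Levin--Peres, Prop.~1.7). This \(m\) depends only on the graph, so \(M_{P_0}^m(s,t)>0\) for all \(s,t\) and all \(P_0\). To make the bound uniform, I will use compactness. The map \(P_0\mapsto\min_{s,t}M_{P_0}^m(s,t)\) is continuous, being a polynomial in finitely many entries of \(P_0\), and it is strictly positive on the compact set \(\mathcal P_0\). It therefore attains a minimum \(\gamma>0\). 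This is where compactness from Assumption~\ref{ass:composite-markov-model} is essential, and getting this uniform constant is the main obstacle. The rest is standard.

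\emph{Step 3: uniform geometric ergodicity and full support.} The bound from Step 2 is a Doeblin minorization \(M_{P_0}^m(s,\cdot)\ge \gamma|\mathcal S|\,\nu(\cdot)\) with \(\nu\) uniform on \(\mathcal S\). The standard coupling, or Dobrushin contraction, argument gives, for every \(P_0\in\mathcal P_0\), uniqueness of the stationary distribution and
\[
\sup_s\|M_{P_0}^n(s,\cdot)-\pi_{P_0}\|_{\mathrm{TV}}\le(1-\gamma|\mathcal S|)^{\lfloor n/m\rfloor}.
\]
This yields \eqref{eq:uniform-geometric-ergodicity} with \(\rho=(1-\gamma|\mathcal S|)^{1/m}\) and \(C=(1-\gamma|\mathcal S|)^{-1}\). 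Full support follows from
\[
\pi_{P_0}(t)=\sum_s\pi_{P_0}(s)M_{P_0}^m(s,t)\ge\gamma .
\]

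\emph{Step 4: uniform law of large numbers.} Let \(\bar g=\sum_s\pi_{P_0}(s)g(s)\) and \(\tilde g=g-\bar g\). I will bound the bias and the second moment, then apply Chebyshev.

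For the bias, \(|\mathbb E\,\tilde g(S_{i-1})|\le 2\|g\|_\infty C\rho^{i-1}\). Summing over \(i\) shows that \(\frac1n\sum_i\mathbb E\,\tilde g(S_{i-1})\) is \(O(1/n)\).

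For the second moment, use the Markov property. For \(j>i\),
\[
\bigl|\mathbb E[\tilde g(S_{i})\tilde g(S_j)]\bigr|
=\bigl|\mathbb E[\tilde g(S_i)(M_{P_0}^{j-i}\tilde g)(S_i)]\bigr|
\le 2\|g\|_\infty^2 C\rho^{j-i}.
\]
Summing over all pairs gives
\[
\mathbb E\Bigl(\frac1n\sum_{i=1}^n\tilde g(S_{i-1})\Bigr)^2\le \frac{K'}{n},
\]
where \(K'\) depends only on \(\|g\|_\infty\), \(C\) and \(\rho\). It is therefore uniform over \(P_0\) and \(S_0\). Chebyshev's inequality then gives \eqref{eq:uniform-empirical-state-average}.
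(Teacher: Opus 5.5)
Your proposal is correct and follows essentially the same route as the paper: a common walk length from Levin--Peres, a uniform lower bound on \(m\)-step transitions, a Doeblin contraction giving uniform geometric ergodicity and full support, then covariance decay, an \(O(1/n)\) second-moment bound and Chebyshev. The only difference is that the paper gets the minorization constant explicitly as \(m_0^L\) (with \(m_0\) the smallest one-step probability over the family), whereas you minimize \(P_0\mapsto\min_{s,t}M_{P_0}^m(s,t)\) over the compact class; both rest on compactness in the same way. Two cosmetic points: your constants \(\rho=(1-\gamma|\mathcal S|)^{1/m}\) and \(C=(1-\gamma|\mathcal S|)^{-1}\) break down in the degenerate case \(\gamma|\mathcal S|=1\) (the paper takes \(\rho=\max\{\cdot,1/2\}\) to keep \(\rho\in(0,1)\) and \(C<\infty\)), and with total variation normalized to be at most one the covariance bound carries \(4\|g\|_\infty^2\) rather than \(2\|g\|_\infty^2\), which only changes \(K'\).
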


\begin{proof}
We first obtain a common number of steps for reaching any history state,
then use it to control convergence and empirical averages.

By Assumption~\ref{ass:composite-markov-model}, the finite state and token
spaces give positive lower bounds
\[
m_0:=\min_{P_0\in\mathcal P_0}
\min_{s\in\mathcal S,\,x\in\mathcal A(s)}p_0(x\mid s)>0,
\qquad
m_1:=\min_{s\in\mathcal S,\,x\in\mathcal A(s)}p_1(x\mid s)>0.
\]

Furthermore, Assumption~\ref{ass:common-support-ergodicity} imply every \(M_{P_0}\) is irreducible and aperiodic because its support graph is
\(\mathcal G\). Apply \citet[Proposition~1.7]{levin2017markov} to any one
human kernel: there exists an integer \(L\geq1\) for which all \(L\)-step
transition probabilities are positive. Thus every pair of states is
connected by a directed walk of length \(L\). Since all human kernels
have the same allowed transitions, the same \(L\) works for the entire
family. Each transition along such a walk has probability at least \(m_0\),
so the probability of following that walk is at least \(m_0^L\). Therefore
\[
M_{P_0}^L(s,t)\geq m_0^L
\qquad
\text{for every }s,t\in\mathcal S,\ P_0\in\mathcal P_0.
\]
By \citet[Corollary~1.17]{levin2017markov}, each finite irreducible chain
has a unique stationary distribution \(\pi_{P_0}\). It has full support,
since stationarity and the preceding bound give
\[
\pi_{P_0}(t)
=
\sum_{s\in\mathcal S}\pi_{P_0}(s)M_{P_0}^L(s,t)
\geq m_0^L>0.
\]

\emph{Uniform geometric convergence.}
We show directly that the distance to stationarity decreases by a common
factor every \(L\) steps. For \(n\geq0\), the probabilities
\(M_{P_0}^n(s,\cdot)\) and \(\pi_{P_0}\) both sum to one. Hence
\[
\begin{aligned}
&M_{P_0}^{n+L}(s,t)-\pi_{P_0}(t)\\
&\quad=
\sum_{u\in\mathcal S}
\bigl(M_{P_0}^n(s,u)-\pi_{P_0}(u)\bigr)
\bigl(M_{P_0}^L(u,t)-m_0^L\bigr).
\end{aligned}
\]
Subtracting \(m_0^L\) does not change this sum, because the first factor
sums to zero over \(u\). The second factor is nonnegative and sums to
\(1-|\mathcal S|m_0^L\) over \(t\). Taking absolute values, summing over
\(t\) gives
\[
\begin{aligned}
&\left\|M_{P_0}^{n+L}(s,\cdot)-\pi_{P_0}\right\|_{\mathrm{TV}}\\
&\quad\leq
\bigl(1-|\mathcal S|m_0^L\bigr)
\left\|M_{P_0}^{n}(s,\cdot)-\pi_{P_0}\right\|_{\mathrm{TV}}.
\end{aligned}
\]
Each row of
\(M_{P_0}^L\) has \(|\mathcal S|\) entries, each at least \(m_0^L\), and
sums to one. Thus
\[
1=\sum_{t\in\mathcal S}M_{P_0}^L(s,t)
\geq |\mathcal S|m_0^L>0,
\]
so \(0\leq1-|\mathcal S|m_0^L<1\).

The total variation distance between any two probability distributions is
at most one. Write \(n=kL+r\), where
\(k\geq0\) and \(0\leq r<L\). Starting at time \(r\) with this bound
of one and applying the preceding contraction bound \(k\) times gives, for
\(k\geq1\),
\[
\left\|M_{P_0}^{n}(s,\cdot)-\pi_{P_0}\right\|_{\mathrm{TV}}
\leq \bigl(1-|\mathcal S|m_0^L\bigr)^k.
\]
For \(k=0\), the distance remains bounded by one. To express this \(L\)-step bound in the form \(C\rho^n\), choose
\[
\rho:=\max\left\{\bigl(1-|\mathcal S|m_0^L\bigr)^{1/L},\frac12\right\},
\qquad
C:=\rho^{-(L-1)}.
\]
Taking the maximum with \(1/2\) ensures \(\rho>0\). These choices give \(0<\rho<1\), \(C\geq1\), and, for
\(k\geq1\),
\[
\bigl(1-|\mathcal S|m_0^L\bigr)^k
\leq\rho^{kL}
=\rho^{n-r}
\leq\rho^{-(L-1)}\rho^n
=C\rho^n.
\]
Since neither constant depends on the human kernel or initial state, we have, for every \(n\geq0\),
\[
\sup_{P_0\in\mathcal P_0}\sup_{s\in\mathcal S}
\left\|M_{P_0}^n(s,\cdot)-\pi_{P_0}\right\|_{\mathrm{TV}}
\leq C\rho^n.
\]
This is \eqref{eq:uniform-geometric-ergodicity}.

\emph{Empirical averages.}
Fix a bounded \(g\), and write
\(\|g\|_\infty:=\max_{s\in\mathcal S}|g(s)|\).
For \(j\geq i\), the Markov property and the preceding bound imply
\[
\begin{aligned}
&\left|
\mathbb E_{P_0}[g(S_j)\mid S_i]
-\sum_{t\in\mathcal S}\pi_{P_0}(t)g(t)
\right|\\
&\quad=
\left|
\sum_{t\in\mathcal S}
\bigl(M_{P_0}^{j-i}(S_i,t)-\pi_{P_0}(t)\bigr)g(t)
\right|
\leq 2\|g\|_\infty C\rho^{j-i}.
\end{aligned}
\]
Also,
\(\left|g(S_i)-\sum_t\pi_{P_0}(t)g(t)\right|\leq2\|g\|_\infty\).
Conditioning on \(S_i\) thus gives
\[
\left|
\mathbb E_{P_0}\left[
\left(g(S_i)-\sum_t\pi_{P_0}(t)g(t)\right)
\left(g(S_j)-\sum_t\pi_{P_0}(t)g(t)\right)
\right]
\right|
\leq4\|g\|_\infty^2 C\rho^{j-i}.
\]
Expanding the square and using this covariance bound gives
\[
\begin{aligned}
&\mathbb E_{P_0}\left[
\left(
\frac1n\sum_{i=1}^n g(S_{i-1})
-\sum_s\pi_{P_0}(s)g(s)
\right)^2
\right]\\
&\quad=
\frac1{n^2}\sum_{i=0}^{n-1}\sum_{j=0}^{n-1}
\mathbb E_{P_0}\left[
\left(g(S_i)-\sum_s\pi_{P_0}(s)g(s)\right)
\left(g(S_j)-\sum_s\pi_{P_0}(s)g(s)\right)
\right]\\
&\quad\leq
\frac{4\|g\|_\infty^2 C}{n^2}
\sum_{i=0}^{n-1}\sum_{j=0}^{n-1}\rho^{|j-i|},
\end{aligned}
\]

To count these terms, there are \(n\) pairs with \(i=j\), each contributing
\(\rho^0=1\). For each separation \(k=1,\ldots,n-1\), there are
\(n-k\) pairs with \(j=i+k\) and another \(n-k\) with \(i=j+k\).
Consequently,
\[
\begin{aligned}
\sum_{i=0}^{n-1}\sum_{j=0}^{n-1}\rho^{|j-i|}
&=n+2\sum_{k=1}^{n-1}(n-k)\rho^k\\
&\leq n+2n\sum_{k=1}^{\infty}\rho^k
=n\left(1+\frac{2\rho}{1-\rho}\right).
\end{aligned}
\]
Here we used \(n-k\leq n\) and the geometric-series identity
\(\sum_{k=1}^{\infty}\rho^k=\rho/(1-\rho)\).
Since the bounds hold for every human kernel and initial state with the
same constants, we obtain
\[
\begin{aligned}
&\sup_{S_0\in\mathcal S}\sup_{P_0\in\mathcal P_0}
\mathbb E_{P_0}\left[
\left(
\frac1n\sum_{i=1}^n g(S_{i-1})
-\sum_{s\in\mathcal S}\pi_{P_0}(s)g(s)
\right)^2
\right]\\
&\quad\leq
\frac{4\|g\|_\infty^2 C}{n}
\left(1+\frac{2\rho}{1-\rho}\right).
\end{aligned}
\]
Finally, fix \(\eta>0\). Markov's inequality gives
\[
\begin{aligned}
&\mathbb E_{P_0}\left[
\left(
\frac1n\sum_{i=1}^n g(S_{i-1})
-\sum_{s\in\mathcal S}\pi_{P_0}(s)g(s)
\right)^2
\right]\\
&\quad\geq
\eta^2 P_0^{(n)}\left(
\left|
\frac1n\sum_{i=1}^n g(S_{i-1})
-\sum_{s\in\mathcal S}\pi_{P_0}(s)g(s)
\right|>\eta
\right).
\end{aligned}
\]
Using the preceding second-moment bound yields
\[
\begin{aligned}
&\sup_{S_0\in\mathcal S}\sup_{P_0\in\mathcal P_0}
P_0^{(n)}\left(
\left|
\frac1n\sum_{i=1}^n g(S_{i-1})
-\sum_{s\in\mathcal S}\pi_{P_0}(s)g(s)
\right|>\eta
\right)\\
&\quad\leq
\frac{4\|g\|_\infty^2 C}{n\eta^2}
\left(1+\frac{2\rho}{1-\rho}\right)
\longrightarrow0.
\end{aligned}
\]
For fixed \(g\) and \(\eta\), the right-hand side is a finite constant
times \(1/n\). Since that constant is independent of \(P_0\) and \(S_0\),
this proves \eqref{eq:uniform-empirical-state-average}.
\end{proof}

\section{Proofs for Section~\ref{sec:minimax-exponent}}
\label{app:proof-theorem-3-1}
We prove the two parts of
Theorem~\ref{thm:clean-centered-two-sided-markov}.
The impossibility proof constructs contamination that reproduces one
admissible human source exactly. The possibility proof combines
lower-clipped likelihood-ratio scores for finitely many representative
human kernels.

\subsection{Auxiliary lemmas}

At a fixed history, every contaminated machine distribution has the form
\(q=(1-\epsilon)p_1+\epsilon b\). Since \(b\) is nonnegative, each
coordinate of \(q\) must be at least the retained machine probability.
This coordinatewise lower bound is also sufficient: any remaining mass
can be supplied by the contaminating distribution.

\begin{lemma}
\label{lem:whole-kernel-overlap}
Under Assumptions~\ref{ass:composite-markov-model} and
\ref{ass:common-support-ergodicity}, the minimum defining \(D_0\) is attained,
\(0\le D_0<\infty\), and \(D_0=0\) if and only if \(P_1\in\mathcal P_0\).
For \(0\le\epsilon<1\),
\begin{equation}
\mathcal Q_{1,\epsilon}(s)
=\{Q\in\Delta(\mathcal X):q(x)\ge(1-\epsilon)p_1(x\mid s)
\text{ for every }x\}.
\label{eq:one-sided-contamination-envelope}
\end{equation}
There is a kernel \(P_0\in\mathcal P_0\) whose rows belong to
\(\mathcal Q_{1,\epsilon}(s)\) at every history if and only if
\(D_0\le-\log(1-\epsilon)\).
\end{lemma}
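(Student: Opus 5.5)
The plan is to prove the four claims in order: attainment and finiteness of the minimum defining \(D_0\), the characterization \(D_0=0\iff P_1\in\mathcal P_0\), the envelope description \eqref{eq:one-sided-contamination-envelope}, and the kernel-level criterion.

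For attainment, we will work with the function
\[
F(P_0):=\max_{s\in\mathcal S,\,x\in\mathcal A(s)}\log\frac{p_1(x\mid s)}{p_0(x\mid s)} .
\]
By Assumption~\ref{ass:composite-markov-model}, every \(P_0\in\mathcal P_0\) has \(p_0(x\mid s)>0\) on \(\mathcal A(s)\). We regard \(\mathcal P_0\) as a subset of the finite-dimensional set of tables \((p_0(x\mid s))_{s,x}\). The lower bound \(m_0>0\) from the proof of Lemma~\ref{lem:uniform-ergodicity} relies on compactness. To avoid circularity, we argue directly: the map \(P_0\mapsto\min_{s,x\in\mathcal A(s)}p_0(x\mid s)\) is continuous and strictly positive on the compact set \(\mathcal P_0\), so its minimum \(m_0\) is positive. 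Hence \(F\) is a finite maximum of continuous functions on \(\mathcal P_0\), so it is continuous and attains its minimum. This gives \(D_0<\infty\).

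For \(D_0\ge0\), fix \(P_0\) and any \(s\). Both \(p_0(\cdot\mid s)\) and \(p_1(\cdot\mid s)\) sum to one over \(\mathcal A(s)\), so \(p_1(x\mid s)\ge p_0(x\mid s)\) for some \(x\), and therefore \(F(P_0)\ge0\). If \(F(P_0)=0\), then \(p_1\le p_0\) coordinatewise on each row, and equal row sums force \(p_1=p_0\); the same identity holds off \(\mathcal A(s)\), where both vanish. Thus \(D_0=0\) exactly when the minimizer equals \(P_1\), which is equivalent to \(P_1\in\mathcal P_0\). Conversely, if \(P_1\in\mathcal P_0\), then \(F(P_1)=0\).

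For the envelope, the inclusion \(\subseteq\) is immediate because \(b\ge0\). For \(\supseteq\), we handle \(\epsilon=0\) separately: then \(q\ge p_1\) and both sum to one, so \(q=p_1\). For \(\epsilon>0\), we set \(b:=(q-(1-\epsilon)p_1)/\epsilon\). This \(b\) is nonnegative and sums to \((1-(1-\epsilon))/\epsilon=1\), so \(b\in\Delta(\mathcal X)\) and \(q=(1-\epsilon)p_1+\epsilon b\).

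The kernel criterion then follows. By the envelope, a kernel \(P_0\) has every row in \(\mathcal Q_{1,\epsilon}(s)\) if and only if \(p_0(x\mid s)\ge(1-\epsilon)p_1(x\mid s)\) for all \(s,x\). Off \(\mathcal A(s)\) both sides vanish, so it suffices to check \(x\in\mathcal A(s)\), where the condition is equivalent to \(\log(p_1/p_0)\le-\log(1-\epsilon)\), that is, \(F(P_0)\le-\log(1-\epsilon)\). If such a \(P_0\) exists, then \(D_0\le F(P_0)\le-\log(1-\epsilon)\). Conversely, the attained minimizer \(P_0^\star\) satisfies \(F(P_0^\star)=D_0\le-\log(1-\epsilon)\), so \(P_0^\star\) works.

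The only delicate point is the positivity and continuity argument behind attainment. Compactness of \(\mathcal P_0\) together with strict positivity on \(\mathcal A(s)\) handles it. Everything else is elementary.
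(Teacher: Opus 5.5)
Your proposal is correct and follows essentially the same route as the paper. The shared steps are continuity of the maximal log-ratio on the compact class $\mathcal P_0$ for attainment, the equal-row-sum argument for $D_0\ge 0$ and for the case $D_0=0$, the explicit contaminating distribution $b=(q-(1-\epsilon)p_1)/\epsilon$ for the envelope, and the reduction of the kernel criterion to the attained minimizer. The only difference is that you rederive positivity of the minimal transition probability directly instead of citing $m_0,m_1$ from Lemma~\ref{lem:uniform-ergodicity}; citing them would not have been circular, but your version is harmless.
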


\begin{proof}
The common positive probability bounds make
\(P_0\mapsto\max_{s,x\in\mathcal A(s)}\log[p_1(x\mid s)/p_0(x\mid s)]\)
continuous and finite. Indeed, each log ratio lies between
\(\log m_1\) and \(-\log m_0\), and the maximum is over finitely many
coordinates.

At each history, the two probability vectors sum to one, so at least one
coordinate satisfies \(p_1(x\mid s)\ge p_0(x\mid s)\). Hence \(D_0\ge0\).
If \(D_0=0\), a minimizing kernel satisfies \(p_0(x\mid s)\ge p_1(x\mid s)\)
everywhere. Normalization forces equality at every history, so
\(P_1\in\mathcal P_0\). The converse follows by choosing \(P_0=P_1\).

For \(\epsilon>0\), the mixture representation we describe before this lemma implies the inequalities
in \eqref{eq:one-sided-contamination-envelope}. Conversely, those
inequalities make
\[
b(x\mid s)=\frac{q(x)-(1-\epsilon)p_1(x\mid s)}{\epsilon}
\]
nonnegative. Its total mass is
\[
\sum_x b(x\mid s)
=\frac{1-(1-\epsilon)}{\epsilon}=1,
\]
so it is a valid probability distribution. When \(\epsilon=0\), the inequalities
and normalization imply \(Q=P_1(\cdot\mid s)\).

From definition of $D_0$, for a whole human kernel, the condition is equivalent to
\[
\max_{\substack{s\in\mathcal S\\x\in\mathcal A(s)}}
\log\frac{p_1(x\mid s)}{p_0(x\mid s)}
\le-\log(1-\epsilon).
\]
Taking the attained minimum over \(\mathcal P_0\) proves the last claim.
Off the common support, both clean probabilities are zero, so those
coordinates impose no additional restriction.
\end{proof}

For example, at one history let \(P_1=(0.8,0.2)\) and
\(P_0=(0.4,0.6)\). With \(\epsilon=0.25\), the retained machine mass
is \((0.6,0.15)\); its first coordinate already exceeds \(0.4\), so
adding contamination cannot produce \(P_0\). With \(\epsilon=0.5\),
the retained mass is \((0.4,0.1)\), and choosing \(B=(0,1)\) gives
\(P_0\) exactly.

Fix \(0\le\epsilon<1\). For each human kernel \(P_0\), define
\[
Q_{P_0}^{\star}(\cdot\mid s)
\in
\operatorname*{arg\,min}_{Q\in\mathcal Q_{1,\epsilon}(s)}
D_{\mathrm{KL}}\bigl(P_0(\cdot\mid s)\|Q\bigr).
\]
This distribution is the contaminated machine distribution closest
to the candidate human distribution in the displayed divergence.
The associated likelihood score has the clipping form used in
Section~\ref{sec:minimax-exponent}.

\begin{lemma}[Clipped likelihood scores]
\label{lem:clipped-likelihood-projection}
Under Assumptions~\ref{ass:composite-markov-model}
and~\ref{ass:common-support-ergodicity}, the minimizer is unique and
satisfies
\[
q_{P_0}^{\star}(x\mid s)
=
\max\{(1-\epsilon)p_1(x\mid s),
      c_{P_0}(s)p_0(x\mid s)\},
\qquad x\in\mathcal A(s),
\]
and is zero outside \(\mathcal A(s)\). Here \(c_{P_0}(s)>0\)
is chosen so that the probabilities sum to one. Consequently,
\[
Z_{P_0}(x\mid s)
:=
\log\frac{q_{P_0}^{\star}(x\mid s)}{p_0(x\mid s)}
\]
equals \eqref{eq:clipped-likelihood-score} on the common support.
Set \(Z_{P_0}(x\mid s)=+\infty\) outside that support.

For every \(Q\in\mathcal Q_{1,\epsilon}(s)\),
\begin{equation}
\mathbb E_Q\bigl[e^{-Z_{P_0}(X\mid s)}\bigr]\le1.
\label{eq:clipped-likelihood-exponential}
\end{equation}
In particular, under every admissible contamination strategy,
\[
\mathbb E\left[
e^{-Z_{P_0}(X_i\mid S_{i-1})}
\mid\mathcal F_{i-1}
\right]\le1.
\]
\end{lemma}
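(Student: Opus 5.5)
The plan is to treat the history \(s\) as fixed and solve the convex program \(\min_{Q\in\mathcal Q_{1,\epsilon}(s)}D_{\mathrm{KL}}(P_0(\cdot\mid s)\|Q)\) using the envelope description \eqref{eq:one-sided-contamination-envelope} from Lemma~\ref{lem:whole-kernel-overlap}. Write \(p=P_0(\cdot\mid s)\), \(\ell=(1-\epsilon)p_1(\cdot\mid s)\), and \(\mathcal A=\mathcal A(s)\). The feasible set is the polytope \(\{q\ge\ell,\ \sum_x q(x)=1\}\). On it the objective \(-\sum_{x\in\mathcal A}p(x)\log q(x)\) plus a constant is strictly convex in the coordinates \(q|_{\mathcal A}\). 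Any mass placed off \(\mathcal A\) only shrinks what is available on \(\mathcal A\), so an optimizer puts zero mass outside \(\mathcal A\). Here we use that \(\ell=0\) off \(\mathcal A\), because \(p_1\) vanishes there. Existence follows from compactness, since the objective is finite on some feasible point, for instance \(q=p_1\), and continuous where finite. Uniqueness then follows from strict convexity on \(\mathcal A\) together with the zero-off-support property.

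Next I would derive the form of the minimizer from the KKT conditions. The Lagrangian is \(-\sum p\log q+\lambda(\sum q-1)-\sum\nu_x(q_x-\ell_x)\) with \(\nu\ge0\), and stationarity gives \(p(x)/q(x)=\lambda-\nu_x\). If the constraint is slack, then \(q(x)=p(x)/\lambda\). If it binds, then \(q(x)=\ell(x)\) and \(p(x)/\ell(x)\le\lambda\). Setting \(c=1/\lambda\), both cases combine into \(q(x)=\max\{\ell(x),c\,p(x)\}\). The constant \(c\) is determined by normalization: \(c\mapsto\sum_{x\in\mathcal A}\max\{\ell(x),cp(x)\}\) is continuous and nondecreasing, strictly increasing once some coordinate is unclipped, equals \(1-\epsilon\le1\) at \(c=0\), and tends to \(\infty\). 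Hence a unique \(c_{P_0}(s)>0\) exists (for \(\epsilon=0\) we get \(c=1\) with \(q=p_1\); this is consistent because \(\sum\max\{p_1,p\}\ge1\)). Dividing by \(p_0\) gives \(Z_{P_0}=\max\{\log(\ell/p_0),\log c_{P_0}(s)\}\), which is \eqref{eq:clipped-likelihood-score}.

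The step I expect to need the most care is the supermartingale bound \eqref{eq:clipped-likelihood-exponential}. I would derive it from the first-order optimality condition rather than computing it directly. For any feasible \(Q\), the directional derivative of the objective at \(q^\star\) toward \(Q\) is nonnegative, which means \(\sum_{x\in\mathcal A}p(x)\bigl(Q(x)-q^\star(x)\bigr)/q^\star(x)\ge0\)... wait, the sign is reversed: the objective is \(-\sum p\log q\), so its derivative is \(-\sum p(Q-q^\star)/q^\star\ge0\). This gives \(\sum_{x\in\mathcal A}Q(x)\,p(x)/q^\star(x)\le\sum_{x\in\mathcal A}p(x)=1\). The left side is exactly \(\mathbb E_Q[e^{-Z}]\), with the convention \(e^{-\infty}=0\) off \(\mathcal A\), which takes care of any mass \(Q\) puts off the support. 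As a check on the variational argument, I would also verify the inequality directly. On the clipped set \(C=\{cp\ge\ell\}\) we have \(p/q^\star=1/c\), and on its complement \(p/q^\star=p/\ell<1/c\). Therefore \(\sum Q\,p/q^\star\le Q(C)/c+\sum_{C^c}Q\,p/\ell\). Using \(Q\ge\ell\) on \(C^c\) and normalization, this reduces to \(1\). If the direct computation becomes messy, I would rely on the variational argument alone.

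Finally, under any contamination strategy, the conditional law of \(X_i\) given \(\mathcal F_{i-1}\) is \((1-\epsilon)P_1(\cdot\mid S_{i-1})+\epsilon B_i\in\mathcal Q_{1,\epsilon}(S_{i-1})\). Applying \eqref{eq:clipped-likelihood-exponential} at \(s=S_{i-1}\) then gives the conditional bound.
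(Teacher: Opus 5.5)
Your proof is correct but runs in the opposite direction from the paper's. The paper uses no compactness, strict convexity, or multipliers. For $\epsilon>0$ it picks $c\in(0,1]$ by the intermediate value theorem and defines $q^\star=\max\{\ell,cp\}$ outright. It then proves the moment bound first, by splitting $\sum_x q(x)r(x)=\sum_x\ell(x)r(x)+\sum_x(q(x)-\ell(x))r(x)$ with $r=p/q^\star\le1/c$ (equality where $q^\star>\ell$) and $\sum_x(q(x)-\ell(x))\le\epsilon=\sum_x(q^\star(x)-\ell(x))$. This is essentially your ``direct check''. Minimality and uniqueness are then read off from that bound via $-\log u\ge1-u$: $D_{\mathrm{KL}}(p\|q)-D_{\mathrm{KL}}(p\|q^\star)\ge1-\sum_x q(x)p(x)/q^\star(x)\ge0$, with equality forcing $q=q^\star$ on $\mathcal A$.

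You instead get existence and uniqueness abstractly, recover the form from the KKT conditions, and derive the moment bound from first-order optimality at the projection. Your route shows the bound is a structural property of any reverse-KL projection onto a convex set. It would therefore persist for other convex contamination neighborhoods that have no closed-form projection. The paper's route is fully elementary and obtains the inequality, optimality, and uniqueness at once from the explicit candidate.

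One correction: your parenthetical for $\epsilon=0$ is wrong. With $c=1$, $\max\{p_1(x),p(x)\}=p_1(x)$ for all $x$ only if $p\le p_1$ everywhere, i.e.\ $p=p_1$; otherwise these values sum to more than one. At $\epsilon=0$ the map $c\mapsto\sum_x\max\{p_1(x),cp(x)\}$ equals $1$ on all of $[0,\min_x p_1(x)/p(x)]$, so your ``unique $c>0$'' claim fails there. The valid choices are $c\in(0,\min_x p_1(x)/p(x)]$. This is exactly what your own KKT condition $\lambda\ge\max_x p(x)/p_1(x)$ yields, and it is what the paper uses. Since $q^\star=p_1$ and $Z_{P_0}=\log(p_1/p_0)$ for every such $c$, nothing downstream is affected.
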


\begin{proof}
Fix a history and abbreviate
\[
p(x)=p_0(x\mid s),
\qquad
\ell(x)=(1-\epsilon)p_1(x\mid s),
\qquad
\mathcal A=\mathcal A(s).
\]
First suppose \(\epsilon>0\). The function
\[
c\longmapsto\sum_{x\in\mathcal A}\max\{\ell(x),cp(x)\}
\]
is continuous, equals \(1-\epsilon<1\) at \(c=0\), and is at
least one at \(c=1\). Hence some \(c\in(0,1]\) makes this sum
equal to one. Define
\[
q^\star(x)=\max\{\ell(x),cp(x)\}
\quad(x\in\mathcal A),
\qquad
q^\star(x)=0
\quad(x\notin\mathcal A).
\]
By \eqref{eq:one-sided-contamination-envelope}, this is an
admissible contaminated distribution.

Write \(r(x)=p(x)/q^\star(x)\) on \(\mathcal A\).
Then \(r(x)\le1/c\), with equality whenever
\(q^\star(x)>\ell(x)\). Moreover,
\[
\sum_{x\in\mathcal A}(q^\star(x)-\ell(x))=\epsilon.
\]
For any admissible \(Q\), its mass function satisfies
\(q(x)\ge\ell(x)\) and
\(\sum_{x\in\mathcal A}(q(x)-\ell(x))\le\epsilon\).
Therefore,
\begin{align*}
\sum_{x\in\mathcal A}q(x)r(x)
&=
\sum_{x\in\mathcal A}\ell(x)r(x)
+
\sum_{x\in\mathcal A}(q(x)-\ell(x))r(x)\\
&\le
\sum_{x\in\mathcal A}\ell(x)r(x)+\epsilon/c\\
&=
\sum_{x\in\mathcal A}q^\star(x)r(x)
=1.
\end{align*}
This proves \eqref{eq:clipped-likelihood-exponential}, since
\(e^{-Z_{P_0}}=r\) on the common support and is zero outside it.

We now show that the constructed \(q^\star\) is the unique minimizer using the same inequality. Using
\(-\log u\ge1-u\),
\begin{align*}
D_{\mathrm{KL}}(p\|q)-D_{\mathrm{KL}}(p\|q^\star)
&=
-\sum_{x\in\mathcal A}p(x)
\log\frac{q(x)}{q^\star(x)}\\
&\ge
1-\sum_{x\in\mathcal A}
q(x)\frac{p(x)}{q^\star(x)}
\ge0.
\end{align*}
Equality requires \(q(x)=q^\star(x)\) on every supported
coordinate. These coordinates already have total mass one,
so the minimizer is unique.

It remains to handle \(\epsilon=0\) and establish the conditional bound. When \(\epsilon=0\), the only admissible distribution is \(p_1\), so \(q^\star=p_1\). Choosing
$$
c=\min_{x\in\mathcal A}\frac{p_1(x\mid s)}{p(x)}>0
$$
ensures \(cp(x)\le p_1(x\mid s)\) on \(\mathcal A\), so the formula \(q^\star(x)=\max\{p_1(x\mid s),cp(x)\}\) remains valid. The exponential-moment bound holds with equality because
$$
\sum_{x\in\mathcal A}p_1(x\mid s)\frac{p(x)}{q^\star(x)}
=\sum_{x\in\mathcal A}p(x)=1.
$$

 To obtain the conditional bound, fix the observed history \(\mathcal F_{i-1}\). The next-token distribution belongs to \(\mathcal Q_{1,\epsilon}(S_{i-1})\), and \eqref{eq:clipped-likelihood-exponential} holds for every distribution in this neighborhood. Therefore,
$$
\mathbb E\!\left[
e^{-Z_{P_0}(X_i\mid S_{i-1})}
\mid\mathcal F_{i-1}
\right]\le1,
$$
even when contamination depends on the entire observed history.
\end{proof}

The next lemma reduces the composite human class to finitely
many representative kernels. The representatives and the threshold
may depend on \(\epsilon\), but are fixed before observing the text
and do not depend on its length.

\begin{lemma}[Finite human representatives]
\label{lem:finite-clipped-representatives}
Under the same assumptions, suppose
\(D_0>-\log(1-\epsilon)\). There exist
\(P_0^{(1)},\ldots,P_0^{(N)}\in\mathcal P_0\) and \(t>0\)
such that, for every \(P_0\in\mathcal P_0\), at least one
\(j\in\{1,\ldots,N\}\) satisfies
\begin{equation}
\sum_s\pi_{P_0}(s)
\mathbb E_{P_0(\cdot\mid s)}
[Z_{P_0^{(j)}}(X\mid s)]
\le -2t.
\label{eq:finite-clipped-human-means}
\end{equation}
\end{lemma}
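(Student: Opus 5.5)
The plan is a pointwise-negativity argument followed by a compactness reduction. For a fixed kernel \(P_0\), the human mean of its own clipped score is strictly negative, and the bound is uniform enough to survive a finite cover of \(\mathcal P_0\).

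\emph{Step 1 (self-score is negative).} Fix \(P_0\in\mathcal P_0\) and a history \(s\). By Lemma~\ref{lem:clipped-likelihood-projection},
\[
\mathbb E_{P_0(\cdot\mid s)}[Z_{P_0}(X\mid s)]=-D_{\mathrm{KL}}\bigl(P_0(\cdot\mid s)\,\|\,Q^\star_{P_0}(\cdot\mid s)\bigr)\le0.
\]
Equality holds only if \(P_0(\cdot\mid s)\in\mathcal Q_{1,\epsilon}(s)\). Since \(D_0>-\log(1-\epsilon)\), Lemma~\ref{lem:whole-kernel-overlap} shows \(P_0\) cannot have all its rows in the neighborhoods. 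Hence at least one history \(s^\ast\) has \(D_{\mathrm{KL}}>0\) there. Lemma~\ref{lem:uniform-ergodicity} gives \(\pi_{P_0}(s^\ast)>0\), so
\[
g(P_0):=\sum_s\pi_{P_0}(s)\,\mathbb E_{P_0(\cdot\mid s)}[Z_{P_0}(X\mid s)]<0.
\]

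\emph{Step 2 (uniform negativity by compactness).} I will show that the map \((P_0,P_0')\mapsto F(P_0,P_0'):=\sum_s\pi_{P_0}(s)\mathbb E_{P_0(\cdot\mid s)}[Z_{P_0'}(X\mid s)]\) is continuous on \(\mathcal P_0\times\mathcal P_0\). Three facts give this.
\begin{itemize}
\item The stationary distribution \(\pi_{P_0}\) is continuous in \(P_0\). It is the unique solution of a linear system whose coefficients vary continuously, by uniqueness in Lemma~\ref{lem:uniform-ergodicity}.
\item The normalizer \(c_{P_0'}(s)\) is continuous in \(P_0'\). It is the unique root of a continuous, strictly increasing (on the relevant range) equation \(\sum_x\max\{\ell(x),c\,p_0'(x)\}=1\).
\item \(Z_{P_0'}\) is bounded on the common support. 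This uses the positive lower bounds \(m_0,m_1\) and \(c_{P_0'}(s)\in(0,1]\) together with \(c\,p_0'\le q^\star\le1\).
\end{itemize}
Then \(g(P_0)=F(P_0,P_0)\) is continuous on compact \(\mathcal P_0\) and strictly negative, so \(\max g=:-4t<0\) for some \(t>0\).

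\emph{Step 3 (finite cover).} For each \(P'\in\mathcal P_0\), continuity of \(F\) at \((P',P')\) gives an open neighborhood \(U_{P'}\) with \(F(P_0,P')\le-2t\) for all \(P_0\in U_{P'}\). Compactness yields finitely many \(P_0^{(1)},\dots,P_0^{(N)}\) whose neighborhoods cover \(\mathcal P_0\). Any \(P_0\) lies in some \(U_{P_0^{(j)}}\), which is exactly \eqref{eq:finite-clipped-human-means}.

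The main obstacle is Step 2, in particular the continuity of \(c_{P_0'}(s)\). The function \(c\mapsto\sum_x\max\{\ell(x),cp(x)\}\) is nondecreasing and becomes strictly increasing once some coordinate is active. At the root it must be strictly increasing, since the sum is \(1>1-\epsilon\). So the root is unique and depends continuously on \(p\), for example by an implicit-function or monotone-root argument. The case \(\epsilon=0\) needs separate handling, because then \(q^\star=p_1\) and \(Z_{P_0'}=\log(p_1/p_0')\) is directly continuous.
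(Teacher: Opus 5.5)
Your argument is correct and has the same skeleton as the paper's: a negative self-score, then continuity, then a finite subcover. The difference is the order of the steps, and your order does more work than is needed.

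The paper never proves a uniform bound before covering. For each center $P$ it uses only that $P_0\mapsto\sum_s\pi_{P_0}(s)\mathbb E_{P_0(\cdot\mid s)}[Z_P(X\mid s)]$ is continuous with the score $Z_P$ \emph{held fixed}. This needs only continuity of $\pi_{P_0}$ and boundedness of $Z_P$. It then takes a neighborhood $U_P$ on which this mean is below $-d(P)/2$, where $d(P)=\sum_s\pi_P(s)D_{\mathrm{KL}}(P(\cdot\mid s)\|Q_P^\star(\cdot\mid s))>0$ is exactly $-g(P)$ from your Step~1. Only after extracting the finite subcover does it set $t:=\tfrac14\min_j d(P_0^{(j)})$.

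So continuity of $P'\mapsto c_{P'}(s)$, which you flag as the main obstacle, is not needed at all, and neither is continuity of the diagonal map $g$. Your own Step~3 likewise uses only continuity of $F$ in its first argument.

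What your ordering buys is a constant independent of the chosen cover, $t=\tfrac14\min_{P\in\mathcal P_0}d(P)$. The price is the extra regularity argument for the normalizer. That argument does go through by your monotone-root and limit-point reasoning. An implicit-function theorem is not directly available, since $c\mapsto\sum_x\max\{\ell(x),cp(x)\}$ is only piecewise linear.

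Your uniqueness-based justification that $\pi_{P_0}$ depends continuously on $P_0$ also supplies a step the paper asserts without proof.
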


\begin{proof}
For a fixed candidate \(P\in\mathcal P_0\), define
\[
d(P):=
\sum_s\pi_P(s)
D_{\mathrm{KL}}\bigl(
P(\cdot\mid s)\|Q_P^\star(\cdot\mid s)
\bigr).
\]
By Lemma~\ref{lem:whole-kernel-overlap}, \(D_0>-\log(1-\epsilon)\)
implies that at least one row of \(P\) lies outside its
contamination class. The corresponding divergence is positive.
Every state has positive stationary probability, so \(d(P)>0\).
Thus the mean of \(Z_P\) under its own human source is
\[
\sum_s\pi_P(s)\mathbb E_{P(\cdot\mid s)}[Z_P(X\mid s)]
=-d(P)<0.
\]

For fixed \(Z_P\), its stationary mean is continuous in the human kernel \(P_0\) and equals \(-d(P)<0\) at \(P_0=P\). Hence there is a relatively open neighborhood \(U_P\) of \(P\) in \(\mathcal P_0\) such that
$$
\sum_s\pi_{P_0}(s)
\mathbb E_{P_0(\cdot\mid s)}[Z_P(X\mid s)]
<-d(P)/2
\qquad(P_0\in U_P).
$$

These neighborhoods cover the compact class \(\mathcal P_0\).
Choose a finite subcover with centers
\(P_0^{(1)},\ldots,P_0^{(N)}\), and set
\[
t:=\frac14\min_{1\le j\le N}d(P_0^{(j)})>0.
\]
Every human kernel belongs to at least one selected
neighborhood. For its corresponding center, the stationary
mean is less than \(-d(P_0^{(j)})/2\le-2t\), proving
\eqref{eq:finite-clipped-human-means}.
\end{proof}

\subsection{Proof of Theorem~\ref{thm:clean-centered-two-sided-markov}}

\begin{proof}[Proof of Theorem~\ref{thm:clean-centered-two-sided-markov}(1)]
By Lemma~\ref{lem:whole-kernel-overlap}, the condition \(D_0\le-\log(1-\epsilon)\) implies that some \(P_0^\star\in\mathcal P_0\) satisfies
\(P_0^\star(\cdot\mid s)\in\mathcal Q_{1,\epsilon}(s)\) at every history. An admissible contamination strategy can therefore reproduce this human kernel. Since both hypotheses share the initial history, their induced text distributions coincide for every \(n\). Consequently, for every test \(A_n\),
$$
\alpha_n^{\mathrm{rb}}(A_n)+\beta_n^{\mathrm{rb}}(A_n)
\ge (P_0^\star)^{(n)}(A_n^c)+(P_0^\star)^{(n)}(A_n)=1.
$$
Thus, whenever \(\alpha_n^{\mathrm{rb}}(A_n)\to0\), we have \(\beta_n^{\mathrm{rb}}(A_n)\to1\), and hence \(E_{\mathrm{rb}}=0\).
\end{proof}

\begin{proof}[Proof of
Theorem~\ref{thm:clean-centered-two-sided-markov}(2)]
Choose the representatives and \(t>0\) from
Lemma~\ref{lem:finite-clipped-representatives}, and write
\(Z_j:=Z_{P_0^{(j)}}\). Define
\begin{equation}
A_n^\star
:=
\left\{
\min_{1\le j\le N}
\frac1n\sum_{i=1}^n Z_j(X_i\mid S_{i-1})
\le-t
\right\}.
\label{eq:clipped-family-test}
\end{equation}
Thus the test accepts the human hypothesis when at least
one representative provides sufficiently strong human evidence.

\emph{Human rejection probability.}
For each representative, let
\[
m_j(P_0):=
\sum_s\pi_{P_0}(s)
\mathbb E_{P_0(\cdot\mid s)}[Z_j(X\mid s)].
\]
The scores are uniformly bounded on the common support:
\[
\log((1-\epsilon)m_1)
\le Z_j(x\mid s)\le-\log m_0.
\]
For every fixed \(j\) and \(u>0\), we claim that
\begin{equation}
\sup_{P_0\in\mathcal P_0}
P_0^{(n)}
\left\{
\left|
\frac1n\sum_{i=1}^n Z_j(X_i\mid S_{i-1})-m_j(P_0)
\right|>u
\right\}
\longrightarrow0.
\label{eq:clipped-score-human-convergence}
\end{equation}

To verify this, define
\[
g_{j,P_0}(s)
:=
\mathbb E_{P_0(\cdot\mid s)}[Z_j(X\mid s)]
\]
and decompose the centered average as
\begin{align*}
&\frac1n\sum_{i=1}^n Z_j(X_i\mid S_{i-1})-m_j(P_0)\\
&\quad=
\frac1n\sum_{i=1}^n
\bigl(Z_j(X_i\mid S_{i-1})-g_{j,P_0}(S_{i-1})\bigr)\\
&\qquad+
\sum_s g_{j,P_0}(s)
\left(
\frac1n\sum_{i=1}^n\mathbf 1\{S_{i-1}=s\}
-\pi_{P_0}(s)
\right).
\end{align*}
The first term is an average of martingale differences
with a common bound, so Azuma--Hoeffding gives uniform
convergence to zero. The functions \(g_{j,P_0}\) also
have a common bound. Applying
Lemma~\ref{lem:uniform-ergodicity} to each state indicator
therefore makes the second term converge uniformly to zero.
This proves \eqref{eq:clipped-score-human-convergence}.

For every human kernel, at least one representative satisfies
\(m_j(P_0)\le-2t\). If the test rejects, that representative's
average exceeds \(-t\), and hence exceeds its stationary
mean by more than \(t\). Consequently,
\[
\alpha_n^{\mathrm{rb}}(A_n^\star)
\le
\sum_{j=1}^N
\sup_{P_0\in\mathcal P_0}
P_0^{(n)}
\left\{
\frac1n\sum_{i=1}^n Z_j(X_i\mid S_{i-1})
-m_j(P_0)>t
\right\}
\longrightarrow0.
\]

\emph{Contaminated-machine acceptance probability.}
For each \(j\), define
\[
L_{j,n}
:=
\exp\left\{
-\sum_{i=1}^n Z_j(X_i\mid S_{i-1})
\right\},
\qquad L_{j,0}=1.
\]
An off-support observation sets \(L_{j,n}\) to zero.
Lemma~\ref{lem:clipped-likelihood-projection} gives
\[
\mathbb E[L_{j,n}\mid\mathcal F_{n-1}]
\le L_{j,n-1}
\]
under every admissible contamination strategy. Iterating
this inequality yields \(\mathbb E[L_{j,n}]\le1\).
Markov's inequality therefore gives
\[
Q_{1,\mathbf B}^{(n)}
\left\{
\frac1n\sum_{i=1}^n Z_j(X_i\mid S_{i-1})\le-t
\right\}
=
Q_{1,\mathbf B}^{(n)}\{L_{j,n}\ge e^{nt}\}
\le e^{-nt}.
\]
Acceptance requires this event for at least one representative.
A union bound gives
\[
\beta_n^{\mathrm{rb}}(A_n^\star)\le N e^{-nt}.
\]
Taking \(c=t/2\), the right side is at most \(e^{-cn}\)
for all sufficiently large \(n\). Together with the human
error bound, this proves part~(2) and
\(E_{\mathrm{rb}}\ge c>0\).
\end{proof}

\section{Proofs and simulations for Section~\ref{sec:general-clipped-tests}}
\label{app:score-tail-applicability}

\subsection{Proof of Theorem~\ref{thm:one-sided-clipping-improvement}}
\label{app:auxiliary-clipping}
\label{app:proof-clipping}

We first establish two auxiliary lemmas, then prove the main result.
Throughout, fix a clipping width \(r\) in the range specified in
Section~\ref{sec:general-clipped-tests} and let \(\rho_0\) be defined by
\eqref{eq:human-clipping-mass}.
In fact, \(\rho_0<1\) follows from the source assumptions rather than
requiring an additional assumption. Choose a pair \((s_*,x_*)\) with
\(\phi(x_*\mid s_*)=\overline\phi\). Since
\(\underline\phi+r<\overline\phi\), this pair is not clipped.
Appendix~\ref{app:uniform-ergodicity} gives a common integer \(L\)
such that \(\pi_{P_0}(s_*)\ge m_0^L\), while
\(p_0(x_*\mid s_*)\ge m_0\). Every human source therefore assigns
at least \(m_0^{L+1}\) stationary probability to an unclipped pair,
so \(\rho_0\le1-m_0^{L+1}<1\).

The constant \(\mu_1\) is the center of the clean conditional-mean bound,
not a stationary target mean. For a fixed \(\epsilon\in[0,1)\), define
\[
\begin{aligned}
M_0(a)&:=\sup_{P_0\in\mathcal P_0}\sum_s\pi_{P_0}(s)
 \mathbb E_{P_0(\cdot\mid s)}[\phi^{(a)}(X\mid s)],\\
M_{1,\epsilon}(a)&:=(1-\epsilon)\inf_s
 \mathbb E_{P_1(\cdot\mid s)}[\phi^{(a)}(X\mid s)]+\epsilon a,\\
\gamma(a)&:=M_{1,\epsilon}(a)-M_0(a),
\qquad \underline\phi\le a<\overline\phi.
\end{aligned}
\]
Here \(M_0(a)\) is the largest stationary human mean after clipping,
while \(M_{1,\epsilon}(a)\) is a lower bound on the conditional mean
under any admissible contamination strategy. Their difference
\(\gamma(a)\) measures the guaranteed separation: when it is positive,
a threshold between the two bounds separates human and contaminated
machine text asymptotically. At \(a=\underline\phi\), no clipping is
applied and \(M_0(\underline\phi)=\mu_0\).

\begin{lemma}
\label{lem:human-score-averages}
Under Assumptions~\ref{ass:composite-markov-model}
and~\ref{ass:common-support-ergodicity}, let \(f(x\mid s)\) be any fixed
bounded score on the common support. For every \(u>0\),
\[
\sup_{S_0\in\mathcal S}\sup_{P_0\in\mathcal P_0}
P_0^{(n)}
\left(
\left|T_n(f)-\sum_s\pi_{P_0}(s)
\mathbb E_{P_0(\cdot\mid s)}[f(X\mid s)]\right|>u
\right)\longrightarrow0.
\]
\end{lemma}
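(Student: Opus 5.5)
The plan follows the decomposition already used in the proof of Theorem~\ref{thm:clean-centered-two-sided-markov}(2), now with the representative score \(Z_j\) replaced by an arbitrary bounded score \(f\). Write \(\|f\|_\infty:=\max_{s,\,x\in\mathcal A(s)}|f(x\mid s)|\). Under any \(P_0\in\mathcal P_0\), every observed token lies in the common support almost surely, so \(T_n(f)\) is well defined and bounded by \(\|f\|_\infty\). Define \(g_{P_0}(s):=\mathbb E_{P_0(\cdot\mid s)}[f(X\mid s)]\). Then
\[
T_n(f)-\sum_s\pi_{P_0}(s)g_{P_0}(s)
=\frac1n\sum_{i=1}^n\bigl(f(X_i\mid S_{i-1})-g_{P_0}(S_{i-1})\bigr)
+\sum_s g_{P_0}(s)\Bigl(\frac1n\sum_{i=1}^n\mathbf 1\{S_{i-1}=s\}-\pi_{P_0}(s)\Bigr).
\]
By a union bound, it suffices to show that each term exceeds \(u/2\) with probability tending to zero, uniformly in \(S_0\) and \(P_0\).

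\emph{Martingale term.} The summands \(D_i:=f(X_i\mid S_{i-1})-g_{P_0}(S_{i-1})\) are martingale differences with respect to \(\mathcal F_i\), because \(X_i\mid\mathcal F_{i-1}\sim P_0(\cdot\mid S_{i-1})\). They satisfy \(|D_i|\le2\|f\|_\infty\). The Azuma--Hoeffding inequality therefore gives
\[
P_0^{(n)}\Bigl(\Bigl|\frac1n\sum_{i=1}^n D_i\Bigr|>u/2\Bigr)\le2\exp\!\Bigl(-\frac{nu^2}{32\|f\|_\infty^2}\Bigr).
\]
This bound depends on neither \(P_0\) nor \(S_0\). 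If one prefers to avoid Azuma, a second-moment bound works as well: the \(D_i\) are orthogonal, so the variance of the average is at most \(4\|f\|_\infty^2/n\), and Chebyshev's inequality finishes the argument.

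\emph{State-frequency term.} Since \(|g_{P_0}(s)|\le\|f\|_\infty\), this term is bounded in absolute value by \(\|f\|_\infty\sum_s|\hat\pi_n(s)-\pi_{P_0}(s)|\), where \(\hat\pi_n\) denotes the empirical state frequency. If this term exceeds \(u/2\), then some \(s\) has \(|\hat\pi_n(s)-\pi_{P_0}(s)|>u/(2|\mathcal S|\|f\|_\infty)\). A union bound over the finitely many states reduces the problem to indicator functions \(g=\mathbf 1\{\cdot=s\}\). These functions do not depend on \(P_0\), so Lemma~\ref{lem:uniform-ergodicity}, equation \eqref{eq:uniform-empirical-state-average}, applies directly and gives convergence to zero uniformly over \(S_0\) and \(P_0\).

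The only point needing care is that uniformity survives the dependence of \(g_{P_0}\) on \(P_0\). Passing to indicators with \(P_0\)-free coefficient bounds avoids that dependence, so I expect no real obstacle. Adding the two bounds proves the lemma.
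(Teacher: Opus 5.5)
Your proposal is correct and follows the paper's proof essentially step for step. It uses the same martingale/state-frequency decomposition, applies Azuma's inequality with a kernel-free bound to the first term, and reduces the second term to state indicators, which Lemma~\ref{lem:uniform-ergodicity} handles uniformly in \(S_0\) and \(P_0\). Your explicit \(u/2\) split is, if anything, slightly more careful than the paper, which bounds each term at level \(u\) before combining them.
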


\begin{proof}
Put \(g_{P_0}(s)=\mathbb E_{P_0(\cdot\mid s)}[f(X\mid s)]\).
Decompose the centered score average as
\[
\begin{aligned}
&T_n(f)-\sum_s\pi_{P_0}(s)g_{P_0}(s)\\
&\quad=\frac1n\sum_{i=1}^n
 \bigl[f(X_i\mid S_{i-1})-g_{P_0}(S_{i-1})\bigr]\\
&\qquad\quad+\sum_s g_{P_0}(s)
 \left[\frac1n\sum_{i=1}^n\mathbf 1\{S_{i-1}=s\}-\pi_{P_0}(s)\right].
\end{aligned}
\]
The first term measures the fluctuation of each token score around its
conditional expectation. Choose \(C>0\) such that \(|f(x\mid s)|\le C\)
on the common support. Then \(|g_{P_0}(s)|\le C\) for every kernel and
state. Each difference \(f(X_i\mid S_{i-1})-g_{P_0}(S_{i-1})\) has
conditional mean zero given \(\mathcal F_{i-1}\) and absolute value at
most \(2C\). Azuma's inequality gives, for every \(u>0\),
\[
P_0^{(n)}\left\{
\left|\frac1n\sum_{i=1}^n
[f(X_i\mid S_{i-1})-g_{P_0}(S_{i-1})]\right|>u
\right\}
\le 2\exp\left\{-\frac{nu^2}{8C^2}\right\}.
\]
This bound is independent of the human kernel and initial history, so
the first term converges to zero uniformly over both.

The second term measures the effect of fluctuations in the frequencies
of the history states. Write
\(\widehat\pi_n(s)=n^{-1}\sum_{i=1}^n\mathbf 1\{S_{i-1}=s\}\).
Although the weights \(g_{P_0}(s)\) depend on the kernel, their common
bound gives
\[
\left|\sum_s g_{P_0}(s)
[\widehat\pi_n(s)-\pi_{P_0}(s)]\right|
\le C|\mathcal S|\max_s
|\widehat\pi_n(s)-\pi_{P_0}(s)|.
\] For each fixed state \(s\),
Lemma~\ref{lem:uniform-ergodicity}, applied to the indicator of that
state, gives uniform convergence of \(\widehat\pi_n(s)\) to
\(\pi_{P_0}(s)\). Since \(\mathcal S\) is finite, for every \(v>0\),
\[
\begin{aligned}
&\sup_{S_0\in\mathcal S}\sup_{P_0\in\mathcal P_0}
P_0^{(n)}\left\{\max_s
|\widehat\pi_n(s)-\pi_{P_0}(s)|>v\right\}\\
&\quad\le\sum_{s\in\mathcal S}
\sup_{S_0\in\mathcal S}\sup_{P_0\in\mathcal P_0}
P_0^{(n)}\left\{
|\widehat\pi_n(s)-\pi_{P_0}(s)|>v\right\}
\longrightarrow0.
\end{aligned}
\]
Taking \(v=u/(C|\mathcal S|)\) proves uniform convergence of the second
term. Applying the two bounds proves the claimed
convergence of their sum. Both arguments allow an arbitrary initial
history; a stationary initial distribution is not required.

\end{proof}

\begin{lemma}
\label{lem:clipped-consistency}
Under Assumptions~\ref{ass:composite-markov-model},
\ref{ass:common-support-ergodicity},
\ref{ass:score-variation-tail}, and~\ref{ass:human-score-separation},
for every \(\underline\phi\le a\le \underline\phi+r\),
\[
M_0(a)\le\mu_0+\rho_0(a-\underline\phi),
\]
and
\begin{equation}
\gamma(a)-\gamma(\underline\phi)\ge(\epsilon-\rho_0)(a-\underline\phi),
\qquad
\gamma(\underline\phi)\ge(1-\epsilon)(\mu_1-\delta)+\epsilon \underline\phi-\mu_0.
\label{eq:clipping-gain}
\end{equation}
\end{lemma}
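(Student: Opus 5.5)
The plan is to work pointwise in the clipping level \(a\) and then pass to stationary and infimum-over-history means. The basic identity is
\[
\phi^{(a)}(x\mid s)-\phi(x\mid s)=(a-\phi(x\mid s))_+ .
\]
For \(\underline\phi\le a\le\underline\phi+r\), this increment lies in \([0,a-\underline\phi]\). It vanishes unless \(\phi(x\mid s)<a\le\underline\phi+r\).

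\emph{Human bound.} Fix \(P_0\in\mathcal P_0\). Taking the stationary expectation of the identity gives
\[
\sum_s\pi_{P_0}(s)\mathbb E_{P_0(\cdot\mid s)}[\phi^{(a)}]
\le\sum_s\pi_{P_0}(s)\mathbb E_{P_0(\cdot\mid s)}[\phi]
+(a-\underline\phi)\sum_s\pi_{P_0}(s)P_0\{\phi(X\mid s)<\underline\phi+r\mid s\}.
\]
The first term is at most \(\mu_0\) by definition. The last sum is at most \(\rho_0\) by \eqref{eq:human-clipping-mass}. Taking the supremum over \(P_0\) yields \(M_0(a)\le\mu_0+\rho_0(a-\underline\phi)\).

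\emph{Machine side.} Clipping never decreases scores, so \(\phi^{(a)}\ge\phi\). Hence
\[
\inf_s\mathbb E_{P_1(\cdot\mid s)}[\phi^{(a)}]\ge\inf_s\mathbb E_{P_1(\cdot\mid s)}[\phi].
\]
It follows that
\[
M_{1,\epsilon}(a)-M_{1,\epsilon}(\underline\phi)\ge\epsilon(a-\underline\phi).
\]
Since \(M_0(\underline\phi)=\mu_0\), subtracting the human bound gives
\[
\gamma(a)-\gamma(\underline\phi)\ge\epsilon(a-\underline\phi)-\rho_0(a-\underline\phi),
\]
which is the first inequality in \eqref{eq:clipping-gain}.

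\emph{Base case.} At \(a=\underline\phi\) nothing is clipped, and Assumption~\ref{ass:score-variation-tail} gives \(\mathbb E_{P_1(\cdot\mid s)}[\phi]\ge\mu_1-\delta\) for every \(s\). Therefore
\[
\gamma(\underline\phi)\ge(1-\epsilon)(\mu_1-\delta)+\epsilon\underline\phi-\mu_0 .
\]

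There is no real obstacle here. The one point needing care is the indicator bookkeeping in the human bound: I must use the strict inequality \(\phi<a\le\underline\phi+r\) so that the clipped event is contained in the event defining \(\rho_0\). The other point to check is that the supremum over \(P_0\) is taken after bounding each kernel separately, so that \(\mu_0\) and \(\rho_0\) may be attained at different kernels without breaking the inequality. Assumption~\ref{ass:human-score-separation} and the ergodicity results are not needed for these mean inequalities; they matter only later, when these means are converted into error bounds.
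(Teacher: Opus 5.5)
Your proof is correct and matches the paper's argument step for step: you average the pointwise bound \(0\le\phi^{(a)}-\phi\le(a-\underline\phi)\mathbf 1\{\phi<\underline\phi+r\}\) under each stationary human law and then take the supremum over \(P_0\); you use monotonicity \(\phi^{(a)}\ge\phi\) on the machine side; and you get the base case directly from Assumption~\ref{ass:score-variation-tail}. One small qualification to your closing remark: the concentration results are indeed unnecessary here, but Assumptions~\ref{ass:composite-markov-model} and~\ref{ass:common-support-ergodicity} are still what make \(\pi_{P_0}\), and hence \(\mu_0\), \(\rho_0\), and \(M_0\), well defined.
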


\begin{proof}
For \(\underline\phi\le a\le\underline\phi+r\), the definition of
clipping gives, on the common support,
\[
0\le\phi^{(a)}(x\mid s)-\phi(x\mid s)
\le(a-\underline\phi)
\mathbf 1\{\phi(x\mid s)<\underline\phi+r\}.
\]
Taking stationary human expectations and using the definition of
\(\rho_0\) in \eqref{eq:human-clipping-mass}, for every \(P_0\in\mathcal P_0\),
\[
\begin{aligned}
&\sum_s\pi_{P_0}(s)
\mathbb E_{P_0}[\phi^{(a)}-\phi\mid s]\\
&\le(a-\underline\phi)\sum_s\pi_{P_0}(s)
P_0\{\phi<\underline\phi+r\mid s\}\\
&\le \rho_0(a-\underline\phi).
\end{aligned}
\]
Adding the unclipped human mean and then taking the supremum over
\(P_0\) gives
\[
M_0(a)\le\mu_0+\rho_0(a-\underline\phi).
\]

Since \(\phi^{(a)}\ge\phi\) on the support,
\(M_{1,\epsilon}(a)-M_{1,\epsilon}(\underline\phi)\ge\epsilon(a-\underline\phi)\).
Combining this with the human mean bound above gives the
first inequality in \eqref{eq:clipping-gain}. The second inequality is a immediate result from the condition in Assumption~\ref{ass:score-variation-tail}.

\end{proof}

\begin{proof}[Proof of Theorem~\ref{thm:one-sided-clipping-improvement}(1)]
Fix \(0\le\epsilon<\epsilon_+\) and set \(a=\underline\phi+r\).
Lemma~\ref{lem:clipped-consistency} gives
\[
\begin{aligned}
\gamma(\underline\phi+r)
&\ge (1-\epsilon)(\mu_1-\delta)+\epsilon(\underline\phi+r)-\mu_0-\rho_0r\\
&\ge(\mu_1-\delta-\underline\phi-r)(\epsilon_+-\epsilon)>0.
\end{aligned}
\]
Therefore $M_{1,\epsilon}(a)>M_0(a)$. Choose \(\theta_a=\{M_0(a)+M_{1,\epsilon}(a)\}/2\), and let
\(A_n^{(a)}=\{T_n(\phi^{(a)})\le\theta_a\}\).

Under the human hypothesis, every stationary clipped mean
is at most \(M_0(a)<\theta_a\). Applying
Lemma~\ref{lem:human-score-averages} to the clipped score shows that
\(\alpha_n^{\mathrm{rb}}(A_n^{(a)})\to0\).

To bound \(\beta_n^{\mathrm{rb}}(A_n^{(a)})\), extend the clipped score to a finite surrogate
\(\widetilde\phi^{(a)}\) by assigning value \(\overline\phi\) off the support.
Under every adaptive contamination strategy, with
\(Y_i=\widetilde\phi^{(a)}(X_i\mid S_{i-1})\),
\[
\mathbb E[Y_i\mid\mathcal F_{i-1}]
\ge(1-\epsilon)\mathbb E_{P_1(\cdot\mid S_{i-1})}
 [\phi^{(a)}(X\mid S_{i-1})]+\epsilon a
\ge M_{1,\epsilon}(a).
\]
The martingale differences
\(Y_i-\mathbb E[Y_i\mid\mathcal F_{i-1}]\) have absolute value at most
\(\overline\phi-a\). Since
\(M_{1,\epsilon}(a)-\theta_a=\gamma(a)/2\), Azuma's inequality gives
\[
Q_{1,\mathbf B}^{(n)}
\left\{\frac1n\sum_{i=1}^nY_i\le\theta_a\right\}
\le \exp\left\{-\frac{n\gamma(a)^2}{8(\overline\phi-a)^2}\right\}.
\]
Since \(\widetilde\phi^{(a)}\le\phi^{(a)}\),
\[
A_n^{(a)}
=\{T_n(\phi^{(a)})\le\theta_a\}
\subseteq
\left\{\frac1n\sum_{i=1}^nY_i\le\theta_a\right\}.
\]
Taking the supremum over contamination strategies therefore gives
\[
\beta_n^{\mathrm{rb}}(A_n^{(a)})
\le \exp\left\{-\frac{n\gamma(a)^2}{8(\overline\phi-a)^2}\right\}
\longrightarrow0.
\]
\end{proof}

\begin{proof}[Proof of Theorem~\ref{thm:one-sided-clipping-improvement}(2)]
Fix \(\epsilon_-<\epsilon<1\). By the definition of \(\epsilon_-\),
\[
R_\epsilon:=(1-\epsilon)(\mu_1+\delta)+\epsilon(\underline\phi+\eta)<\mu_0.
\]

Consider any deterministic raw upper-threshold test
\(A_n^{\mathrm{raw}}=\{T_n(\phi)\le\theta_n\}\) satisfying
\(\alpha_n^{\mathrm{rb}}(A_n^{\mathrm{raw}})\to0\).

We first obtain a lower bound on \(\theta_n\).

Since \(R_\epsilon<\mu_0\), the definition of \(\mu_0\) gives a human
kernel \(P_{0,*}\in\mathcal P_0\) with stationary mean
\[
m_*:=\sum_s\pi_{P_{0,*}}(s)
\mathbb E_{P_{0,*}}[\phi(X\mid s)\mid s]>R_\epsilon.
\]
Set \(u=(m_*-R_\epsilon)/3>0\). By
Lemma~\ref{lem:human-score-averages},
\[
P_{0,*}^{(n)}\{T_n(\phi)>m_*-u\}\to1.
\]
If \(\theta_n\le m_*-u\) for infinitely many \(n\), then along those indices,
\[
\alpha_n^{\mathrm{rb}}(A_n^{\mathrm{raw}})
\ge P_{0,*}^{(n)}\{T_n(\phi)>\theta_n\}
\ge P_{0,*}^{(n)}\{T_n(\phi)>m_*-u\}\to1,
\]
contradicting \(\alpha_n^{\mathrm{rb}}(A_n^{\mathrm{raw}})\to0\).
Consequently, for all sufficiently large \(n\),
\[
\theta_n>m_*-u=R_\epsilon+2u.
\]

We now construct a contamination strategy whose conditional score mean is
at most \(R_\epsilon\). At each state choose
\(x_*(s)\in\operatorname*{arg\,min}_{x\in\mathcal A(s)}\phi(x\mid s)\).
Let \(\mathbf B_*\) place all contaminating mass on \(x_*(S_{i-1})\).
This is an admissible strategy supported on \(\mathcal A(S_{i-1})\).
Under this strategy, Assumption~\ref{ass:score-variation-tail} gives
\[
\begin{aligned}
\mathbb E[\phi(X_i\mid S_{i-1})\mid\mathcal F_{i-1}]
&=(1-\epsilon)\mathbb E_{P_1}[\phi(X\mid S_{i-1})\mid S_{i-1}]
  +\epsilon\phi(x_*(S_{i-1})\mid S_{i-1})\\
&\le(1-\epsilon)(\mu_1+\delta)
  +\epsilon(\underline\phi+\eta)
=R_\epsilon.
\end{aligned}
\]
Summing the conditional-mean bounds gives
\[
\begin{aligned}
&\sum_{i=1}^n\left(
\phi(X_i\mid S_{i-1})
-\mathbb E[\phi(X_i\mid S_{i-1})\mid\mathcal F_{i-1}]
\right)\\
&\qquad\ge\sum_{i=1}^n\phi(X_i\mid S_{i-1})-nR_\epsilon
=n\bigl(T_n(\phi)-R_\epsilon\bigr).
\end{aligned}
\]
Consequently,
\[
\{T_n(\phi)>R_\epsilon+u\}
\subseteq
\left\{
\sum_{i=1}^n\left(
\phi(X_i\mid S_{i-1})
-\mathbb E[\phi(X_i\mid S_{i-1})\mid\mathcal F_{i-1}]
\right)>nu
\right\}.
\]
The centered increments on the right are
martingale differences bounded in absolute value by
\(\overline\phi-\underline\phi\). Azuma's inequality gives
\[
Q_{1,\mathbf B_*}^{(n)}\{T_n(\phi)>R_\epsilon+u\}
\le\exp\left\{-\frac{nu^2}{2(\overline\phi-\underline\phi)^2}\right\}.
\]

Combining this concentration bound with \(\theta_n>R_\epsilon+2u\),
for all sufficiently large \(n\), gives
\[
\begin{aligned}
1\ge\beta_n^{\mathrm{rb}}(A_n^{\mathrm{raw}})
&\ge Q_{1,\mathbf B_*}^{(n)}\{T_n(\phi)\le\theta_n\}\\
&\ge1-\exp\left\{-\frac{nu^2}{2(\overline\phi-\underline\phi)^2}\right\}
\longrightarrow1.
\end{aligned}
\]
\end{proof}

\subsection{Discussion of the contamination interval}
\label{app:conditions}

We discuss when the two detection regimes overlap and the available clipping
levels.

\paragraph{A nonempty contamination interval.}
The following proposition gives a sufficient condition for the two regimes
in Theorem~\ref{thm:one-sided-clipping-improvement} to overlap. Intuitively,
this is favored by a clear score gap between clean machine and human text,
limited variation across histories, and clipping that affects few human tokens.

\begin{proposition}
\label{prop:nonempty-clipping}
Under Assumptions~\ref{ass:composite-markov-model},
\ref{ass:common-support-ergodicity},
\ref{ass:score-variation-tail}, and~\ref{ass:human-score-separation}, if
\begin{equation}
\frac{\mu_1-\mu_0}{\mu_1-\underline\phi}
>\rho_0+\frac{2\delta+\eta}{r},
\label{eq:clipping-nonempty}
\end{equation}
then the endpoints are well defined and
\(0<\epsilon_-<\epsilon_+\). In particular, the two regimes in
Theorem~\ref{thm:one-sided-clipping-improvement} have nonempty overlap.
\end{proposition}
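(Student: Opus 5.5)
The plan is a direct algebraic verification. It rests on three facts: the separation in Assumption~\ref{ass:human-score-separation}, the admissible range of \(r\) from Section~\ref{sec:general-clipped-tests}, and the bound \(\rho_0<1\) established at the start of Appendix~\ref{app:proof-clipping}. Throughout, abbreviate
\[
A:=\mu_1-\mu_0,\qquad B:=\mu_1-\underline\phi .
\]
Assumption~\ref{ass:human-score-separation} gives \(\underline\phi+\eta<\mu_0<\mu_1-\delta\). Hence \(A>\delta\ge0\), and \(B-A=\mu_0-\underline\phi>\eta\ge0\).

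\emph{Step 1: the endpoints are well defined and \(0<\epsilon_-<1\).}
For \(\epsilon_-\), the denominator is \(B+\delta-\eta\). It exceeds the numerator \(A+\delta\), because \(B-A>\eta\). The numerator is positive, so \(0<\epsilon_-<1\).
For \(\epsilon_+\), the denominator \(\mu_1-\delta-\underline\phi-r=B-\delta-r\) is positive by the choice \(r<\mu_1-\delta-\underline\phi\).
If the ratio defining \(\epsilon_+\) is at least one, then \(\epsilon_+=1>\epsilon_-\) and nothing remains to prove. So it suffices to show
\[
\frac{A+\delta}{B+\delta-\eta}<\frac{A-\delta-\rho_0r}{B-\delta-r}.
\]
This inequality also forces that ratio to be positive, so \(\epsilon_+>\epsilon_->0\).

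\emph{Step 2: cross-multiply and reduce.}
Both denominators are positive, so the target is equivalent to
\[
\Delta:=(A-\delta-\rho_0r)(B+\delta-\eta)-(A+\delta)(B-\delta-r)>0 .
\]
Expanding, the \(AB\) and \(\delta^2\) terms cancel, and the remaining terms group as
\[
\Delta=r(A-\rho_0B)-2\delta(B-A)-\eta A+\bigl[\delta\eta+\delta r(1-\rho_0)+\rho_0r\eta\bigr].
\]
The bracketed terms are nonnegative because \(\rho_0\in[0,1)\).

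\emph{Step 3: use hypothesis \eqref{eq:clipping-nonempty}.}
Multiplying \eqref{eq:clipping-nonempty} by \(rB>0\) gives \(r(A-\rho_0B)>B(2\delta+\eta)\). Therefore
\[
r(A-\rho_0B)-2\delta(B-A)-\eta A>2\delta B+\eta B-2\delta B+2\delta A-\eta A=2\delta A+\eta(B-A)\ge0,
\]
and so \(\Delta>0\). This yields \(0<\epsilon_-<\epsilon_+\).

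The overlap claim then follows immediately. Any \(\epsilon\in(\epsilon_-,\epsilon_+)\) satisfies \(\epsilon<1\) and lies in both regimes of Theorem~\ref{thm:one-sided-clipping-improvement}.

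The only delicate point is Step 2: the expansion must be grouped so that the hypothesis absorbs the \(\delta\) and \(\eta\) cross terms. The identity \(B-A=\mu_0-\underline\phi>0\) is what makes the leftover terms have the right sign.
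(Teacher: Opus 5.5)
Your proof is correct and is essentially the paper's argument: your $\Delta$ equals $(\mu_1+\delta-\underline\phi-\eta)$ times the right-hand side of the paper's identity \eqref{eq:endpoint-difference}, so the two computations coincide up to that positive normalization. The paper only arranges the algebra differently. It first shows $\epsilon_-\ge(\mu_1-\mu_0)/(\mu_1-\underline\phi)$, so that the hypothesis gives $(\epsilon_--\rho_0)r>2\delta+\eta\ge2(1-\epsilon_-)\delta+\epsilon_-\eta$, whereas you cross-multiply and apply the hypothesis directly to $r(A-\rho_0B)$. Incidentally, your argument needs only $\rho_0\le1$, not $\rho_0<1$.
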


\begin{proof}
Assumption~\ref{ass:human-score-separation} and the chosen range of \(r\)
give positive endpoint denominators and
\(0<\epsilon_-<1\). Moreover,
\[
\epsilon_--\frac{\mu_1-\mu_0}{\mu_1-\underline\phi}
=\frac{\delta(\mu_0-\underline\phi)+\eta(\mu_1-\mu_0)}
 {(\mu_1+\delta-\underline\phi-\eta)(\mu_1-\underline\phi)}\ge0.
\]
Combining this with \eqref{eq:clipping-nonempty} yields
\[
(\epsilon_--\rho_0)r>2\delta+\eta
\ge2(1-\epsilon_-)\delta+\epsilon_-\eta.
\]
Direct subtraction gives the identity
\begin{equation}
\begin{aligned}
&\mu_1-\delta-\mu_0-\rho_0r
-\epsilon_-(\mu_1-\delta-\underline\phi-r)\\
&\qquad=(\epsilon_--\rho_0)r
-2(1-\epsilon_-)\delta-\epsilon_-\eta.
\end{aligned}
\label{eq:endpoint-difference}
\end{equation}
Its right-hand side is positive by the preceding inequality. Dividing by
\(\mu_1-\delta-\underline\phi-r>0\) shows that the fraction defining
\(\epsilon_+\) exceeds \(\epsilon_-\). Since \(0<\epsilon_-<1\),
capping that fraction at one preserves the strict inequality. Thus
\(0<\epsilon_-<\epsilon_+\).
\end{proof}

Appendix~\ref{app:simulation} examines the condition in
Proposition~\ref{prop:nonempty-clipping} through finite-state simulations.
The condition holds for all four scores in the mild profile, illustrating
that it can be satisfied by history-dependent sources.

\paragraph{Other admissible clipping levels.}
Fix \(0\le\epsilon<\epsilon_+\). The clipping floor need not equal
\(\underline\phi+r\): a lower floor can also give consistency.
The proof of Theorem~\ref{thm:one-sided-clipping-improvement}(1) applies whenever
\(\underline\phi\le a\le \underline\phi+r\) and
\[
(1-\epsilon)(\mu_1-\delta)+\epsilon a
>\mu_0+\rho_0(a-\underline\phi).
\]
The left side bounds the contaminated score mean from below, and the right
side bounds the human score mean from above. Their difference is positive
at \(a=\underline\phi+r\), and its slope as \(a\) increases is
\(\epsilon-\rho_0\). This gives two cases.
\begin{enumerate}
\item \emph{\(\epsilon\le\rho_0\)}
Lowering the floor does not reduce this guaranteed separation. Hence any
\(a\in[\underline\phi,\underline\phi+r]\) satisfies the inequality,
including \(a=\underline\phi\), which leaves the score unclipped.
Thus, in this case, the raw test is already consistent, and clipping
anywhere in this band preserves consistency.

\item \emph{\(\epsilon>\rho_0\)}
Raising the floor increases the guaranteed contribution from contamination
more than the upper bound on the human mean. Solving the inequality gives
\[
a\in[\underline\phi,\underline\phi+r],
\qquad
a>\underline\phi+
\frac{\mu_0-(1-\epsilon)(\mu_1-\delta)-\epsilon\underline\phi}
{\epsilon-\rho_0}.
\]
The displayed lower bound is strictly below \(\underline\phi+r\), so
there is always an interval of admissible floors near the upper end of the
band. A lower floor gives less protection and is covered by this guarantee
only if it exceeds the displayed bound. The no-clipping choice is included
precisely when
\[
(1-\epsilon)(\mu_1-\delta)+\epsilon\underline\phi>\mu_0.
\]
\end{enumerate}
For each qualifying floor, the proof of part (1) supplies a fixed threshold
with \(\alpha_n^{\mathrm{rb}}(A_n^{(a)})\to0\) and
\(\beta_n^{\mathrm{rb}}(A_n^{(a)})\to0\). This also covers the interior clipping floors used in
Appendix~\ref{app:simulation}.

\subsection{Finite-state simulations}\label{app:simulation}

We use a finite-state simulation to show how the conditions in
Theorem~\ref{thm:one-sided-clipping-improvement} and
Proposition~\ref{prop:nonempty-clipping} can hold for history-dependent
sources and how clipping can improve detection. The accompanying code and
documentation describe the construction and implementation in detail.

\paragraph{Setup.}
We model human and machine text as order-\(K\) Markov chains on a vocabulary of \(10{,}000\) tokens, with \(K\in\{10,100\}\). At each history, both sources assign positive probability to the same \(M\in\{500,1000\}\) possible next tokens, but with different probabilities. This set changes with the most recent token, and all vocabulary tokens are reachable over time. The machine switches between two next-token probability distributions according to the preceding \(K\) tokens.

We consider three settings: mild heterogeneity, larger heterogeneity, and a tail-heavy human control. Within each conditional support, we divide the token space into 3 parts: common, intermediate, and rare token, who make up \(4\%\), \(46\%\), and \(50\%\) of the tokens respectively. The machine assigns these groups probability masses \((0.70,0.20,0.10)\) in regime zero. In regime one, the masses are \((0.695,0.203,0.102)\) under mild history variation and \((0.62,0.248,0.132)\) under larger variation. Both profiles use human masses \((0.15,0.70,0.15)\) in either regime. A third, tail-heavy human profile retains the mild machine probabilities but changes the human masses to \((0.45,0.10,0.45)\). These profiles let us examine the effects of history variation and human probability on rare tokens. Probabilities vary slightly within each group, and each profile specifies a single human kernel.

We evaluate log-likelihood, rank, log-rank, and entropy gap using the machine’s conditional probabilities. For each score, we compare its raw document average with the average after lower clipping. All four scores are oriented so that larger values favor machine text.

\paragraph{Checking the conditions.}
The known human and machine probabilities let us calculate the quantities
in the theorem exactly. For each detector, we compare a fixed set of clipping
bands and choose the one that gives the widest interval
$(\epsilon_-,\epsilon_+)$. When this interval is nonempty, we use its
midpoint as the contamination level and choose a clipping floor from the
range given in Appendix~\ref{app:conditions}. All choices are made from the
clean-source probabilities, before evaluating detection performance.

Table~\ref{tab:simulation-conditions} reports whether the chosen bands
satisfy Proposition~\ref{prop:nonempty-clipping} and whether
$\epsilon_-<\epsilon_+$. The interval is nonempty in $40$ of the $48$
configurations. With larger history variation, the proposition's condition
is not satisfied, yet the interval remains nonempty for all four scores.

\begin{table}[t]
\centering
\begin{tabular}{lcc}
\hline
Profile & Simpler sufficient inequality & Nonempty certified interval \\
\hline
Mild & $4/4$ & $4/4$ \\
Larger heterogeneity & $0/4$ & $4/4$ \\
Tail-heavy human control & $2/4$ & $2/4$ \\
\hline
\end{tabular}
\caption{Number of detectors satisfying each sufficient check, out of four,
for every $K\in\{10,100\}$ and $M\in\{500,1000\}$.}
\label{tab:simulation-conditions}
\end{table}

\paragraph{Power results.}
We test three ways of contaminating machine text. Two insert tokens with
the lowest detector score: one follows a fixed rule, while the other uses
the history to choose a token that makes regime zero occur next. The third
samples tokens from the human source. After clipping, the lower bound on the machine mean exceeds the clipped human mean in all $40$ configurations.

We generate $2{,}000$ test sequences and $4{,}000$ independent human sequences
for calibration, all starting from the all-zero history. We evaluate sequence
lengths $n\in\{128,512,2048,4096\}$. Each raw and clipped test has its own
threshold, calibrated to target a $5\%$ false-positive rate, and both tests
are evaluated on the same sequences.

Figure~\ref{fig:simulation-power} shows results for the mild profile under
the adaptive minimum-score attack. With $K=100$, $M=1000$, and $n=4096$,
the clipped tests detect between $99.85\%$ and $100\%$ of the contaminated
machine sequences, while the raw tests detect none. The observed
false-positive rates range from $4.05\%$ to $4.70\%$. Full numerical
results are included with the code.

\begin{figure}[t]
\centering
\includegraphics[width=\linewidth]{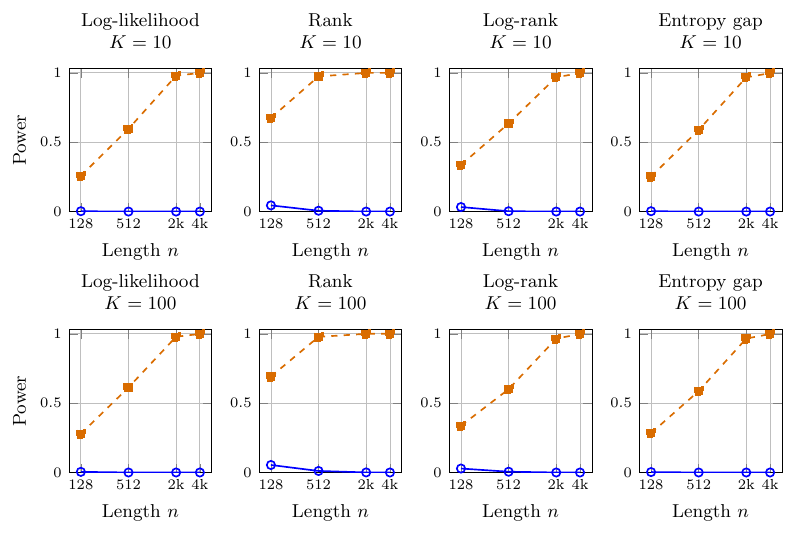}
\caption{Estimated detection power as sequence length increases, for the
mild profile with $M=1000$ under the adaptive minimum-score attack.
Blue solid curves show raw tests; orange dashed curves show clipped tests.
Each detector uses its selected contamination level, and each test is
calibrated to target a $5\%$ false-positive rate. The top and bottom rows
use $K=10$ and $K=100$, respectively.}
\label{fig:simulation-power}
\end{figure}

\section{Statistical methods}
\label{app:protocol}
This appendix explains how we fit and evaluate the detectors.
Appendix ~\ref{app:stat-splitting} describes how related texts are
kept in the same data split and how score directions are chosen.
Appendix~\ref{app:stat-clipping} gives the candidate clipping bounds
and the objectives used to select them in each study.
Appendix~\ref{app:stat-calibration} explains how human calibration
texts determine the detection thresholds.
Finally, Appendix~\ref{app:stat-bootstrap} describes how paired
source-level resampling gives confidence intervals for TPR, FPR,
and their changes after clipping.

\subsection{Data splitting and score direction}
\label{app:stat-splitting}
When splitting the dataset into tuning, calibration and testing, our basic data unit contains three things: a human document, its machine-generated counterpart, and all modified versions of that counterpart. We keep the whole group in the same split to prevent related texts from appearing in both fitting and testing. The tuning set determines the clipping bounds and score directions; the calibration set determines the thresholds; the test set measures performance. Modified machine texts are still labeled machine-generated.

For detectors whose direction is learned, we compare their mean scores on clean human and machine tuning texts. We use \(d=+1\) if the machine mean is at least as large, and \(d=-1\) otherwise. LRR and Binoculars use fixed directions \(+1\) and \(-1\), following their original designs \citep{su-etal-2023-detectllm,hans2024binoculars}. Directions do not change with the contamination rate.

As shown in Table~\ref{tab:primary-methods}, all detectors except entropy have the same direction across datasets. Entropy measures uncertainty in the model's next-token predictions,
rather than the probability assigned to the observed token.
There is therefore no general requirement that machine text have
higher or lower entropy than human text. In our clean tuning data, entropy's learned direction is positive
in some settings and negative in others
(Appendix~\ref{app:selected-directions}).

\subsection{Choosing clipping bounds}
\label{app:stat-clipping}

We choose candidate clipping bounds from token scores in the clean
human and machine tuning texts, using the quantile levels
$$
\mathcal Q=\{0.80,0.85,0.90,0.95,0.975,0.99,0.995\}.
$$
For additive scores, we clip the lower tail of the oriented token
scores, using the $(1-q)$-quantile as the bound.
For Binoculars, we cap $-\log p_i$ at its $q$-quantile.
LRR caps both $-\log p_i$ and $\log r_i$; seven choices for each
give 49 candidate pairs.

We also allow the scores to remain unclipped and choose this option
when it performs equally well. We exclude bounds that give every
clean tuning document the same score, and require LRR's denominator
cap to be positive.

Let \(\AUC(H,M;c)\) be the document-level AUROC of candidate \(c\), evaluated against human texts \(H\) and machine texts \(M\). In the controlled study, we maximize
$$
0.8\AUC(H,M_{\mathrm{mix}};c)
+0.2\AUC(H,M_{\mathrm{clean}};c).
$$
The mixture contains tuning texts at 10, 20, 30, 40, and 50\% replacement under both random and tail constructions. The 5\% condition is tested but not used in this objective. Since each source has three random variants and one tail variant per rate, random texts receive three times the total weight of tail texts. The selected bounds are then used at every rate and both FPR targets within that dataset--generator combination. 

For RAID, we select bounds using detection at a target 5\% FPR:
$$
0.8\,\widehat{\mathrm{TPR}}_{\mathrm{attacked,CV}}(c)
+0.2\,\widehat{\mathrm{TPR}}_{\mathrm{clean,CV}}(c).
$$
We divide tuning sources into five folds, balanced within each
domain, using a fixed random seed. All versions of a
source stay in the same fold. For each candidate, human texts in
four folds determine a separate threshold for each domain. We use
these thresholds to classify clean and attacked machine texts in
the remaining fold, and repeat until every fold has been held out.
We add the detected-text counts across folds and divide by the
corresponding total number of texts to obtain each cross-fitted TPR.
Every attacked text in the fitting set has the same weight; we do
not give equal weight to attack types with different sample sizes.

The candidate bounds and score directions are determined from
all clean tuning texts before this comparison. Thus, only threshold
estimation is cross-fitted. Neither the independent calibration set
nor the test set enters the candidate objective. There is no
additional FPR penalty or hard limit on the loss in clean AUROC.
The clean TPR term provides the clean-performance component of the
objective; it does not guarantee that clean test TPR is preserved.

Rate-specific fitting uses one of the intervals
\((0,0.05]\), \((0.05,0.10]\), \((0.10,0.20]\), and
\((0.20,0.50]\). Each interval must contain at least 250 attacked
tuning texts; otherwise fitting stops, without substituting a
universal bound. Attack-specific fitting uses all tuning texts
from one attack, including its measured-zero and greater-than-50\%
modifications. We fit one specification per detector and interval,
or per detector and attack, not an attack-by-rate grid.

For the secondary comparisons, full-universal fitting pools all
eleven attacks. Eligible-universal fitting pools attacked texts
with \(0<\rho\leq0.5\). Both use the same clean tuning references,
candidate bounds, folds, and objective as the specific fits.
The two universal rules are compared on the same test populations.
With seven detectors, the four scopes give 28 rate-specific,
77 attack-specific, seven full-universal, and seven eligible-universal
specifications.

\subsection{Setting detection thresholds}
\label{app:stat-calibration}
Thresholds are chosen using human calibration texts only. For target FPR \(\alpha\), sort their \(m\) scores increasingly and set \(k=\lfloor\alpha m\rfloor\). A text is classified as machine-generated if its score exceeds the \((m-k)\)-th score. This allows at most \(k\) false positives in calibration; ties can make the rule more conservative. Raw and clipped scores are calibrated separately.

The controlled study uses targets of 1\% and 5\% within each dataset--generator combination. RAID uses a 5\% target and calibrates separately in each domain. Overall RAID TPR and FPR are computed from all test decisions, so domains contribute according to their sample sizes. Test FPR may differ from the calibration target. We therefore report achieved FPR alongside TPR changes. 

\subsection{Confidence intervals}
\label{app:stat-bootstrap}
We use 2,000 paired source-cluster bootstrap resamples. Each resample draws test sources with replacement and keeps all texts from a drawn source together. Raw and clipped results use the same draw, and their difference is calculated before taking the 2.5th and 97.5th percentiles. The controlled study resamples within each dataset--generator combination; RAID resamples within domains, preserving domain sample sizes.

Directions, clipping bounds, and thresholds are held fixed. The intervals therefore describe variation in the test sample, not uncertainty from fitting the method. They are pointwise intervals without adjustment for multiple comparisons. Means across the nine controlled settings are descriptive summaries; we do not attach newly pooled confidence intervals to them.

For RAID, the attacked TPR pools the test texts belonging to the
reported attack, interval, or universal comparison population.
Across all eleven attacks, this also equals an equal-attack mean,
because each source has one version of every attack. Within a rate
interval or the eligible population, attack counts can differ, so
the pooled TPR need not equal an equal-attack mean. Each bootstrap
draw gives the same source weight to all of its included versions.
Human FPR counts each source's human document once.

Intervals for changes pair raw and clipped decisions within the
same reported comparison. They are not intervals for differences
between two fitted scopes or two selected specifications.
Each fit is evaluated on the same clean test population, and its
human FPR is repeated across that fit's attacked test conditions.
These repeated entries do not constitute additional human samples.
The reported FPRs describe the separate fitted rules, not a combined
system that automatically chooses a rate interval or attack label.

\section{Experimental details}
\label{app:data}
This appendix describes the texts and modifications used in the
experiments. Appendix~\ref{app:exp-datasets} introduces the datasets
and explains how human texts are selected and grouped.
Appendix~\ref{app:exp-models} specifies the generators, prompting
and decoding settings, and the text windows used for scoring.
Appendix~\ref{app:exp-contamination} explains how human donor spans
replace randomly located windows or the end of a machine
continuation in the controlled study.
Appendix~\ref{app:exp-raid} describes the selection of complete RAID
families, the eleven attack mechanisms, and the token-level edit
rate used to measure their realized modifications.

\subsection{Datasets and sample construction}
\label{app:exp-datasets}
The controlled study uses three English datasets with contrasting writing styles. XSum supplies news articles, SQuAD supplies Wikipedia contexts, and WritingPrompts supplies creative stories. Each is crossed with three generators. 

\paragraph{Dataset fields and filtering.}
We use news articles from the XSum validation set
\citep{narayan2018xsum}, Wikipedia passages from the SQuAD training
set \citep{rajpurkar2016squad}, and stories from the WritingPrompts
validation set \citep{fan2018writingprompts}.
For SQuAD, we remove repeated passages and combine short passages,
separated by blank lines, until they meet the minimum length
requirement. Each passage is used only once. Across all three
datasets, the resulting source texts must contain at least
260 words; shorter XSum articles and WritingPrompts stories
are excluded.

We assign 500 pairs to tuning, 500 to calibration, and 2,000 to
testing. Each pair and all its contaminated versions remain
together in the same split.

\paragraph{RAID domains.}
The labeled English RAID release supplies arXiv abstracts, book plot summaries, BBC news, PoemHunter poems, recipes, Reddit posts, IMDb reviews, and Wikipedia introductions \citep[Appendix~E.1]{dugan2024raid}. Unlike the controlled study, RAID supplies existing machine outputs and attacked versions; we do not regenerate them. Its family selection and final sample sizes are specified below.

\subsection{Models, generation, and scoring}
\label{app:exp-models}

The controlled study uses three base generators:
\begin{center}\small
\begin{tabular}{@{}ll@{}}
\toprule
Generator & Model identifier \\
\midrule
Granite 8B & \path{ibm-granite/granite-3.3-8b-base} \\
Mistral 24B & \path{mistralai/Mistral-Small-24B-Base-2501} \\
Qwen 32B & \path{Qwen/Qwen2.5-32B} \\
\bottomrule
\end{tabular}
\end{center}
The target tokenizer encodes the source without added special tokens. Its first nominal 30 tokens supply the prompt, and the following 220 supply the available human continuation. No chat template or task instruction is added. Decode/re-encode checks mean that the nominal prompt slice need not remain exactly 30 IDs. Generation uses BF16 with temperature 0.8, top-\(p\) 0.95, and a requested minimum of 210 and maximum of 220 new tokens. Other generation defaults are not inferred from these settings. EOS is removed, and human/machine arrays are cut to their common available length, at most 220, before contamination.

\paragraph{Scorers and context.}
The six single-model detectors in each controlled-study cell use that cell's generator distribution, before temperature and top-\(p\) transformations. Binoculars always uses Falcon-7B as observer and Falcon-7B-Instruct as performer \citep{hans2024binoculars}. Prompt and continuation are tokenized separately without added special tokens and concatenated. The prompt supplies context but contributes no score terms. Sequences are limited to 512 IDs by right-truncating the continuation; 220 generator tokens can have a different length under the Falcon tokenizer.

In RAID, we score each document without an additional prompt.
Falcon-7B provides the six single-model scores, and the Falcon-7B
observer and Falcon-7B-Instruct performer provide the Binoculars
score. We retain up to 512 tokens. The Binoculars numerator
averages next-token losses over up to 511 predicted tokens,
while its denominator averages cross-entropy over all non-padding
input positions. When EOS and padding share a token ID, EOS
positions are also excluded from the denominator.
In the controlled study, the prompt provides context, and both
components are averaged over the continuation tokens.

\subsection{Controlled contamination construction}
\label{app:exp-contamination}

We now explain how we replace parts of a machine continuation
with passages from its matched human continuation. We use two
constructions: random replacement changes passages at randomly
chosen locations, while tail replacement changes the end of the text.

We divide the human continuation into passages at sentence-ending
punctuation and line breaks, then tokenize each passage.
For a machine continuation of \(n\) tokens and replacement rate \(r\),
we replace approximately \(nr\) tokens, rounded to the nearest
integer. We use rates of 5, 10, 20, 30, 40, and 50\%.
The original machine continuation provides the 0\% baseline.
The replacements keep the token-array length unchanged, although
decoding and re-tokenizing the resulting text can change its length.

\paragraph{Random replacement.}
At each rate, we create three versions. For each version, we shuffle
the human passages and take enough tokens to fill the replacement
budget. We then place these passages in randomly located,
non-overlapping windows of the machine continuation. The last passage
is shortened if necessary, and human passages can be reused if more
tokens are needed.

\paragraph{Tail replacement.}
We use the generating model to score each human passage, given the
original prompt. Passages with larger mean \(-\log p_i\) are more
surprising to the model. We select these passages first and use them
to replace the final \(nr\) tokens of the machine continuation.
The passage ranking is computed once and reused across replacement
rates.

Each source contributes one clean human text, one clean machine text,
and four modified versions at each of six rates. This gives
\(2+4\times6=26\) texts per source and 78,000 texts per
dataset--generator combination. In the test set, each positive rate
has 6,000 random-replacement texts and 2,000 tail-replacement texts,
all derived from the same 2,000 sources.

We do not revise the text to repair grammar after replacement.
The recorded replacement rate is the number of donor tokens divided
by the final re-tokenized length. All versions of a source are kept
together when splitting the data and computing confidence intervals.

\subsection{RAID selection, attacks, and modification rates}
\label{app:exp-raid}

Now we explain our RAID experiments.  We use the machine outputs and attacked versions
already provided by the benchmark.

\paragraph{Sample selection.}
Our labeled English release contains 13,371 sources. We exclude
the 500 sources used in pilot experiments, leaving 12,871.
For each source, we require a human reference and at least one
complete machine family: a clean machine output together with
all eleven attacked versions. We randomly select one complete
family, giving each eligible family the same selection probability.
The attacked versions must belong to the selected output and
match its generator and generation settings.

We split sources within each domain, assigning 5,148 to tuning,
2,574 to calibration, and 5,149 to testing.
The human reference, clean machine output, and eleven attacked
versions remain in the same split. These thirteen texts per source
give 167,323 texts in total. Human calibration texts determine
the detection thresholds separately for each domain.
Appendix~\ref{app:raid} reports the sample composition by domain
and generator.

\paragraph{Attack mechanisms.}
Table~\ref{tab:raid-attack-mechanisms} describes the eleven attacks
provided by RAID \citep[Appendix~E.4]{dugan2024raid}. They include
changes to spelling, wording, capitalization, and spacing, as well
as sentence rewriting.

\begin{table}[htbp]
\centering
\small
\caption{RAID attacks and the changes they make.}
\label{tab:raid-attack-mechanisms}
\begin{tabular}{@{}p{0.29\linewidth}p{0.65\linewidth}@{}}
\toprule
Attack & Description \\
\midrule
Alternative spelling & Changes American spellings to British spellings. \\
Article deletion & Removes selected occurrences of a, an, or the. \\
Homoglyph & Replaces characters with visually similar Unicode characters. \\
Insert paragraphs & Adds paragraph breaks between selected sentences. \\
Number & Replaces digits in selected numbers with sampled digits. \\
Paraphrase & Rewrites the text using DIPPER-11B. \\
Perplexity misspelling & Introduces dictionary misspellings selected using GPT-2-small likelihood. \\
Synonym & Replaces selected words with contextually similar words. \\
Upper/lower & Changes the capitalization of selected initial letters. \\
Whitespace & Adds spaces between tokens; a location can be selected more than once. \\
Zero-width space & Inserts invisible U+200B characters between characters. \\
\bottomrule
\end{tabular}
\end{table}

\paragraph{Measuring modification.}
We compare each attacked output with its clean machine counterpart
using Falcon tokens. The comparison uses the scored portion of each
text after truncation, excluding the first token, which supplies
context. Write these token sequences as \(T_s(x)\) and
\(T_s(\widetilde x)\). We define
$$
\rho(x,\widetilde x)=
\frac{d_{\mathrm{Lev}}(T_s(x),T_s(\widetilde x))}
{\max\{|T_s(x)|,|T_s(\widetilde x)|\}}.
$$
Here \(d_{\mathrm{Lev}}\) is the smallest number of token insertions,
deletions, and substitutions needed to turn one sequence into the
other. Dividing by the longer sequence length gives a modification
rate between zero and one.

All attacked texts remain in attack-level testing, including those
with \(\rho=0\) or \(\rho>0.5\).

\paragraph{Information used for fitting and testing.}
Our main comparisons consider two kinds of information about the
modification. Rate-specific fitting assumes that a range for
\(\rho\) is available and learns a separate specification in each
of four intervals: \((0,0.05]\), \((0.05,0.10]\),
\((0.10,0.20]\), and \((0.20,0.50]\). Each specification is tested
on attacked texts in its corresponding interval.
Attack-specific fitting assumes that the attack mechanism is
available and learns one specification from each of the eleven
attacks, pooling that attack's entire modification-rate range.
Each is tested on the corresponding attack. We do not fit a
separate specification for every combination of attack and interval.

The benchmark provides the clean machine counterpart and the
attack label. The clean counterpart allows us to calculate \(\rho\);
such a counterpart may be unavailable for a new document in use.
These comparisons therefore assess the value of the specified
information, without estimating the rate interval or identifying
the attack from the observed document.

The secondary full-universal rule uses all attacked tuning texts.
The eligible-universal rule uses only those with
\(0<\rho\leq0.5\), excluding measured-zero and denser modifications
from its attacked fitting term. Each rule has one specification
per detector and is evaluated on the same full, eligible, attack,
and rate test populations. Clean machine texts enter the clean
objective term in all four scopes, and independent human
calibration texts set the final thresholds. The fitting procedure
is described in Appendix~\ref{app:stat-clipping}.

\section{Controlled contamination study: complete results and secondary diagnostics}
\label{app:primary}
% Canonical self-contained appendix: edit prose and tables here.

This appendix presents the detailed controlled-study results for
all seven detectors across the nine dataset--generator combinations.
The tables report TPR and FPR as percentages. To summarize performance
across contamination levels, we also report the normalized area under
the TPR--replacement-rate curve from 0\% to 50\%, scaled to 0--100.
Larger areas indicate better overall detection across these rates.
Random and tail results share the same clean baseline, and summary
averages give equal weight to each dataset--generator combination.

\subsection{Clean operating points and robustness at target 5\% FPR}
The normalized robustness area is \(R=(1/0.5)\int_0^{0.5}\TPR(r)\,dr\), evaluated by trapezoidal interpolation at requested rates 0, 5, 10, 20, 30, 40, and 50\%. It summarizes TPR versus replacement rate.
\begingroup\small\setlength{\tabcolsep}{4pt}
\begin{longtable}{llrrrrrr}
\toprule
Cell & Detector & \multicolumn{2}{c}{Clean TPR} & \multicolumn{2}{c}{Actual FPR} & \multicolumn{2}{c}{Area change} \\
& & Raw & Clip & Raw & Clip & Random & Tail \\
\midrule\endfirsthead
\toprule
Cell & Detector & \multicolumn{2}{c}{Clean TPR} & \multicolumn{2}{c}{Actual FPR} & \multicolumn{2}{c}{Area change} \\
& & Raw & Clip & Raw & Clip & Random & Tail \\
\midrule\endhead
\bottomrule\endfoot
XSum / Granite 8B & Log likelihood & 79.1 & 66.5 & 4.7 & 4.2 & +1.3 & +1.8 \\
 & Rank & 99.8 & 80.9 & 3.8 & 3.8 & +8.7 & +15.3 \\
 & Log rank & 96.0 & 83.5 & 4.7 & 4.0 & +1.6 & +2.1 \\
 & LRR & 99.4 & 90.0 & 5.2 & 4.6 & +22.3 & +21.9 \\
 & Entropy & 8.0 & 8.2 & 4.3 & 4.6 & +0.5 & +0.5 \\
 & Entropy gap & 99.9 & 99.1 & 6.7 & 5.0 & +30.3 & +9.0 \\
 & Binoculars & 97.2 & 97.7 & 5.1 & 5.2 & +43.7 & +24.4 \\
\addlinespace[3pt]
XSum / Mistral 24B & Log likelihood & 75.6 & 67.0 & 4.1 & 4.3 & +2.3 & +3.0 \\
 & Rank & 99.8 & 95.7 & 3.1 & 4.0 & +13.4 & +19.6 \\
 & Log rank & 94.0 & 82.7 & 4.3 & 4.3 & +3.2 & +3.9 \\
 & LRR & 99.5 & 80.5 & 4.7 & 6.3 & +26.0 & +17.9 \\
 & Entropy & 7.2 & 7.4 & 3.9 & 3.9 & +1.5 & +1.4 \\
 & Entropy gap & 99.7 & 98.3 & 3.5 & 5.5 & +38.8 & +20.2 \\
 & Binoculars & 98.8 & 98.1 & 3.9 & 5.6 & +41.5 & +24.3 \\
\addlinespace[3pt]
XSum / Qwen 32B & Log likelihood & 95.0 & 91.1 & 5.8 & 6.0 & +7.0 & +9.1 \\
 & Rank & 100.0 & 96.0 & 3.4 & 4.8 & +21.4 & +31.1 \\
 & Log rank & 99.5 & 96.5 & 4.6 & 5.7 & +7.7 & +10.6 \\
 & LRR & 100.0 & 86.4 & 5.9 & 6.4 & +22.8 & +17.3 \\
 & Entropy & 19.3 & 19.8 & 5.6 & 5.8 & +1.0 & +1.1 \\
 & Entropy gap & 97.1 & 95.9 & 4.3 & 5.5 & +39.9 & +25.2 \\
 & Binoculars & 99.0 & 98.6 & 5.0 & 4.7 & +33.8 & +18.3 \\
\addlinespace[3pt]
SQuAD / Granite 8B & Log likelihood & 2.9 & 2.8 & 5.9 & 5.9 & -0.0 & +0.1 \\
 & Rank & 2.8 & 2.8 & 5.9 & 5.9 & +0.0 & +0.0 \\
 & Log rank & 2.6 & 2.5 & 5.6 & 5.8 & -0.0 & +0.0 \\
 & LRR & 5.7 & 14.1 & 6.0 & 6.9 & +1.6 & +2.2 \\
 & Entropy & 9.2 & 9.2 & 6.4 & 6.4 & -0.0 & +0.0 \\
 & Entropy gap & 87.1 & 79.4 & 5.3 & 4.3 & +29.8 & +13.7 \\
 & Binoculars & 84.2 & 82.1 & 4.5 & 5.2 & +33.2 & +20.0 \\
\addlinespace[3pt]
SQuAD / Mistral 24B & Log likelihood & 3.1 & 3.0 & 4.3 & 4.3 & +0.2 & +0.4 \\
 & Rank & 27.8 & 6.8 & 4.1 & 4.8 & -0.9 & -0.6 \\
 & Log rank & 4.5 & 4.0 & 4.7 & 4.9 & +0.1 & +0.2 \\
 & LRR & 29.4 & 32.9 & 4.2 & 4.5 & +3.2 & +4.1 \\
 & Entropy & 5.3 & 5.3 & 6.9 & 6.9 & +0.0 & +0.0 \\
 & Entropy gap & 96.5 & 93.8 & 5.1 & 6.2 & +39.3 & +26.4 \\
 & Binoculars & 94.9 & 92.8 & 3.6 & 4.7 & +28.4 & +20.6 \\
\addlinespace[3pt]
SQuAD / Qwen 32B & Log likelihood & 7.3 & 7.1 & 7.0 & 7.5 & -0.0 & +0.5 \\
 & Rank & 6.4 & 6.4 & 6.7 & 6.7 & +0.0 & +0.0 \\
 & Log rank & 6.4 & 6.4 & 6.7 & 6.7 & +0.0 & +0.0 \\
 & LRR & 6.9 & 6.6 & 6.9 & 7.0 & +0.0 & +0.3 \\
 & Entropy & 8.9 & 8.9 & 7.0 & 7.0 & -0.0 & +0.0 \\
 & Entropy gap & 46.9 & 30.7 & 6.4 & 8.5 & +31.1 & +18.3 \\
 & Binoculars & 55.1 & 64.7 & 2.9 & 6.7 & +48.4 & +29.6 \\
\addlinespace[3pt]
WritingPrompts / Granite 8B & Log likelihood & 100.0 & 99.8 & 5.7 & 5.9 & +5.6 & +7.5 \\
 & Rank & 99.9 & 100.0 & 4.3 & 5.2 & +33.4 & +49.1 \\
 & Log rank & 100.0 & 100.0 & 6.0 & 5.8 & +6.5 & +9.4 \\
 & LRR & 100.0 & 99.2 & 3.5 & 3.9 & +27.9 & +27.6 \\
 & Entropy & 34.3 & 36.5 & 4.7 & 6.2 & +1.6 & +2.7 \\
 & Entropy gap & 99.9 & 98.8 & 5.0 & 4.8 & +27.7 & +8.7 \\
 & Binoculars & 99.8 & 99.8 & 5.6 & 4.2 & +22.5 & +11.4 \\
\addlinespace[3pt]
WritingPrompts / Mistral 24B & Log likelihood & 100.0 & 99.7 & 5.1 & 5.0 & +6.1 & +7.7 \\
 & Rank & 99.9 & 99.9 & 4.3 & 4.2 & +35.0 & +46.5 \\
 & Log rank & 100.0 & 99.9 & 4.6 & 4.7 & +7.6 & +10.3 \\
 & LRR & 100.0 & 99.4 & 4.1 & 4.1 & +29.5 & +26.8 \\
 & Entropy & 44.2 & 48.4 & 4.0 & 4.4 & +2.1 & +3.4 \\
 & Entropy gap & 99.8 & 97.5 & 5.5 & 4.7 & +27.1 & +13.6 \\
 & Binoculars & 99.9 & 99.9 & 5.0 & 3.6 & +18.6 & +10.8 \\
\addlinespace[3pt]
WritingPrompts / Qwen 32B & Log likelihood & 100.0 & 100.0 & 4.2 & 4.6 & +9.5 & +11.6 \\
 & Rank & 100.0 & 100.0 & 4.8 & 3.6 & +42.8 & +61.9 \\
 & Log rank & 100.0 & 100.0 & 4.7 & 3.9 & +11.0 & +15.4 \\
 & LRR & 100.0 & 100.0 & 3.5 & 4.1 & +31.5 & +30.8 \\
 & Entropy & 74.2 & 74.9 & 3.8 & 4.3 & +3.1 & +5.6 \\
 & Entropy gap & 97.2 & 94.5 & 5.1 & 5.2 & +29.3 & +20.1 \\
 & Binoculars & 99.5 & 99.2 & 4.8 & 3.5 & +21.5 & +12.8 \\
\end{longtable}

\endgroup
\Needspace{16\baselineskip}

\subsection{Secondary operating points at target 1\% FPR}
The next table shows detection performance at a target FPR of
1\%. With 500 human calibration texts, this allows at most five
false positives during calibration. We report achieved test FPR
alongside TPR because it can differ from the calibration target.
\begingroup\small\setlength{\tabcolsep}{4pt}
\begin{longtable}{llrrrrrr}
\toprule
Cell & Detector & \multicolumn{2}{c}{Clean TPR} & \multicolumn{2}{c}{Actual FPR} & \multicolumn{2}{c}{Area change} \\
& & Raw & Clip & Raw & Clip & Random & Tail \\
\midrule\endfirsthead
\toprule
Cell & Detector & \multicolumn{2}{c}{Clean TPR} & \multicolumn{2}{c}{Actual FPR} & \multicolumn{2}{c}{Area change} \\
& & Raw & Clip & Raw & Clip & Random & Tail \\
\midrule\endhead
\bottomrule\endfoot
XSum / Granite 8B & Log likelihood & 44.5 & 36.3 & 0.9 & 1.1 & +0.5 & +0.8 \\
 & Rank & 99.6 & 51.3 & 0.5 & 0.9 & +0.7 & +2.9 \\
 & Log rank & 74.9 & 51.9 & 1.1 & 0.9 & -0.5 & -0.6 \\
 & LRR & 94.1 & 74.1 & 0.8 & 0.7 & +12.5 & +14.9 \\
 & Entropy & 1.4 & 1.4 & 0.7 & 0.7 & +0.3 & +0.3 \\
 & Entropy gap & 99.7 & 98.7 & 2.4 & 1.8 & +36.6 & +14.3 \\
 & Binoculars & 94.5 & 93.3 & 1.0 & 2.0 & +41.8 & +25.9 \\
\addlinespace[3pt]
XSum / Mistral 24B & Log likelihood & 29.5 & 24.6 & 0.8 & 0.6 & +0.4 & +0.6 \\
 & Rank & 98.9 & 79.5 & 0.4 & 1.0 & +4.2 & +6.8 \\
 & Log rank & 56.0 & 45.1 & 0.8 & 0.9 & +1.2 & +1.8 \\
 & LRR & 98.2 & 64.8 & 1.8 & 1.9 & +12.1 & +7.3 \\
 & Entropy & 2.6 & 2.7 & 0.9 & 0.9 & -0.1 & -0.0 \\
 & Entropy gap & 99.2 & 97.4 & 0.9 & 1.9 & +44.3 & +27.1 \\
 & Binoculars & 97.5 & 95.0 & 1.0 & 1.6 & +42.5 & +28.0 \\
\addlinespace[3pt]
XSum / Qwen 32B & Log likelihood & 54.8 & 49.5 & 1.1 & 1.2 & +2.4 & +3.3 \\
 & Rank & 100.0 & 64.2 & 0.5 & 0.8 & +4.2 & +7.6 \\
 & Log rank & 84.2 & 66.5 & 0.9 & 0.9 & +1.5 & +2.4 \\
 & LRR & 99.0 & 75.3 & 1.2 & 2.1 & +18.3 & +19.0 \\
 & Entropy & 11.6 & 11.7 & 1.1 & 1.1 & +0.3 & +0.4 \\
 & Entropy gap & 95.3 & 93.5 & 1.2 & 1.5 & +37.3 & +27.7 \\
 & Binoculars & 97.6 & 96.8 & 1.1 & 1.3 & +32.1 & +20.2 \\
\addlinespace[3pt]
SQuAD / Granite 8B & Log likelihood & 0.4 & 0.4 & 0.9 & 0.9 & +0.0 & +0.0 \\
 & Rank & 0.7 & 0.7 & 1.4 & 1.4 & +0.0 & +0.0 \\
 & Log rank & 0.7 & 0.7 & 1.4 & 1.4 & +0.0 & +0.0 \\
 & LRR & 0.5 & 0.6 & 1.2 & 1.2 & +0.0 & +0.0 \\
 & Entropy & 1.7 & 1.7 & 1.1 & 1.1 & +0.0 & +0.0 \\
 & Entropy gap & 81.2 & 73.1 & 1.2 & 1.2 & +30.6 & +17.4 \\
 & Binoculars & 76.2 & 71.7 & 1.1 & 0.9 & +24.1 & +15.0 \\
\addlinespace[3pt]
SQuAD / Mistral 24B & Log likelihood & 1.1 & 1.1 & 1.5 & 1.5 & +0.0 & +0.1 \\
 & Rank & 1.1 & 1.1 & 1.0 & 1.1 & +0.0 & +0.0 \\
 & Log rank & 1.1 & 1.1 & 1.1 & 1.2 & +0.0 & +0.0 \\
 & LRR & 1.3 & 1.9 & 0.5 & 1.1 & +0.1 & +0.1 \\
 & Entropy & 1.6 & 1.6 & 2.5 & 2.5 & +0.0 & +0.0 \\
 & Entropy gap & 93.8 & 89.2 & 1.6 & 1.1 & +35.7 & +23.5 \\
 & Binoculars & 90.7 & 88.8 & 1.4 & 1.5 & +30.7 & +23.0 \\
\addlinespace[3pt]
SQuAD / Qwen 32B & Log likelihood & 2.2 & 2.2 & 2.1 & 2.1 & +0.0 & +0.0 \\
 & Rank & 0.0 & 0.0 & 0.0 & 0.0 & +0.0 & +0.0 \\
 & Log rank & 0.0 & 0.0 & 0.0 & 0.0 & +0.0 & +0.0 \\
 & LRR & 1.2 & 1.2 & 1.5 & 1.5 & +0.0 & +0.0 \\
 & Entropy & 2.8 & 2.8 & 1.9 & 1.9 & +0.0 & +0.0 \\
 & Entropy gap & 37.8 & 18.6 & 1.5 & 2.4 & +15.8 & +8.7 \\
 & Binoculars & 43.8 & 46.9 & 1.1 & 0.9 & +29.2 & +15.6 \\
\addlinespace[3pt]
WritingPrompts / Granite 8B & Log likelihood & 94.2 & 78.5 & 0.3 & 0.2 & -0.4 & -0.5 \\
 & Rank & 99.8 & 99.0 & 0.9 & 0.3 & +14.7 & +22.5 \\
 & Log rank & 99.9 & 96.4 & 0.6 & 0.6 & +1.3 & +2.7 \\
 & LRR & 99.9 & 97.5 & 0.8 & 0.9 & +20.8 & +23.6 \\
 & Entropy & 17.4 & 17.6 & 0.9 & 1.0 & +0.5 & +1.0 \\
 & Entropy gap & 99.6 & 98.5 & 1.1 & 1.6 & +35.8 & +16.7 \\
 & Binoculars & 99.5 & 99.5 & 0.8 & 1.2 & +26.0 & +17.2 \\
\addlinespace[3pt]
WritingPrompts / Mistral 24B & Log likelihood & 96.5 & 95.1 & 0.4 & 0.7 & +6.0 & +8.1 \\
 & Rank & 99.8 & 97.5 & 1.0 & 0.6 & +20.9 & +27.3 \\
 & Log rank & 99.7 & 97.2 & 0.4 & 0.6 & +5.3 & +7.5 \\
 & LRR & 100.0 & 98.2 & 0.9 & 1.5 & +26.5 & +26.4 \\
 & Entropy & 30.0 & 29.8 & 1.1 & 0.8 & +0.5 & +1.0 \\
 & Entropy gap & 98.7 & 95.5 & 0.9 & 0.9 & +35.3 & +21.2 \\
 & Binoculars & 99.8 & 99.5 & 0.4 & 0.7 & +25.4 & +18.3 \\
\addlinespace[3pt]
WritingPrompts / Qwen 32B & Log likelihood & 99.8 & 99.5 & 0.4 & 0.5 & +6.6 & +9.7 \\
 & Rank & 100.0 & 99.8 & 0.8 & 0.9 & +31.3 & +46.7 \\
 & Log rank & 100.0 & 99.2 & 0.4 & 1.0 & +11.5 & +18.8 \\
 & LRR & 100.0 & 99.7 & 0.7 & 1.2 & +29.0 & +30.9 \\
 & Entropy & 50.7 & 52.5 & 0.5 & 0.5 & +2.1 & +3.9 \\
 & Entropy gap & 95.1 & 91.2 & 1.1 & 1.4 & +29.7 & +21.1 \\
 & Binoculars & 98.6 & 97.5 & 0.5 & 0.9 & +26.3 & +19.9 \\
\end{longtable}

\endgroup
\Needspace{16\baselineskip}

\subsection{Detection across different thresholds}

The preceding results evaluate detection at thresholds calibrated
to a chosen FPR. Here we examine performance across a range of
thresholds using AUROC and partial AUROC.

AUROC summarizes the full ROC curve, which plots TPR against FPR as the threshold changes. Larger values indicate better separation between human and machine texts. The first table reports AUROC multiplied by 100. Clean results are averaged equally across the nine dataset--generator combinations. Random and tail results are also averaged equally across the six positive replacement rates, from 5\% to 50\%.
\par\medskip
\begingroup\small\centering
\begin{tabular}{lrrrrrr}
\toprule
& \multicolumn{2}{c}{Clean AUROC} & \multicolumn{2}{c}{Random AUROC} & \multicolumn{2}{c}{Tail AUROC} \\
Detector & Raw & Clip & Raw & Clip & Raw & Clip \\
\midrule
Log likelihood & 87.1 & 85.2 & 42.7 & 50.7 & 59.1 & 64.9 \\
Rank & 94.9 & 88.4 & 6.8 & 55.1 & 10.4 & 68.7 \\
Log rank & 89.3 & 87.3 & 46.7 & 54.9 & 62.1 & 68.2 \\
LRR & 95.2 & 86.9 & 59.9 & 81.1 & 69.0 & 82.0 \\
Entropy & 65.0 & 64.8 & 60.0 & 60.6 & 61.3 & 62.2 \\
Entropy gap & 97.4 & 94.6 & 71.0 & 94.0 & 82.2 & 92.5 \\
Binoculars & 97.6 & 97.5 & 78.8 & 95.1 & 84.2 & 92.6 \\
\bottomrule
\end{tabular}
\par
\endgroup
\medskip
Partial AUROC focuses on FPRs between 0\% and 5\%.
We divide the area over this range by 0.05 and multiply by 100,
so the reported value represents average TPR over the low-FPR
range. The averaging across settings and replacement rates is
the same as in the first table.

For Binoculars, clipping slightly reduces clean performance over
this range, from 90.2 to 87.3, but improves performance under
random replacement from 39.9 to 67.5 and under tail replacement
from 45.3 to 61.4. These results show that the contamination
benefit extends beyond a single detection threshold.
\par\medskip
\begingroup\small\centering
\begin{tabular}{lrrrrrr}
\toprule
& \multicolumn{2}{c}{Clean pAUROC} & \multicolumn{2}{c}{Random pAUROC} & \multicolumn{2}{c}{Tail pAUROC} \\
Detector & Raw & Clip & Raw & Clip & Raw & Clip \\
\midrule
Log likelihood & 53.9 & 50.9 & 10.4 & 13.0 & 14.9 & 18.2 \\
Rank & 67.7 & 59.1 & 0.5 & 16.7 & 0.6 & 22.6 \\
Log rank & 61.2 & 56.5 & 12.8 & 15.9 & 17.4 & 21.7 \\
LRR & 67.4 & 58.8 & 18.9 & 31.3 & 22.6 & 34.2 \\
Entropy & 18.0 & 18.1 & 17.1 & 17.6 & 12.7 & 13.6 \\
Entropy gap & 89.6 & 84.5 & 41.8 & 73.2 & 53.2 & 71.2 \\
Binoculars & 90.2 & 87.3 & 39.9 & 67.5 & 45.3 & 61.4 \\
\bottomrule
\end{tabular}
\par
\endgroup

\subsection{Clean detection and false-positive rates}

This table shows how clipping affects detection on unmodified text.
Raw and clipped thresholds are calibrated separately to target
5\% FPR. We report their achieved FPRs on human test texts and
the change in TPR on clean machine texts, with pointwise 95\%
confidence intervals from 2,000 paired source-cluster bootstrap
resamples.

FPR values are percentages. Clean \(\Delta\)TPR is clipped minus
raw TPR, measured in percentage points: positive values indicate
improved detection, and negative values indicate a loss.

\begingroup\small\setlength{\tabcolsep}{4pt}
\begin{longtable}{lrrr}
\toprule
Detector & Raw FPR (95\% CI) & Clip FPR (95\% CI) & Clean $\Delta$TPR (95\% CI) \\
\midrule\endfirsthead
\toprule
Detector & Raw FPR (95\% CI) & Clip FPR (95\% CI) & Clean $\Delta$TPR (95\% CI) \\
\midrule\endhead
\bottomrule\endfoot
\multicolumn{4}{l}{\textbf{XSum / Granite 8B}} \\*
Log likelihood & 4.7 (3.8, 5.6) & 4.2 (3.4, 5.1) & -12.7 (-14.2, -11.2) \\*
Rank & 3.8 (2.9, 4.7) & 3.8 (2.9, 4.6) & -18.9 (-20.7, -17.2) \\*
Log rank & 4.7 (3.8, 5.5) & 4.0 (3.2, 4.9) & -12.5 (-14.0, -11.1) \\*
LRR & 5.2 (4.2, 6.2) & 4.6 (3.7, 5.5) & -9.4 (-10.6, -8.1) \\*
Entropy & 4.3 (3.5, 5.2) & 4.6 (3.7, 5.5) & +0.2 (+0.1, +0.4) \\*
Entropy gap & 6.7 (5.5, 7.8) & 5.0 (4.0, 6.0) & -0.8 (-1.2, -0.4) \\*
Binoculars & 5.1 (4.2, 6.1) & 5.2 (4.2, 6.2) & +0.5 (-0.4, +1.3) \\
\addlinespace[3pt]
\multicolumn{4}{l}{\textbf{XSum / Mistral 24B}} \\*
Log likelihood & 4.1 (3.2, 5.0) & 4.3 (3.5, 5.2) & -8.6 (-9.8, -7.4) \\*
Rank & 3.1 (2.4, 3.9) & 4.0 (3.1, 4.8) & -4.1 (-5.0, -3.2) \\*
Log rank & 4.3 (3.5, 5.2) & 4.3 (3.5, 5.2) & -11.4 (-12.9, -10.0) \\*
LRR & 4.7 (3.8, 5.6) & 6.3 (5.2, 7.3) & -18.9 (-20.7, -17.2) \\*
Entropy & 3.9 (3.0, 4.7) & 3.9 (3.0, 4.8) & +0.3 (+0.0, +0.5) \\*
Entropy gap & 3.5 (2.6, 4.3) & 5.5 (4.5, 6.5) & -1.4 (-1.9, -0.9) \\*
Binoculars & 3.9 (3.1, 4.8) & 5.6 (4.6, 6.6) & -0.7 (-1.4, +0.1) \\
\addlinespace[3pt]
\multicolumn{4}{l}{\textbf{XSum / Qwen 32B}} \\*
Log likelihood & 5.8 (4.8, 6.8) & 6.0 (5.0, 7.0) & -3.8 (-4.7, -3.0) \\*
Rank & 3.4 (2.6, 4.2) & 4.8 (3.9, 5.8) & -3.9 (-4.7, -3.1) \\*
Log rank & 4.6 (3.8, 5.5) & 5.7 (4.7, 6.7) & -2.9 (-3.7, -2.2) \\*
LRR & 5.9 (5.0, 7.0) & 6.4 (5.3, 7.4) & -13.6 (-15.1, -12.1) \\*
Entropy & 5.6 (4.7, 6.7) & 5.8 (4.8, 6.8) & +0.5 (-0.1, +1.1) \\*
Entropy gap & 4.3 (3.5, 5.2) & 5.5 (4.5, 6.5) & -1.2 (-1.8, -0.7) \\*
Binoculars & 5.0 (4.0, 5.9) & 4.7 (3.8, 5.7) & -0.4 (-0.9, +0.2) \\
\addlinespace[3pt]
\multicolumn{4}{l}{\textbf{SQuAD / Granite 8B}} \\*
Log likelihood & 5.9 (4.9, 7.0) & 5.9 (4.9, 7.0) & -0.1 (-0.2, +0.0) \\*
Rank & 5.9 (4.8, 6.9) & 5.9 (4.8, 6.9) & +0.0 (+0.0, +0.0) \\*
Log rank & 5.6 (4.6, 6.6) & 5.8 (4.7, 6.8) & -0.1 (-0.2, +0.0) \\*
LRR & 6.0 (5.1, 7.1) & 6.9 (5.8, 8.1) & +8.4 (+7.0, +9.8) \\*
Entropy & 6.4 (5.3, 7.4) & 6.4 (5.3, 7.4) & +0.0 (+0.0, +0.0) \\*
Entropy gap & 5.3 (4.3, 6.4) & 4.3 (3.5, 5.2) & -7.7 (-8.9, -6.5) \\*
Binoculars & 4.5 (3.7, 5.5) & 5.2 (4.2, 6.2) & -2.2 (-3.3, -0.9) \\
\addlinespace[3pt]
\multicolumn{4}{l}{\textbf{SQuAD / Mistral 24B}} \\*
Log likelihood & 4.3 (3.5, 5.2) & 4.3 (3.5, 5.2) & -0.1 (-0.3, +0.0) \\*
Rank & 4.1 (3.2, 5.0) & 4.8 (3.9, 5.7) & -21.1 (-22.9, -19.3) \\*
Log rank & 4.7 (3.8, 5.7) & 4.9 (3.9, 5.8) & -0.4 (-0.8, -0.2) \\*
LRR & 4.2 (3.4, 5.1) & 4.5 (3.6, 5.5) & +3.5 (+1.6, +5.4) \\*
Entropy & 6.9 (5.8, 8.0) & 6.9 (5.8, 8.0) & +0.0 (+0.0, +0.0) \\*
Entropy gap & 5.1 (4.2, 6.1) & 6.2 (5.1, 7.2) & -2.8 (-3.5, -2.1) \\*
Binoculars & 3.6 (2.8, 4.5) & 4.7 (3.8, 5.7) & -2.0 (-2.8, -1.3) \\
\addlinespace[3pt]
\multicolumn{4}{l}{\textbf{SQuAD / Qwen 32B}} \\*
Log likelihood & 7.0 (5.9, 8.1) & 7.5 (6.5, 8.7) & -0.2 (-0.5, +0.1) \\*
Rank & 6.7 (5.7, 7.8) & 6.7 (5.7, 7.8) & +0.0 (+0.0, +0.0) \\*
Log rank & 6.7 (5.6, 7.8) & 6.7 (5.6, 7.8) & +0.0 (+0.0, +0.0) \\*
LRR & 6.9 (5.8, 7.9) & 7.0 (5.9, 8.0) & -0.3 (-0.6, -0.0) \\*
Entropy & 7.0 (5.9, 8.2) & 7.0 (5.9, 8.2) & +0.0 (+0.0, +0.0) \\*
Entropy gap & 6.4 (5.3, 7.5) & 8.5 (7.2, 9.7) & -16.2 (-17.9, -14.7) \\*
Binoculars & 2.9 (2.1, 3.6) & 6.7 (5.6, 7.8) & +9.6 (+7.7, +11.6) \\
\addlinespace[3pt]
\multicolumn{4}{l}{\textbf{WritingPrompts / Granite 8B}} \\*
Log likelihood & 5.7 (4.8, 6.7) & 5.9 (4.9, 6.9) & -0.2 (-0.3, +0.0) \\*
Rank & 4.3 (3.4, 5.1) & 5.2 (4.2, 6.2) & +0.1 (-0.1, +0.3) \\*
Log rank & 6.0 (5.0, 7.1) & 5.8 (4.8, 6.8) & -0.0 (-0.1, +0.0) \\*
LRR & 3.5 (2.8, 4.3) & 3.9 (3.0, 4.7) & -0.8 (-1.1, -0.4) \\*
Entropy & 4.7 (3.8, 5.6) & 6.2 (5.1, 7.3) & +2.2 (+1.4, +3.1) \\*
Entropy gap & 5.0 (4.0, 5.9) & 4.8 (3.9, 5.7) & -1.1 (-1.5, -0.6) \\*
Binoculars & 5.6 (4.7, 6.6) & 4.2 (3.4, 5.1) & -0.0 (-0.2, +0.1) \\
\addlinespace[3pt]
\multicolumn{4}{l}{\textbf{WritingPrompts / Mistral 24B}} \\*
Log likelihood & 5.1 (4.2, 6.0) & 5.0 (4.0, 5.9) & -0.3 (-0.6, -0.1) \\*
Rank & 4.3 (3.5, 5.2) & 4.2 (3.3, 5.1) & -0.0 (-0.3, +0.2) \\*
Log rank & 4.6 (3.7, 5.5) & 4.7 (3.8, 5.6) & -0.1 (-0.3, +0.0) \\*
LRR & 4.1 (3.2, 5.0) & 4.1 (3.2, 5.0) & -0.6 (-0.9, -0.2) \\*
Entropy & 4.0 (3.1, 4.8) & 4.4 (3.5, 5.3) & +4.2 (+3.3, +5.2) \\*
Entropy gap & 5.5 (4.5, 6.6) & 4.7 (3.7, 5.6) & -2.3 (-3.0, -1.7) \\*
Binoculars & 5.0 (4.0, 5.9) & 3.6 (2.9, 4.5) & -0.0 (-0.2, +0.0) \\
\addlinespace[3pt]
\multicolumn{4}{l}{\textbf{WritingPrompts / Qwen 32B}} \\*
Log likelihood & 4.2 (3.3, 5.0) & 4.6 (3.7, 5.5) & +0.0 (+0.0, +0.0) \\*
Rank & 4.8 (4.0, 5.8) & 3.6 (2.8, 4.5) & +0.0 (+0.0, +0.0) \\*
Log rank & 4.7 (3.7, 5.5) & 3.9 (3.0, 4.8) & +0.0 (+0.0, +0.0) \\*
LRR & 3.5 (2.6, 4.3) & 4.1 (3.2, 5.0) & -0.0 (-0.1, +0.0) \\*
Entropy & 3.8 (2.9, 4.7) & 4.3 (3.5, 5.2) & +0.7 (-0.1, +1.5) \\*
Entropy gap & 5.1 (4.1, 6.0) & 5.2 (4.2, 6.2) & -2.8 (-3.6, -2.1) \\*
Binoculars & 4.8 (4.0, 5.8) & 3.5 (2.6, 4.2) & -0.3 (-0.7, +0.0) \\
\end{longtable}

\endgroup

\subsection{Selected directions and clipping specifications}
\label{app:selected-directions}
The table lists the score direction and selected clipping bounds
for each detector and dataset--generator combination.
For additive methods, \(L\) is the lower bound applied to token
scores after multiplying by the direction \(d\).
LRR uses separate upper bounds for \(-\log p_i\) and \(\log r_i\),
while Binoculars uses an upper bound for \(-\log p_i\) only.
``None'' means that no clipping was selected.
These bounds control the token scores; the document-level
detection thresholds are determined separately during calibration.
\begingroup\fontsize{8.5}{9}\selectfont\setlength{\tabcolsep}{4pt}
\begin{longtable}{lllrl}
\toprule
Dataset & Model & Detector & $d$ & Selected bound(s) \\
\midrule\endfirsthead
\toprule
Dataset & Model & Detector & $d$ & Selected bound(s) \\
\midrule\endhead
\bottomrule\endfoot
XSum & Granite 8B & Log likelihood & 1 & $L=-6.183$ \\
XSum & Mistral 24B & Log likelihood & 1 & $L=-6.466$ \\
XSum & Qwen 32B & Log likelihood & 1 & $L=-6.739$ \\
SQuAD & Granite 8B & Log likelihood & 1 & $L=-7.076$ \\
SQuAD & Mistral 24B & Log likelihood & 1 & $L=-6.036$ \\
SQuAD & Qwen 32B & Log likelihood & 1 & $L=-0.523$ \\
WritingPrompts & Granite 8B & Log likelihood & 1 & $L=-6.038$ \\
WritingPrompts & Mistral 24B & Log likelihood & 1 & $L=-6.366$ \\
WritingPrompts & Qwen 32B & Log likelihood & 1 & $L=-6.664$ \\
XSum & Granite 8B & Rank & -1 & $L=-7$ \\
XSum & Mistral 24B & Rank & -1 & $L=-19$ \\
XSum & Qwen 32B & Rank & -1 & $L=-8$ \\
SQuAD & Granite 8B & Rank & -1 & $L=-21$ \\
SQuAD & Mistral 24B & Rank & -1 & $L=-14$ \\
SQuAD & Qwen 32B & Rank & -1 & $L=-2$ \\
WritingPrompts & Granite 8B & Rank & -1 & $L=-13$ \\
WritingPrompts & Mistral 24B & Rank & -1 & $L=-8$ \\
WritingPrompts & Qwen 32B & Rank & -1 & $L=-5$ \\
XSum & Granite 8B & Log rank & -1 & $L=-2.833$ \\
XSum & Mistral 24B & Log rank & -1 & $L=-2.944$ \\
XSum & Qwen 32B & Log rank & -1 & $L=-2.952$ \\
SQuAD & Granite 8B & Log rank & -1 & $L=-3.045$ \\
SQuAD & Mistral 24B & Log rank & -1 & $L=-3.497$ \\
SQuAD & Qwen 32B & Log rank & -1 & $L=-0.6931$ \\
WritingPrompts & Granite 8B & Log rank & -1 & $L=-2.565$ \\
WritingPrompts & Mistral 24B & Log rank & -1 & $L=-2.708$ \\
WritingPrompts & Qwen 32B & Log rank & -1 & $L=-1.609$ \\
XSum & Granite 8B & LRR & 1 & $U_{-\log p}=3.088,\ U_{\log r}=1.099$ \\
XSum & Mistral 24B & LRR & 1 & $U_{-\log p}=9.524,\ U_{\log r}=2.944$ \\
XSum & Qwen 32B & LRR & 1 & $U_{-\log p}=10.03,\ U_{\log r}=2.952$ \\
SQuAD & Granite 8B & LRR & 1 & $U_{-\log p}=2.309,\ U_{\log r}=0.6931$ \\
SQuAD & Mistral 24B & LRR & 1 & $U_{-\log p}=2.828,\ U_{\log r}=1.099$ \\
SQuAD & Qwen 32B & LRR & 1 & $U_{-\log p}=8.15,\ U_{\log r}=0.6931$ \\
WritingPrompts & Granite 8B & LRR & 1 & $U_{-\log p}=10.48,\ U_{\log r}=3.664$ \\
WritingPrompts & Mistral 24B & LRR & 1 & $U_{-\log p}=11.25,\ U_{\log r}=3.829$ \\
WritingPrompts & Qwen 32B & LRR & 1 & $U_{-\log p}=11.67,\ U_{\log r}=4.007$ \\
XSum & Granite 8B & Entropy & 1 & $L=0.1271$ \\
XSum & Mistral 24B & Entropy & 1 & $L=0.5413$ \\
XSum & Qwen 32B & Entropy & -1 & $L=-3.148$ \\
SQuAD & Granite 8B & Entropy & 1 & $L=0.02071$ \\
SQuAD & Mistral 24B & Entropy & 1 & None \\
SQuAD & Qwen 32B & Entropy & 1 & $L=0.00164$ \\
WritingPrompts & Granite 8B & Entropy & -1 & $L=-3.562$ \\
WritingPrompts & Mistral 24B & Entropy & -1 & $L=-3.712$ \\
WritingPrompts & Qwen 32B & Entropy & -1 & $L=-3.586$ \\
XSum & Granite 8B & Entropy gap & -1 & $L=-1.993$ \\
XSum & Mistral 24B & Entropy gap & -1 & $L=-2.031$ \\
XSum & Qwen 32B & Entropy gap & -1 & $L=-2.39$ \\
SQuAD & Granite 8B & Entropy gap & -1 & $L=-2.385$ \\
SQuAD & Mistral 24B & Entropy gap & -1 & $L=-3.093$ \\
SQuAD & Qwen 32B & Entropy gap & -1 & $L=-0.5195$ \\
WritingPrompts & Granite 8B & Entropy gap & -1 & $L=-2.679$ \\
WritingPrompts & Mistral 24B & Entropy gap & -1 & $L=-2.831$ \\
WritingPrompts & Qwen 32B & Entropy gap & -1 & $L=-3.203$ \\
XSum & Granite 8B & Binoculars & -1 & $U=4.265$ \\
XSum & Mistral 24B & Binoculars & -1 & $U=4.184$ \\
XSum & Qwen 32B & Binoculars & -1 & $U=4.972$ \\
SQuAD & Granite 8B & Binoculars & -1 & $U=4.537$ \\
SQuAD & Mistral 24B & Binoculars & -1 & $U=5.308$ \\
SQuAD & Qwen 32B & Binoculars & -1 & $U=4.12$ \\
WritingPrompts & Granite 8B & Binoculars & -1 & $U=5.631$ \\
WritingPrompts & Mistral 24B & Binoculars & -1 & $U=5.5$ \\
WritingPrompts & Qwen 32B & Binoculars & -1 & $U=5.656$ \\
\end{longtable}

\endgroup

\clearpage
\subsection{Full dose-response atlas}
Each page shows results for all seven detectors on one dataset
at a target FPR of either 1\% or 5\%. Columns correspond to the
three generators, and rows correspond to the detectors.
Raw and clipped methods are evaluated on the same texts.
The horizontal axis shows the requested replacement rate,
and the vertical axis shows TPR. All panels use the same
vertical scale from 0 to 1.
\begin{figure}[H]\centering
\includegraphics[width=\linewidth,height=0.79\textheight,keepaspectratio]{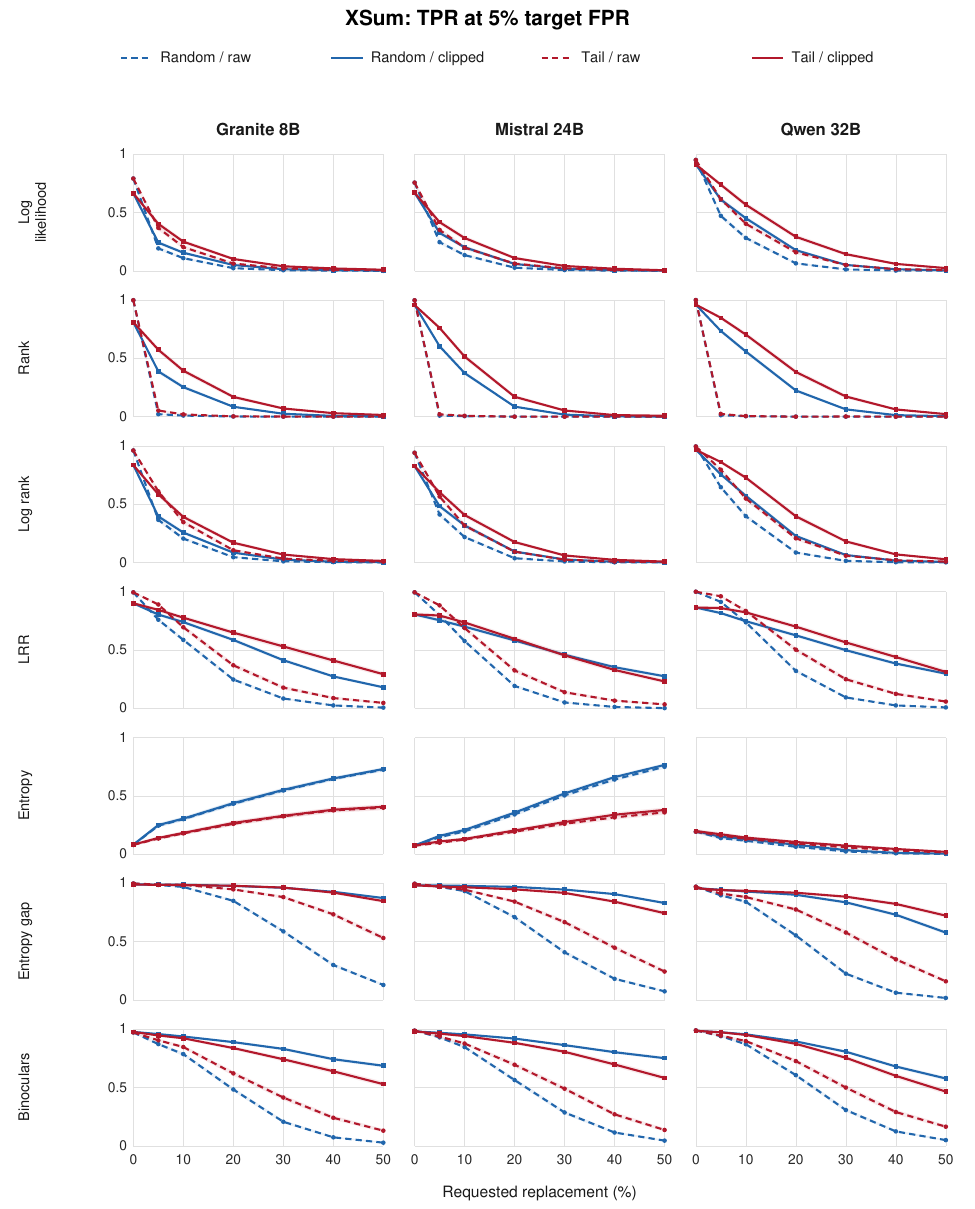}
\caption{XSum: all seven methods at target 5\% FPR.}\label{fig:primary-xsum5}
\end{figure}
\clearpage
\begin{figure}[H]\centering
\includegraphics[width=\linewidth,height=0.92\textheight,keepaspectratio]{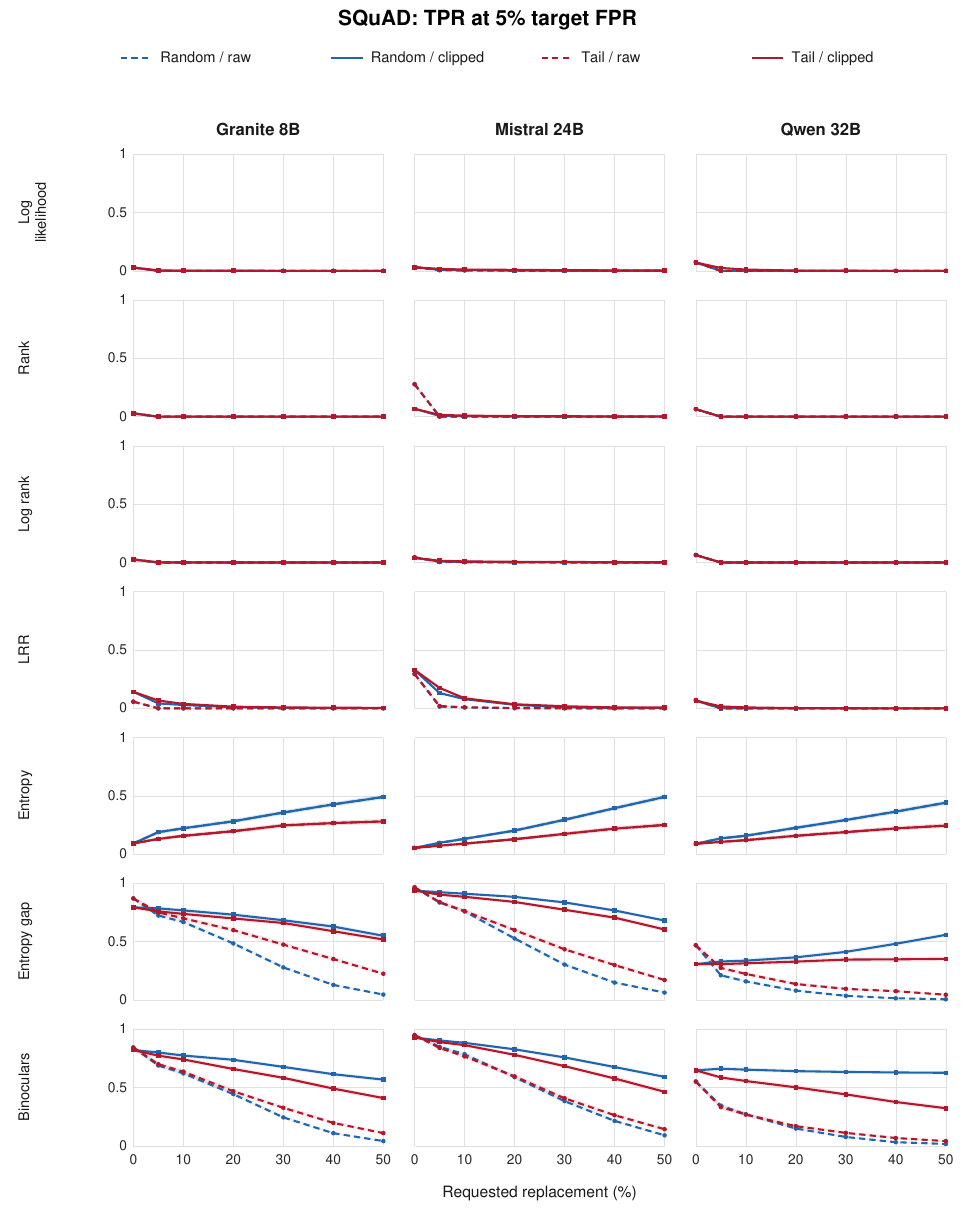}
\caption{SQuAD: all seven methods at target 5\% FPR. Low values are retained; no test-selected direction reversal is applied.}\label{fig:primary-squad5}
\end{figure}
\clearpage
\begin{figure}[H]\centering
\includegraphics[width=\linewidth,height=0.92\textheight,keepaspectratio]{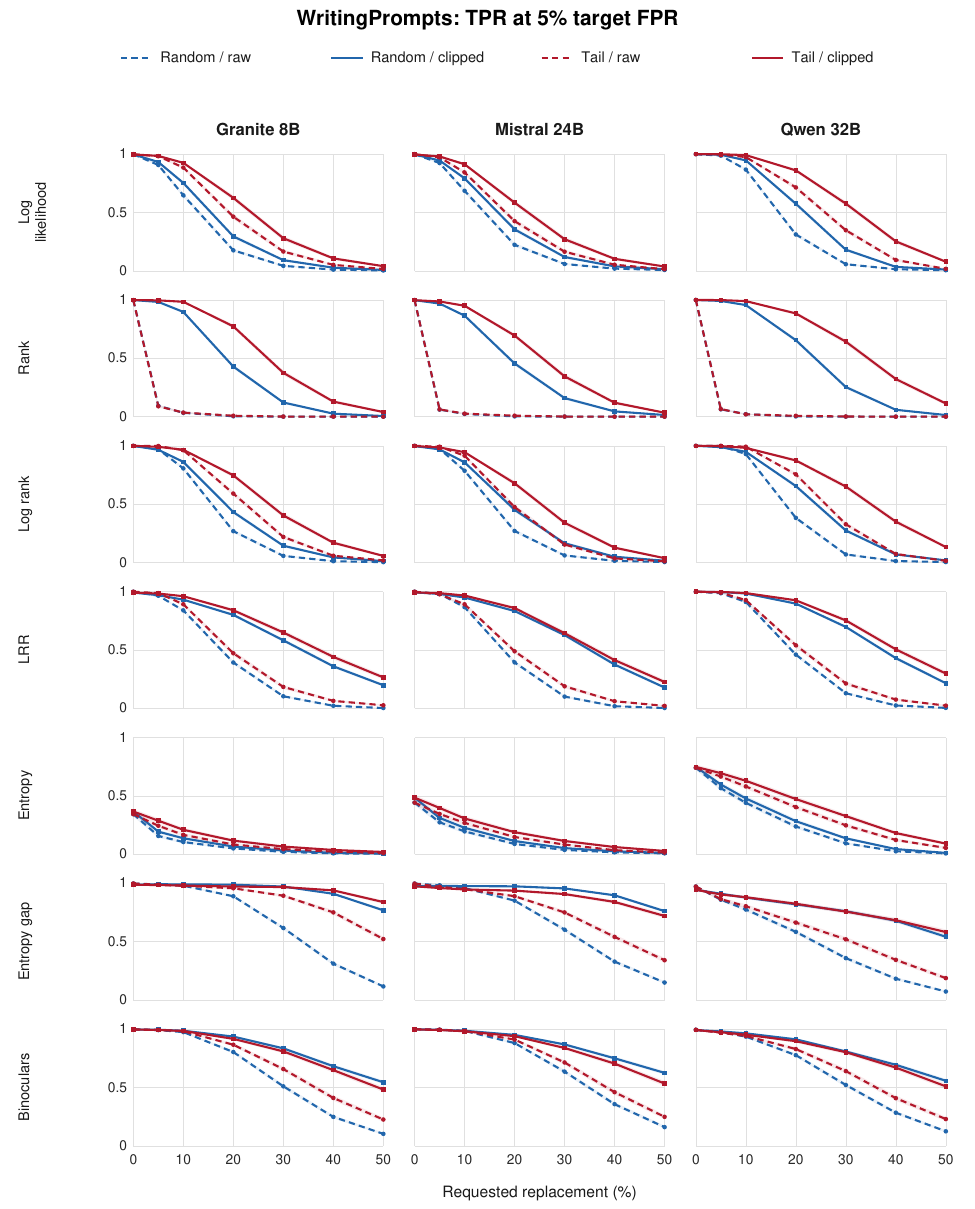}
\caption{WritingPrompts: all seven methods at target 5\% FPR.}\label{fig:primary-wp5}
\end{figure}
\clearpage
\begin{figure}[H]\centering
\includegraphics[width=\linewidth,height=0.92\textheight,keepaspectratio]{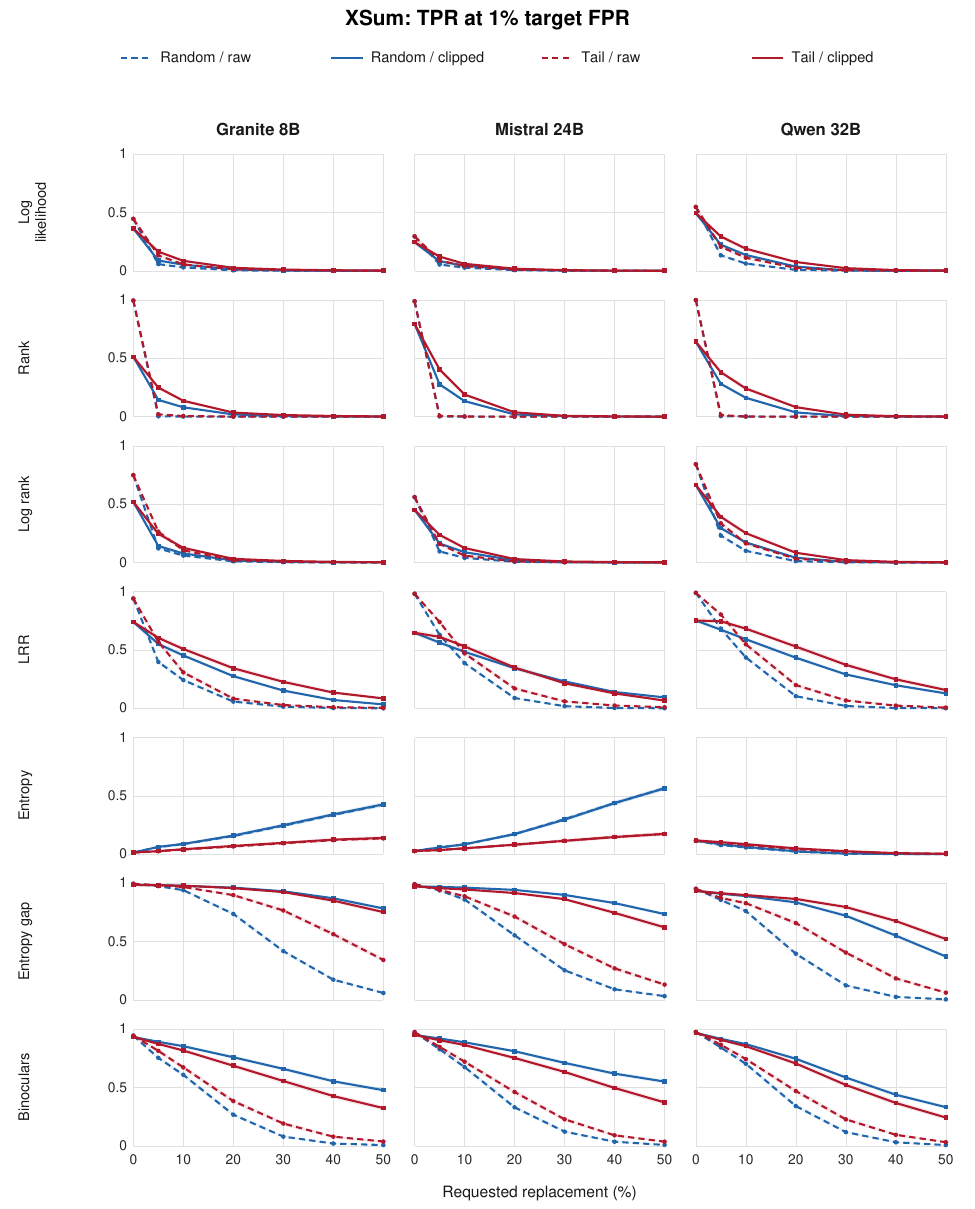}
\caption{XSum: secondary target 1\% FPR.}\label{fig:primary-xsum1}
\end{figure}
\clearpage
\begin{figure}[H]\centering
\includegraphics[width=\linewidth,height=0.92\textheight,keepaspectratio]{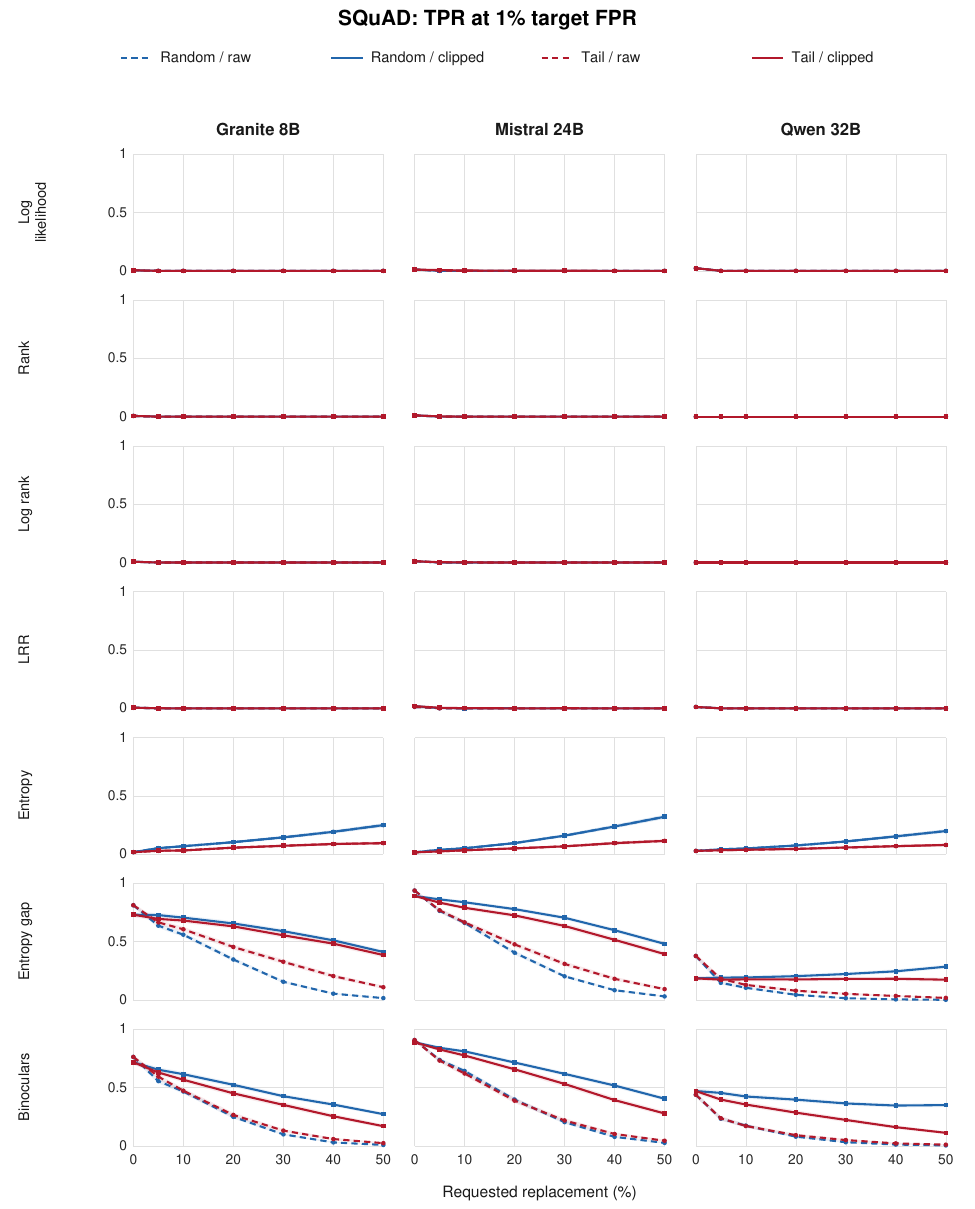}
\caption{SQuAD: secondary target 1\% FPR.}\label{fig:primary-squad1}
\end{figure}
\clearpage
\begin{figure}[H]\centering
\includegraphics[width=\linewidth,height=0.92\textheight,keepaspectratio]{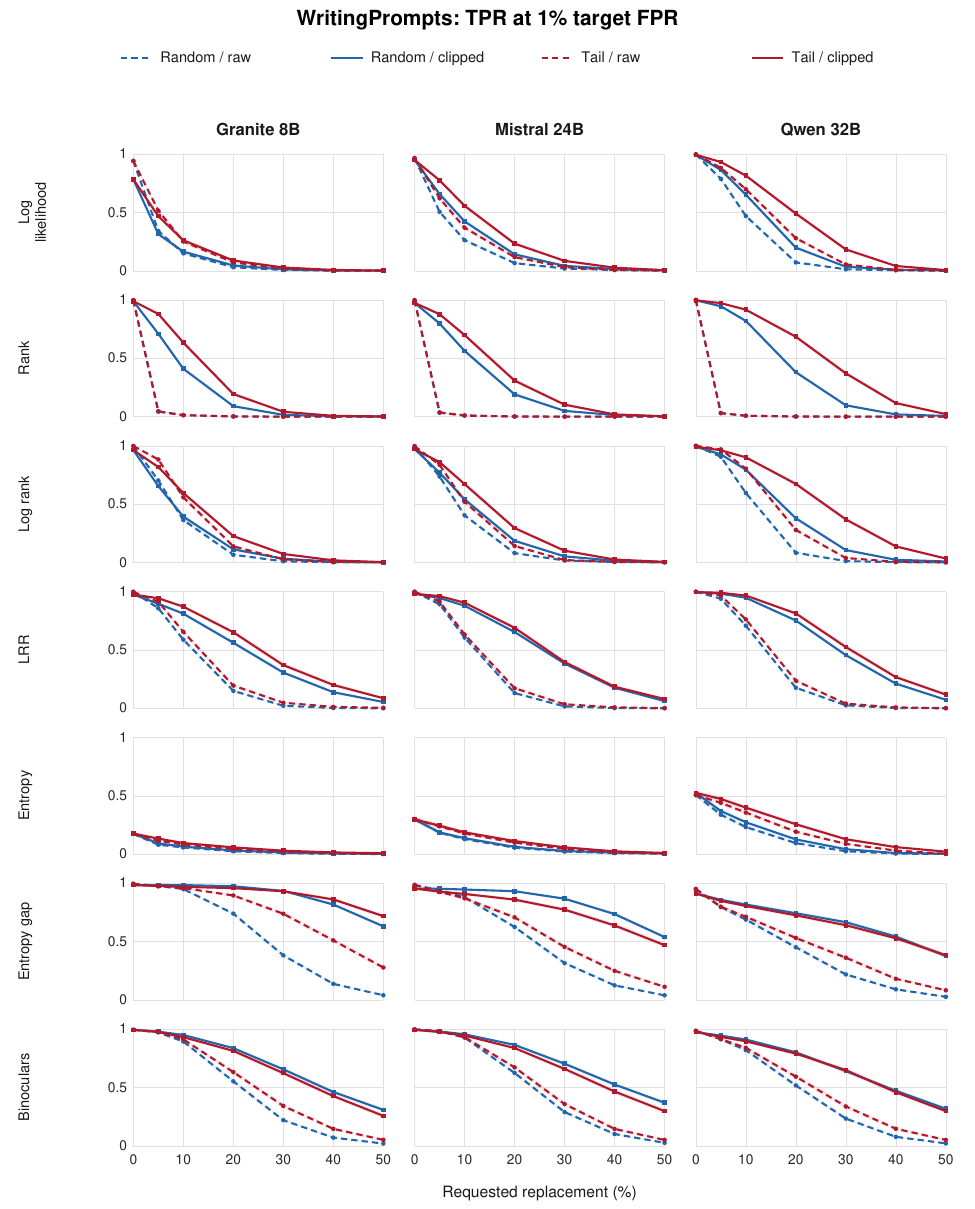}
\caption{WritingPrompts: secondary target 1\% FPR.}\label{fig:primary-wp1}
\end{figure}

\section{RAID: complete numerical results}
\label{app:raid}
% Shared appendix for the manuscript's theory variants.
% Tables generated by paper/scripts/raid_tables/build_tables.py.
This appendix reports the full RAID analysis.
We first check the raw Binoculars baseline, then give rate-specific and
attack-specific TPRs, followed by their clean TPR and human FPR.
The secondary universal comparisons, selected bounds, and sample details
complete the appendix.
TPR and FPR are percentages; changes are clipped minus raw in percentage
points. Brackets give paired 95\% source-cluster bootstrap intervals
from 2,000 resamples with fitted rules fixed.
All comparisons target 5\% FPR and draw on the same held-out pool of
5,149 source groups.
The intervals are pointwise, without correction for multiple comparisons.
The fitting procedure and its exploratory status are described in
Appendix~\ref{app:stat-clipping}.

\subsection{Comparison with the published RAID results}\label{app:raid-baseline}
The next table compares raw Binoculars with the seven conditions in RAID's main attack table. Point estimates and 95\% intervals are retained from the earlier baseline audit; current raw point estimates match. Here, \(\Delta\) is our TPR minus the published TPR.
\begin{table}[H]\centering
\caption{Raw Binoculars TPR (\%) compared with the published RAID results, at target 5\% FPR.}
\label{tab:raid-sanity}
\begingroup\centering
\begin{tabular}{lrrrr}
\toprule
Condition & RAID paper & Our raw & 95\% CI & $\Delta$ (pp) \\
\midrule
None & 79.6 & 80.00 & [78.91, 81.12] & +0.40 \\
Paraphrase & 80.3 & 80.83 & [79.76, 81.92] & +0.53 \\
Synonym & 43.5 & 43.58 & [42.26, 44.86] & +0.08 \\
Misspelling & 78.0 & 78.40 & [77.28, 79.49] & +0.40 \\
Homoglyph & 37.7 & 37.56 & [36.30, 38.82] & -0.14 \\
Whitespace & 70.1 & 70.09 & [68.83, 71.26] & -0.01 \\
Article deletion & 74.3 & 74.66 & [73.49, 75.90] & +0.36 \\
\bottomrule
\end{tabular}

\par\endgroup
\end{table}
Our raw Binoculars TPRs differ from the published values by at most 0.53 percentage points across these seven conditions. The achieved human FPR is 4.62\%. The comparison uses our selected sample and calibrated threshold.

\subsection{Rate-specific detection}\label{app:raid-rates}
Each table evaluates the bound fitted for that interval.
The same bound is used for all attacks represented within the interval.
The four test populations contain 19,894, 4,682, 7,137, and 3,490 attacked
texts. These counts can exceed the number of sources because multiple
attacks from one source can fall in the same interval; the bootstrap keeps
those texts together. Clean texts and attacks with measured zero change
are excluded from these attacked populations.

\begin{table}[htbp]\centering
\caption{Rate-specific detection for \(\rho\in(0,0.05]\), with 19894 attacked test texts.}\label{tab:raid-rate-1}
\setlength{\tabcolsep}{3pt}
\begin{tabular}{@{}lrrr@{}}
\toprule
Detector & Raw TPR & Clipped TPR & Change [95\% CI] \\
\midrule
Log likelihood & 49.17 & 49.49 & +0.32 [0.21, 0.44] \\
Rank & 29.48 & 54.71 & +25.23 [24.09, 26.41] \\
Log rank & 53.24 & 53.98 & +0.74 [0.54, 0.94] \\
LRR & 59.51 & 61.79 & +2.28 [1.72, 2.83] \\
Entropy & 26.11 & 26.20 & +0.09 [-0.20, 0.36] \\
Entropy gap & 66.55 & 68.15 & +1.60 [1.37, 1.84] \\
Binoculars & 74.27 & 74.49 & +0.22 [0.07, 0.38] \\
\bottomrule
\end{tabular}
\end{table}
\begin{table}[htbp]\centering
\caption{Rate-specific detection for \(\rho\in(0.05,0.1]\), with 4682 attacked test texts.}\label{tab:raid-rate-2}
\setlength{\tabcolsep}{3pt}
\begin{tabular}{@{}lrrr@{}}
\toprule
Detector & Raw TPR & Clipped TPR & Change [95\% CI] \\
\midrule
Log likelihood & 55.62 & 55.62 & +0.00 [0.00, 0.00] \\
Rank & 35.05 & 61.45 & +26.40 [24.83, 27.98] \\
Log rank & 59.38 & 60.06 & +0.68 [0.32, 1.08] \\
LRR & 64.48 & 68.69 & +4.21 [3.36, 5.01] \\
Entropy & 35.07 & 35.50 & +0.43 [0.02, 0.85] \\
Entropy gap & 71.64 & 72.77 & +1.13 [0.75, 1.52] \\
Binoculars & 82.79 & 83.06 & +0.28 [-0.13, 0.70] \\
\bottomrule
\end{tabular}
\end{table}
\begin{table}[htbp]\centering
\caption{Rate-specific detection for \(\rho\in(0.1,0.2]\), with 7137 attacked test texts.}\label{tab:raid-rate-3}
\setlength{\tabcolsep}{3pt}
\begin{tabular}{@{}lrrr@{}}
\toprule
Detector & Raw TPR & Clipped TPR & Change [95\% CI] \\
\midrule
Log likelihood & 30.47 & 30.47 & +0.00 [0.00, 0.00] \\
Rank & 24.87 & 40.83 & +15.96 [15.00, 16.98] \\
Log rank & 35.79 & 36.07 & +0.28 [0.10, 0.48] \\
LRR & 47.60 & 48.33 & +0.73 [0.10, 1.33] \\
Entropy & 18.50 & 18.51 & +0.01 [-0.08, 0.11] \\
Entropy gap & 46.77 & 47.79 & +1.02 [0.56, 1.52] \\
Binoculars & 60.88 & 61.75 & +0.87 [0.47, 1.26] \\
\bottomrule
\end{tabular}
\end{table}
\begin{table}[htbp]\centering
\caption{Rate-specific detection for \(\rho\in(0.2,0.5]\), with 3490 attacked test texts.}\label{tab:raid-rate-4}
\setlength{\tabcolsep}{3pt}
\begin{tabular}{@{}lrrr@{}}
\toprule
Detector & Raw TPR & Clipped TPR & Change [95\% CI] \\
\midrule
Log likelihood & 27.59 & 27.59 & +0.00 [0.00, 0.00] \\
Rank & 25.62 & 33.27 & +7.65 [6.32, 9.05] \\
Log rank & 31.49 & 31.40 & -0.09 [-0.34, 0.17] \\
LRR & 37.36 & 39.28 & +1.92 [1.07, 2.82] \\
Entropy & 16.28 & 16.48 & +0.20 [-0.06, 0.46] \\
Entropy gap & 44.58 & 45.53 & +0.95 [0.29, 1.59] \\
Binoculars & 57.88 & 57.88 & +0.00 [0.00, 0.00] \\
\bottomrule
\end{tabular}
\end{table}

\FloatBarrier
\subsection{Attack-specific detection}\label{app:raid-attacks}
The following tables give all eleven attacks for every detector.
Each row uses a bound fitted for that attack, without restricting its
modification rate, and evaluates 5,149 attacked test texts.
An attack-specific bound is not guaranteed to outperform a bound fitted
on a broader population: the selection objective is estimated from tuning
data and also includes clean TPR.

\begingroup\setlength{\tabcolsep}{3pt}
\begin{longtable}{@{}lrrr@{}}
\caption{Log likelihood: attack-specific detection, with 5,149 test texts per attack.}\label{tab:raid-attacks-log-likelihood}\\
\toprule
Attack & Raw TPR & Clipped TPR & Change [95\% CI] \\
\midrule\endfirsthead
\multicolumn{4}{l}{\textit{Continued}}\\
\toprule
Attack & Raw TPR & Clipped TPR & Change [95\% CI] \\
\midrule\endhead
\bottomrule\endfoot
Alternative spelling & 59.22 & 59.22 & +0.00 [0.00, 0.00] \\
Article deletion & 49.99 & 49.99 & +0.00 [0.00, 0.00] \\
Homoglyph & 91.42 & 91.42 & +0.00 [0.00, 0.00] \\
Paragraph insertion & 56.69 & 56.69 & +0.00 [0.00, 0.00] \\
Number change & 52.34 & 53.06 & +0.72 [0.49, 0.97] \\
Paraphrase & 27.91 & 27.91 & +0.00 [0.00, 0.00] \\
Misspelling & 58.13 & 58.40 & +0.27 [0.08, 0.47] \\
Synonym & 13.69 & 13.69 & +0.00 [0.00, 0.00] \\
Case change & 44.96 & 46.11 & +1.15 [0.78, 1.50] \\
Whitespace & 37.70 & 37.70 & +0.00 [0.00, 0.00] \\
Zero-width space & 100.00 & 100.00 & +0.00 [0.00, 0.00] \\
\end{longtable}
\endgroup
\begingroup\setlength{\tabcolsep}{3pt}
\begin{longtable}{@{}lrrr@{}}
\caption{Rank: attack-specific detection, with 5,149 test texts per attack.}\label{tab:raid-attacks-rank}\\
\toprule
Attack & Raw TPR & Clipped TPR & Change [95\% CI] \\
\midrule\endfirsthead
\multicolumn{4}{l}{\textit{Continued}}\\
\toprule
Attack & Raw TPR & Clipped TPR & Change [95\% CI] \\
\midrule\endhead
\bottomrule\endfoot
Alternative spelling & 39.50 & 63.33 & +23.83 [22.57, 25.15] \\
Article deletion & 36.63 & 56.09 & +19.46 [18.14, 20.78] \\
Homoglyph & 67.18 & 87.84 & +20.66 [19.48, 21.81] \\
Paragraph insertion & 38.98 & 60.61 & +21.64 [20.31, 22.90] \\
Number change & 37.19 & 56.79 & +19.60 [18.31, 20.88] \\
Paraphrase & 27.89 & 35.68 & +7.79 [6.62, 8.93] \\
Misspelling & 37.08 & 61.43 & +24.35 [23.05, 25.62] \\
Synonym & 11.98 & 15.63 & +3.65 [2.68, 4.62] \\
Case change & 22.02 & 53.39 & +31.37 [30.01, 32.72] \\
Whitespace & 31.89 & 55.82 & +23.93 [22.61, 25.23] \\
Zero-width space & 66.58 & 98.50 & +31.93 [30.86, 32.98] \\
\end{longtable}
\endgroup
\begingroup\setlength{\tabcolsep}{3pt}
\begin{longtable}{@{}lrrr@{}}
\caption{Log rank: attack-specific detection, with 5,149 test texts per attack.}\label{tab:raid-attacks-log-rank}\\
\toprule
Attack & Raw TPR & Clipped TPR & Change [95\% CI] \\
\midrule\endfirsthead
\multicolumn{4}{l}{\textit{Continued}}\\
\toprule
Attack & Raw TPR & Clipped TPR & Change [95\% CI] \\
\midrule\endhead
\bottomrule\endfoot
Alternative spelling & 62.11 & 62.30 & +0.19 [-0.02, 0.43] \\
Article deletion & 54.44 & 54.67 & +0.23 [0.00, 0.47] \\
Homoglyph & 90.15 & 90.29 & +0.14 [0.04, 0.25] \\
Paragraph insertion & 59.68 & 60.01 & +0.33 [0.14, 0.52] \\
Number change & 55.64 & 56.24 & +0.60 [0.21, 0.95] \\
Paraphrase & 31.87 & 31.77 & -0.10 [-0.25, 0.04] \\
Misspelling & 61.06 & 61.39 & +0.33 [0.12, 0.54] \\
Synonym & 15.15 & 15.23 & +0.08 [-0.10, 0.25] \\
Case change & 50.15 & 52.36 & +2.21 [1.77, 2.66] \\
Whitespace & 45.81 & 46.24 & +0.43 [0.17, 0.68] \\
Zero-width space & 99.73 & 99.71 & -0.02 [-0.06, 0.00] \\
\end{longtable}
\endgroup
\begingroup\setlength{\tabcolsep}{3pt}
\begin{longtable}{@{}lrrr@{}}
\caption{LRR: attack-specific detection, with 5,149 test texts per attack.}\label{tab:raid-attacks-lrr}\\
\toprule
Attack & Raw TPR & Clipped TPR & Change [95\% CI] \\
\midrule\endfirsthead
\multicolumn{4}{l}{\textit{Continued}}\\
\toprule
Attack & Raw TPR & Clipped TPR & Change [95\% CI] \\
\midrule\endhead
\bottomrule\endfoot
Alternative spelling & 65.04 & 69.92 & +4.87 [4.21, 5.57] \\
Article deletion & 61.70 & 65.00 & +3.30 [2.68, 3.94] \\
Homoglyph & 76.42 & 81.98 & +5.55 [4.89, 6.25] \\
Paragraph insertion & 63.80 & 67.94 & +4.14 [3.46, 4.82] \\
Number change & 60.87 & 62.92 & +2.06 [1.34, 2.84] \\
Paraphrase & 38.80 & 48.11 & +9.30 [8.47, 10.10] \\
Misspelling & 64.50 & 68.42 & +3.92 [3.28, 4.62] \\
Synonym & 19.97 & 24.39 & +4.43 [3.55, 5.26] \\
Case change & 59.16 & 62.67 & +3.52 [2.66, 4.35] \\
Whitespace & 63.10 & 67.37 & +4.27 [3.55, 4.99] \\
Zero-width space & 78.89 & 86.35 & +7.46 [6.82, 8.16] \\
\end{longtable}
\endgroup
\begingroup\setlength{\tabcolsep}{3pt}
\begin{longtable}{@{}lrrr@{}}
\caption{Entropy: attack-specific detection, with 5,149 test texts per attack.}\label{tab:raid-attacks-entropy}\\
\toprule
Attack & Raw TPR & Clipped TPR & Change [95\% CI] \\
\midrule\endfirsthead
\multicolumn{4}{l}{\textit{Continued}}\\
\toprule
Attack & Raw TPR & Clipped TPR & Change [95\% CI] \\
\midrule\endhead
\bottomrule\endfoot
Alternative spelling & 33.52 & 34.06 & +0.54 [0.12, 1.01] \\
Article deletion & 24.92 & 24.68 & -0.23 [-0.66, 0.17] \\
Homoglyph & 88.35 & 88.60 & +0.25 [-0.02, 0.52] \\
Paragraph insertion & 33.89 & 34.45 & +0.56 [0.12, 0.97] \\
Number change & 30.67 & 31.07 & +0.41 [-0.04, 0.85] \\
Paraphrase & 11.87 & 11.96 & +0.10 [-0.21, 0.41] \\
Misspelling & 32.05 & 32.82 & +0.78 [0.35, 1.22] \\
Synonym & 11.34 & 11.89 & +0.54 [0.27, 0.82] \\
Case change & 24.20 & 24.63 & +0.43 [0.10, 0.76] \\
Whitespace & 21.91 & 21.93 & +0.02 [-0.08, 0.12] \\
Zero-width space & 99.98 & 100.00 & +0.02 [0.00, 0.06] \\
\end{longtable}
\endgroup
\begingroup\setlength{\tabcolsep}{3pt}
\begin{longtable}{@{}lrrr@{}}
\caption{Entropy gap: attack-specific detection, with 5,149 test texts per attack.}\label{tab:raid-attacks-entropy-gap}\\
\toprule
Attack & Raw TPR & Clipped TPR & Change [95\% CI] \\
\midrule\endfirsthead
\multicolumn{4}{l}{\textit{Continued}}\\
\toprule
Attack & Raw TPR & Clipped TPR & Change [95\% CI] \\
\midrule\endhead
\bottomrule\endfoot
Alternative spelling & 72.54 & 73.39 & +0.85 [0.54, 1.17] \\
Article deletion & 68.89 & 68.89 & +0.00 [0.00, 0.00] \\
Homoglyph & 40.63 & 40.63 & +0.00 [0.00, 0.00] \\
Paragraph insertion & 70.40 & 70.40 & +0.00 [0.00, 0.00] \\
Number change & 67.92 & 70.36 & +2.45 [1.96, 2.99] \\
Paraphrase & 75.78 & 75.78 & +0.00 [0.00, 0.00] \\
Misspelling & 72.34 & 73.00 & +0.66 [0.37, 0.95] \\
Synonym & 28.06 & 32.22 & +4.16 [3.38, 4.99] \\
Case change & 65.43 & 68.85 & +3.42 [2.85, 4.00] \\
Whitespace & 55.78 & 56.71 & +0.93 [0.56, 1.30] \\
Zero-width space & 45.64 & 45.64 & +0.00 [0.00, 0.00] \\
\end{longtable}
\endgroup
\begingroup\setlength{\tabcolsep}{3pt}
\begin{longtable}{@{}lrrr@{}}
\caption{Binoculars: attack-specific detection, with 5,149 test texts per attack.}\label{tab:raid-attacks-binocular-origin}\\
\toprule
Attack & Raw TPR & Clipped TPR & Change [95\% CI] \\
\midrule\endfirsthead
\multicolumn{4}{l}{\textit{Continued}}\\
\toprule
Attack & Raw TPR & Clipped TPR & Change [95\% CI] \\
\midrule\endhead
\bottomrule\endfoot
Alternative spelling & 78.71 & 79.03 & +0.31 [0.12, 0.50] \\
Article deletion & 74.66 & 74.62 & -0.04 [-0.27, 0.21] \\
Homoglyph & 37.56 & 37.56 & +0.00 [0.00, 0.00] \\
Paragraph insertion & 79.30 & 79.49 & +0.19 [0.00, 0.39] \\
Number change & 77.35 & 78.02 & +0.66 [0.41, 0.93] \\
Paraphrase & 80.83 & 80.83 & +0.00 [0.00, 0.00] \\
Misspelling & 78.40 & 78.68 & +0.27 [0.12, 0.45] \\
Synonym & 43.58 & 49.85 & +6.27 [5.22, 7.34] \\
Case change & 74.07 & 74.50 & +0.43 [0.06, 0.80] \\
Whitespace & 70.09 & 70.73 & +0.64 [0.33, 0.93] \\
Zero-width space & 99.07 & 99.07 & +0.00 [0.00, 0.00] \\
\end{longtable}
\endgroup

\subsection{Clean detection and false positives}\label{app:raid-clean}
Each row below evaluates one fitted rule on clean machine texts and human
texts. Raw/Clip lists the two TPRs or FPRs; the adjacent column gives their
paired change and interval. Every fit uses the same clean test population.
The repeated human results are not independent samples and do not describe
an automatic system that chooses the fitting group for each input.

Clean performance need not decrease: several rank and LRR fits improve it.
There are also costs. For Binoculars fitted to synonyms, attacked TPR rises
from 43.58\% to 49.85\%, but clean TPR falls from 80.00\% to 70.69\%.
Human FPR rises from 4.62\% to 5.59\%, a 0.97-point change with interval
[0.27, 1.67]. The weighted fitting objective permits this trade-off;
it imposes no hard clean-performance or test-FPR constraint.

\begingroup\setlength{\tabcolsep}{3pt}
\begin{longtable}{@{}lrrrr@{}}
\caption{Log likelihood: clean detection and false positives for each specific fit. Each entry Raw/Clip gives both operating points.}\label{tab:raid-costs-log-likelihood}\\
\toprule
Fitting group & Clean TPR & Change [95\% CI] & Human FPR & Change [95\% CI] \\
\midrule\endfirsthead
\multicolumn{5}{l}{\textit{Continued}}\\
\toprule
Fitting group & Clean TPR & Change [95\% CI] & Human FPR & Change [95\% CI] \\
\midrule\endhead
\bottomrule\endfoot
\((0,0.05]\) & 61.29/61.45 & +0.16 [0.02, 0.29] & 5.85/5.90 & +0.06 [-0.06, 0.17] \\
\((0.05,0.1]\) & 61.29/61.29 & +0.00 [0.00, 0.00] & 5.85/5.85 & +0.00 [0.00, 0.00] \\
\((0.1,0.2]\) & 61.29/61.29 & +0.00 [0.00, 0.00] & 5.85/5.85 & +0.00 [0.00, 0.00] \\
\((0.2,0.5]\) & 61.29/61.29 & +0.00 [0.00, 0.00] & 5.85/5.85 & +0.00 [0.00, 0.00] \\
Alternative spelling & 61.29/61.29 & +0.00 [0.00, 0.00] & 5.85/5.85 & +0.00 [0.00, 0.00] \\
Article deletion & 61.29/61.29 & +0.00 [0.00, 0.00] & 5.85/5.85 & +0.00 [0.00, 0.00] \\
Homoglyph & 61.29/61.29 & +0.00 [0.00, 0.00] & 5.85/5.85 & +0.00 [0.00, 0.00] \\
Paragraph insertion & 61.29/61.29 & +0.00 [0.00, 0.00] & 5.85/5.85 & +0.00 [0.00, 0.00] \\
Number change & 61.29/61.45 & +0.16 [0.02, 0.31] & 5.85/5.90 & +0.06 [-0.08, 0.18] \\
Paraphrase & 61.29/61.29 & +0.00 [0.00, 0.00] & 5.85/5.85 & +0.00 [0.00, 0.00] \\
Misspelling & 61.29/61.45 & +0.16 [0.02, 0.31] & 5.85/5.90 & +0.06 [-0.08, 0.17] \\
Synonym & 61.29/61.29 & +0.00 [0.00, 0.00] & 5.85/5.85 & +0.00 [0.00, 0.00] \\
Case change & 61.29/60.98 & -0.31 [-0.60, -0.02] & 5.85/6.08 & +0.23 [0.00, 0.47] \\
Whitespace & 61.29/61.29 & +0.00 [0.00, 0.00] & 5.85/5.85 & +0.00 [0.00, 0.00] \\
Zero-width space & 61.29/61.29 & +0.00 [0.00, 0.00] & 5.85/5.85 & +0.00 [0.00, 0.00] \\
\end{longtable}
\endgroup
\begingroup\setlength{\tabcolsep}{3pt}
\begin{longtable}{@{}lrrrr@{}}
\caption{Rank: clean detection and false positives for each specific fit. Each entry Raw/Clip gives both operating points.}\label{tab:raid-costs-rank}\\
\toprule
Fitting group & Clean TPR & Change [95\% CI] & Human FPR & Change [95\% CI] \\
\midrule\endfirsthead
\multicolumn{5}{l}{\textit{Continued}}\\
\toprule
Fitting group & Clean TPR & Change [95\% CI] & Human FPR & Change [95\% CI] \\
\midrule\endhead
\bottomrule\endfoot
\((0,0.05]\) & 47.91/64.71 & +16.80 [15.42, 18.18] & 5.54/5.92 & +0.39 [-0.29, 1.13] \\
\((0.05,0.1]\) & 47.91/64.71 & +16.80 [15.38, 18.20] & 5.54/5.92 & +0.39 [-0.33, 1.13] \\
\((0.1,0.2]\) & 47.91/65.35 & +17.44 [16.14, 18.80] & 5.54/5.57 & +0.04 [-0.58, 0.66] \\
\((0.2,0.5]\) & 47.91/65.35 & +17.44 [16.20, 18.70] & 5.54/5.57 & +0.04 [-0.58, 0.64] \\
Alternative spelling & 47.91/65.35 & +17.44 [16.12, 18.74] & 5.54/5.57 & +0.04 [-0.60, 0.68] \\
Article deletion & 47.91/65.35 & +17.44 [16.12, 18.70] & 5.54/5.57 & +0.04 [-0.62, 0.68] \\
Homoglyph & 47.91/61.41 & +13.50 [12.06, 14.93] & 5.54/6.20 & +0.66 [-0.08, 1.42] \\
Paragraph insertion & 47.91/65.35 & +17.44 [16.16, 18.68] & 5.54/5.57 & +0.04 [-0.62, 0.66] \\
Number change & 47.91/65.35 & +17.44 [16.22, 18.80] & 5.54/5.57 & +0.04 [-0.58, 0.66] \\
Paraphrase & 47.91/65.35 & +17.44 [16.08, 18.80] & 5.54/5.57 & +0.04 [-0.60, 0.70] \\
Misspelling & 47.91/65.35 & +17.44 [16.18, 18.78] & 5.54/5.57 & +0.04 [-0.58, 0.64] \\
Synonym & 47.91/63.24 & +15.32 [13.91, 16.76] & 5.54/5.90 & +0.37 [-0.35, 1.09] \\
Case change & 47.91/63.24 & +15.32 [14.00, 16.72] & 5.54/5.90 & +0.37 [-0.43, 1.13] \\
Whitespace & 47.91/65.35 & +17.44 [16.12, 18.72] & 5.54/5.57 & +0.04 [-0.62, 0.68] \\
Zero-width space & 47.91/61.41 & +13.50 [12.10, 14.97] & 5.54/6.20 & +0.66 [-0.06, 1.40] \\
\end{longtable}
\endgroup
\begingroup\setlength{\tabcolsep}{3pt}
\begin{longtable}{@{}lrrrr@{}}
\caption{Log rank: clean detection and false positives for each specific fit. Each entry Raw/Clip gives both operating points.}\label{tab:raid-costs-log-rank}\\
\toprule
Fitting group & Clean TPR & Change [95\% CI] & Human FPR & Change [95\% CI] \\
\midrule\endfirsthead
\multicolumn{5}{l}{\textit{Continued}}\\
\toprule
Fitting group & Clean TPR & Change [95\% CI] & Human FPR & Change [95\% CI] \\
\midrule\endhead
\bottomrule\endfoot
\((0,0.05]\) & 63.76/63.68 & -0.08 [-0.37, 0.19] & 5.98/5.85 & -0.14 [-0.37, 0.08] \\
\((0.05,0.1]\) & 63.76/63.68 & -0.08 [-0.37, 0.23] & 5.98/5.85 & -0.14 [-0.37, 0.10] \\
\((0.1,0.2]\) & 63.76/63.82 & +0.06 [-0.12, 0.23] & 5.98/5.83 & -0.16 [-0.33, 0.02] \\
\((0.2,0.5]\) & 63.76/63.82 & +0.06 [-0.10, 0.21] & 5.98/5.83 & -0.16 [-0.33, 0.02] \\
Alternative spelling & 63.76/63.82 & +0.06 [-0.12, 0.23] & 5.98/5.83 & -0.16 [-0.33, 0.02] \\
Article deletion & 63.76/63.82 & +0.06 [-0.10, 0.23] & 5.98/5.83 & -0.16 [-0.33, 0.00] \\
Homoglyph & 63.76/63.84 & +0.08 [-0.06, 0.21] & 5.98/5.98 & +0.00 [-0.14, 0.14] \\
Paragraph insertion & 63.76/63.82 & +0.06 [-0.10, 0.23] & 5.98/5.83 & -0.16 [-0.33, 0.02] \\
Number change & 63.76/63.68 & -0.08 [-0.37, 0.21] & 5.98/5.85 & -0.14 [-0.37, 0.10] \\
Paraphrase & 63.76/63.84 & +0.08 [-0.04, 0.21] & 5.98/5.98 & +0.00 [-0.12, 0.14] \\
Misspelling & 63.76/63.82 & +0.06 [-0.10, 0.23] & 5.98/5.83 & -0.16 [-0.33, 0.02] \\
Synonym & 63.76/63.82 & +0.06 [-0.12, 0.23] & 5.98/5.83 & -0.16 [-0.33, 0.00] \\
Case change & 63.76/63.86 & +0.10 [-0.31, 0.50] & 5.98/6.23 & +0.25 [-0.06, 0.58] \\
Whitespace & 63.76/63.82 & +0.06 [-0.10, 0.21] & 5.98/5.83 & -0.16 [-0.33, 0.00] \\
Zero-width space & 63.76/63.68 & -0.08 [-0.35, 0.19] & 5.98/5.85 & -0.14 [-0.37, 0.10] \\
\end{longtable}
\endgroup
\begingroup\setlength{\tabcolsep}{3pt}
\begin{longtable}{@{}lrrrr@{}}
\caption{LRR: clean detection and false positives for each specific fit. Each entry Raw/Clip gives both operating points.}\label{tab:raid-costs-lrr}\\
\toprule
Fitting group & Clean TPR & Change [95\% CI] & Human FPR & Change [95\% CI] \\
\midrule\endfirsthead
\multicolumn{5}{l}{\textit{Continued}}\\
\toprule
Fitting group & Clean TPR & Change [95\% CI] & Human FPR & Change [95\% CI] \\
\midrule\endhead
\bottomrule\endfoot
\((0,0.05]\) & 66.05/69.86 & +3.81 [3.11, 4.58] & 5.55/5.54 & -0.02 [-0.49, 0.49] \\
\((0.05,0.1]\) & 66.05/69.86 & +3.81 [3.13, 4.55] & 5.55/5.54 & -0.02 [-0.50, 0.49] \\
\((0.1,0.2]\) & 66.05/70.91 & +4.86 [4.19, 5.50] & 5.55/5.63 & +0.08 [-0.39, 0.52] \\
\((0.2,0.5]\) & 66.05/69.86 & +3.81 [3.09, 4.49] & 5.55/5.54 & -0.02 [-0.52, 0.50] \\
Alternative spelling & 66.05/70.91 & +4.86 [4.19, 5.52] & 5.55/5.63 & +0.08 [-0.39, 0.52] \\
Article deletion & 66.05/70.91 & +4.86 [4.21, 5.54] & 5.55/5.63 & +0.08 [-0.37, 0.51] \\
Homoglyph & 66.05/70.98 & +4.93 [4.25, 5.59] & 5.55/5.44 & -0.12 [-0.60, 0.37] \\
Paragraph insertion & 66.05/70.91 & +4.86 [4.18, 5.52] & 5.55/5.63 & +0.08 [-0.37, 0.52] \\
Number change & 66.05/69.45 & +3.40 [2.72, 4.08] & 5.55/5.48 & -0.08 [-0.54, 0.41] \\
Paraphrase & 66.05/70.91 & +4.86 [4.21, 5.54] & 5.55/5.63 & +0.08 [-0.37, 0.50] \\
Misspelling & 66.05/70.91 & +4.86 [4.19, 5.48] & 5.55/5.63 & +0.08 [-0.37, 0.58] \\
Synonym & 66.05/64.60 & -1.46 [-2.35, -0.56] & 5.55/5.17 & -0.39 [-1.05, 0.23] \\
Case change & 66.05/68.25 & +2.19 [1.42, 2.97] & 5.55/5.38 & -0.17 [-0.76, 0.39] \\
Whitespace & 66.05/71.20 & +5.15 [4.49, 5.87] & 5.55/5.77 & +0.21 [-0.25, 0.68] \\
Zero-width space & 66.05/71.20 & +5.15 [4.49, 5.81] & 5.55/5.77 & +0.21 [-0.25, 0.68] \\
\end{longtable}
\endgroup
\begingroup\setlength{\tabcolsep}{3pt}
\begin{longtable}{@{}lrrrr@{}}
\caption{Entropy: clean detection and false positives for each specific fit. Each entry Raw/Clip gives both operating points.}\label{tab:raid-costs-entropy}\\
\toprule
Fitting group & Clean TPR & Change [95\% CI] & Human FPR & Change [95\% CI] \\
\midrule\endfirsthead
\multicolumn{5}{l}{\textit{Continued}}\\
\toprule
Fitting group & Clean TPR & Change [95\% CI] & Human FPR & Change [95\% CI] \\
\midrule\endhead
\bottomrule\endfoot
\((0,0.05]\) & 34.65/35.42 & +0.78 [0.35, 1.26] & 5.75/5.79 & +0.04 [-0.25, 0.33] \\
\((0.05,0.1]\) & 34.65/35.42 & +0.78 [0.33, 1.22] & 5.75/5.79 & +0.04 [-0.25, 0.31] \\
\((0.1,0.2]\) & 34.65/34.90 & +0.25 [0.10, 0.43] & 5.75/5.77 & +0.02 [-0.10, 0.14] \\
\((0.2,0.5]\) & 34.65/35.60 & +0.95 [0.66, 1.26] & 5.75/5.96 & +0.21 [0.00, 0.45] \\
Alternative spelling & 34.65/35.42 & +0.78 [0.33, 1.24] & 5.75/5.79 & +0.04 [-0.23, 0.31] \\
Article deletion & 34.65/35.42 & +0.78 [0.33, 1.22] & 5.75/5.79 & +0.04 [-0.23, 0.33] \\
Homoglyph & 34.65/35.42 & +0.78 [0.31, 1.24] & 5.75/5.79 & +0.04 [-0.23, 0.33] \\
Paragraph insertion & 34.65/35.42 & +0.78 [0.33, 1.22] & 5.75/5.79 & +0.04 [-0.23, 0.31] \\
Number change & 34.65/35.42 & +0.78 [0.31, 1.24] & 5.75/5.79 & +0.04 [-0.25, 0.31] \\
Paraphrase & 34.65/35.42 & +0.78 [0.35, 1.24] & 5.75/5.79 & +0.04 [-0.25, 0.33] \\
Misspelling & 34.65/35.42 & +0.78 [0.33, 1.22] & 5.75/5.79 & +0.04 [-0.23, 0.33] \\
Synonym & 34.65/35.42 & +0.78 [0.31, 1.24] & 5.75/5.79 & +0.04 [-0.25, 0.31] \\
Case change & 34.65/35.68 & +1.03 [0.66, 1.38] & 5.75/5.85 & +0.10 [-0.14, 0.35] \\
Whitespace & 34.65/34.73 & +0.08 [-0.06, 0.21] & 5.75/5.75 & +0.00 [-0.10, 0.10] \\
Zero-width space & 34.65/35.42 & +0.78 [0.35, 1.24] & 5.75/5.79 & +0.04 [-0.23, 0.33] \\
\end{longtable}
\endgroup
\begingroup\setlength{\tabcolsep}{3pt}
\begin{longtable}{@{}lrrrr@{}}
\caption{Entropy gap: clean detection and false positives for each specific fit. Each entry Raw/Clip gives both operating points.}\label{tab:raid-costs-entropy-gap}\\
\toprule
Fitting group & Clean TPR & Change [95\% CI] & Human FPR & Change [95\% CI] \\
\midrule\endfirsthead
\multicolumn{5}{l}{\textit{Continued}}\\
\toprule
Fitting group & Clean TPR & Change [95\% CI] & Human FPR & Change [95\% CI] \\
\midrule\endhead
\bottomrule\endfoot
\((0,0.05]\) & 75.06/74.93 & -0.14 [-0.45, 0.17] & 5.07/5.09 & +0.02 [-0.23, 0.29] \\
\((0.05,0.1]\) & 75.06/75.30 & +0.23 [-0.02, 0.47] & 5.07/4.97 & -0.10 [-0.29, 0.10] \\
\((0.1,0.2]\) & 75.06/73.90 & -1.17 [-1.63, -0.74] & 5.07/5.36 & +0.29 [-0.08, 0.66] \\
\((0.2,0.5]\) & 75.06/73.90 & -1.17 [-1.59, -0.74] & 5.07/5.36 & +0.29 [-0.08, 0.66] \\
Alternative spelling & 75.06/75.30 & +0.23 [0.00, 0.47] & 5.07/4.97 & -0.10 [-0.29, 0.10] \\
Article deletion & 75.06/75.06 & +0.00 [0.00, 0.00] & 5.07/5.07 & +0.00 [0.00, 0.00] \\
Homoglyph & 75.06/75.06 & +0.00 [0.00, 0.00] & 5.07/5.07 & +0.00 [0.00, 0.00] \\
Paragraph insertion & 75.06/75.06 & +0.00 [0.00, 0.00] & 5.07/5.07 & +0.00 [0.00, 0.00] \\
Number change & 75.06/74.93 & -0.14 [-0.45, 0.17] & 5.07/5.09 & +0.02 [-0.23, 0.29] \\
Paraphrase & 75.06/75.06 & +0.00 [0.00, 0.00] & 5.07/5.07 & +0.00 [0.00, 0.00] \\
Misspelling & 75.06/75.30 & +0.23 [0.00, 0.47] & 5.07/4.97 & -0.10 [-0.31, 0.10] \\
Synonym & 75.06/70.09 & -4.97 [-5.63, -4.31] & 5.07/5.19 & +0.12 [-0.37, 0.58] \\
Case change & 75.06/73.90 & -1.17 [-1.63, -0.72] & 5.07/5.36 & +0.29 [-0.08, 0.66] \\
Whitespace & 75.06/75.30 & +0.23 [0.00, 0.47] & 5.07/4.97 & -0.10 [-0.31, 0.10] \\
Zero-width space & 75.06/75.06 & +0.00 [0.00, 0.00] & 5.07/5.07 & +0.00 [0.00, 0.00] \\
\end{longtable}
\endgroup
\begingroup\setlength{\tabcolsep}{3pt}
\begin{longtable}{@{}lrrrr@{}}
\caption{Binoculars: clean detection and false positives for each specific fit. Each entry Raw/Clip gives both operating points.}\label{tab:raid-costs-binocular-origin}\\
\toprule
Fitting group & Clean TPR & Change [95\% CI] & Human FPR & Change [95\% CI] \\
\midrule\endfirsthead
\multicolumn{5}{l}{\textit{Continued}}\\
\toprule
Fitting group & Clean TPR & Change [95\% CI] & Human FPR & Change [95\% CI] \\
\midrule\endhead
\bottomrule\endfoot
\((0,0.05]\) & 80.00/79.92 & -0.08 [-0.29, 0.14] & 4.62/4.53 & -0.10 [-0.33, 0.12] \\
\((0.05,0.1]\) & 80.00/79.72 & -0.27 [-0.58, 0.02] & 4.62/4.66 & +0.04 [-0.25, 0.33] \\
\((0.1,0.2]\) & 80.00/79.72 & -0.27 [-0.58, 0.02] & 4.62/4.66 & +0.04 [-0.27, 0.33] \\
\((0.2,0.5]\) & 80.00/80.00 & +0.00 [0.00, 0.00] & 4.62/4.62 & +0.00 [0.00, 0.00] \\
Alternative spelling & 80.00/80.09 & +0.10 [-0.10, 0.29] & 4.62/4.64 & +0.02 [-0.16, 0.19] \\
Article deletion & 80.00/79.92 & -0.08 [-0.29, 0.14] & 4.62/4.53 & -0.10 [-0.31, 0.12] \\
Homoglyph & 80.00/80.00 & +0.00 [0.00, 0.00] & 4.62/4.62 & +0.00 [0.00, 0.00] \\
Paragraph insertion & 80.00/80.09 & +0.10 [-0.08, 0.29] & 4.62/4.64 & +0.02 [-0.16, 0.19] \\
Number change & 80.00/80.09 & +0.10 [-0.10, 0.29] & 4.62/4.64 & +0.02 [-0.16, 0.19] \\
Paraphrase & 80.00/80.00 & +0.00 [0.00, 0.00] & 4.62/4.62 & +0.00 [0.00, 0.00] \\
Misspelling & 80.00/80.09 & +0.10 [-0.08, 0.29] & 4.62/4.64 & +0.02 [-0.16, 0.19] \\
Synonym & 80.00/70.69 & -9.30 [-10.27, -8.39] & 4.62/5.59 & +0.97 [0.27, 1.67] \\
Case change & 80.00/79.72 & -0.27 [-0.60, 0.04] & 4.62/4.66 & +0.04 [-0.25, 0.33] \\
Whitespace & 80.00/79.92 & -0.08 [-0.29, 0.14] & 4.62/4.53 & -0.10 [-0.33, 0.12] \\
Zero-width space & 80.00/80.00 & +0.00 [0.00, 0.00] & 4.62/4.62 & +0.00 [0.00, 0.00] \\
\end{longtable}
\endgroup

\subsection{Secondary universal comparisons}\label{app:raid-universal}
Full-universal fitting uses all attacked tuning texts; eligible-universal
fitting uses those with \(0<\rho\leq0.5\).
Both use the same cross-fitted TPR objective as the main analysis.
Within each row, the two fitted rules are evaluated on exactly the same test
population. Full and Eligible give clipped TPRs; changes compare each with
raw, not with each other. These comparisons examine whether a shared bound
is useful when no specific attack or narrow rate interval is supplied.

A narrower fitting population does not guarantee better test performance.
For example, eligible-universal entropy gap and Binoculars have negative
all-attack TPR changes, whereas full-universal entropy gap is unchanged and
full-universal Binoculars improves slightly. Each fit's clean TPR and FPR
are reported directly after its attacked results.

\begingroup\setlength{\tabcolsep}{3pt}
\begin{longtable}{@{}lrrrrr@{}}
\caption{Log likelihood: secondary universal rules evaluated on matched test populations.}\label{tab:raid-universal-log-likelihood}\\
\toprule
Test population & Raw & Full & Change [95\% CI] & Eligible & Change [95\% CI] \\
\midrule\endfirsthead
\multicolumn{6}{l}{\textit{Continued}}\\
\toprule
Test population & Raw & Full & Change [95\% CI] & Eligible & Change [95\% CI] \\
\midrule\endhead
\bottomrule\endfoot
All attacks & 53.82 & 53.82 & \shortstack{+0.00\\{}[0.00, 0.00]} & 53.99 & \shortstack{+0.17\\{}[0.11, 0.22]} \\[3pt]
\(0<\rho\leq0.5\) & 44.10 & 44.10 & \shortstack{+0.00\\{}[0.00, 0.00]} & 44.34 & \shortstack{+0.24\\{}[0.16, 0.33]} \\[3pt]
\((0,0.05]\) & 49.17 & 49.17 & \shortstack{+0.00\\{}[0.00, 0.00]} & 49.49 & \shortstack{+0.32\\{}[0.21, 0.43]} \\[3pt]
\((0.05,0.1]\) & 55.62 & 55.62 & \shortstack{+0.00\\{}[0.00, 0.00]} & 55.83 & \shortstack{+0.21\\{}[0.02, 0.40]} \\[3pt]
\((0.1,0.2]\) & 30.47 & 30.47 & \shortstack{+0.00\\{}[0.00, 0.00]} & 30.73 & \shortstack{+0.25\\{}[0.08, 0.42]} \\[3pt]
\((0.2,0.5]\) & 27.59 & 27.59 & \shortstack{+0.00\\{}[0.00, 0.00]} & 27.42 & \shortstack{-0.17\\{}[-0.40, 0.06]} \\[3pt]
\end{longtable}
\endgroup
\begingroup\setlength{\tabcolsep}{3pt}
\begin{longtable}{@{}lrrrr@{}}
\caption{Log likelihood: clean TPR and human FPR for the two universal fits (Raw/Clip).}\label{tab:raid-universal-cost-log-likelihood}\\
\toprule
Fit & Clean TPR & Change [95\% CI] & Human FPR & Change [95\% CI] \\
\midrule\endfirsthead
\multicolumn{5}{l}{\textit{Continued}}\\
\toprule
Fit & Clean TPR & Change [95\% CI] & Human FPR & Change [95\% CI] \\
\midrule\endhead
\bottomrule\endfoot
Full & 61.29/61.29 & +0.00 [0.00, 0.00] & 5.85/5.85 & +0.00 [0.00, 0.00] \\
Eligible & 61.29/61.45 & +0.16 [0.02, 0.31] & 5.85/5.90 & +0.06 [-0.06, 0.19] \\
\end{longtable}
\endgroup
\begingroup\setlength{\tabcolsep}{3pt}
\begin{longtable}{@{}lrrrrr@{}}
\caption{Rank: secondary universal rules evaluated on matched test populations.}\label{tab:raid-universal-rank}\\
\toprule
Test population & Raw & Full & Change [95\% CI] & Eligible & Change [95\% CI] \\
\midrule\endfirsthead
\multicolumn{6}{l}{\textit{Continued}}\\
\toprule
Test population & Raw & Full & Change [95\% CI] & Eligible & Change [95\% CI] \\
\midrule\endhead
\bottomrule\endfoot
All attacks & 37.90 & 57.64 & \shortstack{+19.74\\{}[19.00, 20.43]} & 57.75 & \shortstack{+19.84\\{}[19.08, 20.60]} \\[3pt]
\(0<\rho\leq0.5\) & 28.90 & 49.58 & \shortstack{+20.68\\{}[19.90, 21.62]} & 50.26 & \shortstack{+21.36\\{}[20.43, 22.28]} \\[3pt]
\((0,0.05]\) & 29.48 & 53.37 & \shortstack{+23.89\\{}[22.82, 24.97]} & 54.71 & \shortstack{+25.23\\{}[24.10, 26.38]} \\[3pt]
\((0.05,0.1]\) & 35.05 & 59.01 & \shortstack{+23.96\\{}[22.48, 25.43]} & 61.45 & \shortstack{+26.40\\{}[24.84, 28.01]} \\[3pt]
\((0.1,0.2]\) & 24.87 & 40.83 & \shortstack{+15.96\\{}[14.88, 17.00]} & 39.13 & \shortstack{+14.26\\{}[13.18, 15.40]} \\[3pt]
\((0.2,0.5]\) & 25.62 & 33.27 & \shortstack{+7.65\\{}[6.29, 8.99]} & 32.61 & \shortstack{+6.99\\{}[5.55, 8.32]} \\[3pt]
\end{longtable}
\endgroup
\begingroup\setlength{\tabcolsep}{3pt}
\begin{longtable}{@{}lrrrr@{}}
\caption{Rank: clean TPR and human FPR for the two universal fits (Raw/Clip).}\label{tab:raid-universal-cost-rank}\\
\toprule
Fit & Clean TPR & Change [95\% CI] & Human FPR & Change [95\% CI] \\
\midrule\endfirsthead
\multicolumn{5}{l}{\textit{Continued}}\\
\toprule
Fit & Clean TPR & Change [95\% CI] & Human FPR & Change [95\% CI] \\
\midrule\endhead
\bottomrule\endfoot
Full & 47.91/65.35 & +17.44 [16.12, 18.78] & 5.54/5.57 & +0.04 [-0.56, 0.70] \\
Eligible & 47.91/64.71 & +16.80 [15.46, 18.14] & 5.54/5.92 & +0.39 [-0.31, 1.07] \\
\end{longtable}
\endgroup
\begingroup\setlength{\tabcolsep}{3pt}
\begin{longtable}{@{}lrrrrr@{}}
\caption{Log rank: secondary universal rules evaluated on matched test populations.}\label{tab:raid-universal-log-rank}\\
\toprule
Test population & Raw & Full & Change [95\% CI] & Eligible & Change [95\% CI] \\
\midrule\endfirsthead
\multicolumn{6}{l}{\textit{Continued}}\\
\toprule
Test population & Raw & Full & Change [95\% CI] & Eligible & Change [95\% CI] \\
\midrule\endhead
\bottomrule\endfoot
All attacks & 56.89 & 57.10 & \shortstack{+0.21\\{}[0.15, 0.29]} & 57.17 & \shortstack{+0.28\\{}[0.17, 0.40]} \\[3pt]
\(0<\rho\leq0.5\) & 48.36 & 48.69 & \shortstack{+0.34\\{}[0.24, 0.43]} & 48.87 & \shortstack{+0.51\\{}[0.37, 0.66]} \\[3pt]
\((0,0.05]\) & 53.24 & 53.63 & \shortstack{+0.40\\{}[0.27, 0.53]} & 53.98 & \shortstack{+0.74\\{}[0.55, 0.94]} \\[3pt]
\((0.05,0.1]\) & 59.38 & 59.85 & \shortstack{+0.47\\{}[0.23, 0.72]} & 60.06 & \shortstack{+0.68\\{}[0.32, 1.08]} \\[3pt]
\((0.1,0.2]\) & 35.79 & 36.07 & \shortstack{+0.28\\{}[0.11, 0.46]} & 35.88 & \shortstack{+0.10\\{}[-0.15, 0.35]} \\[3pt]
\((0.2,0.5]\) & 31.49 & 31.40 & \shortstack{-0.09\\{}[-0.32, 0.14]} & 31.35 & \shortstack{-0.14\\{}[-0.46, 0.17]} \\[3pt]
\end{longtable}
\endgroup
\begingroup\setlength{\tabcolsep}{3pt}
\begin{longtable}{@{}lrrrr@{}}
\caption{Log rank: clean TPR and human FPR for the two universal fits (Raw/Clip).}\label{tab:raid-universal-cost-log-rank}\\
\toprule
Fit & Clean TPR & Change [95\% CI] & Human FPR & Change [95\% CI] \\
\midrule\endfirsthead
\multicolumn{5}{l}{\textit{Continued}}\\
\toprule
Fit & Clean TPR & Change [95\% CI] & Human FPR & Change [95\% CI] \\
\midrule\endhead
\bottomrule\endfoot
Full & 63.76/63.82 & +0.06 [-0.10, 0.23] & 5.98/5.83 & -0.16 [-0.33, 0.02] \\
Eligible & 63.76/63.68 & -0.08 [-0.37, 0.21] & 5.98/5.85 & -0.14 [-0.35, 0.10] \\
\end{longtable}
\endgroup
\begingroup\setlength{\tabcolsep}{3pt}
\begin{longtable}{@{}lrrrrr@{}}
\caption{LRR: secondary universal rules evaluated on matched test populations.}\label{tab:raid-universal-lrr}\\
\toprule
Test population & Raw & Full & Change [95\% CI] & Eligible & Change [95\% CI] \\
\midrule\endfirsthead
\multicolumn{6}{l}{\textit{Continued}}\\
\toprule
Test population & Raw & Full & Change [95\% CI] & Eligible & Change [95\% CI] \\
\midrule\endhead
\bottomrule\endfoot
All attacks & 59.29 & 62.84 & \shortstack{+3.55\\{}[3.25, 3.87]} & 62.09 & \shortstack{+2.79\\{}[2.43, 3.16]} \\[3pt]
\(0<\rho\leq0.5\) & 55.56 & 57.68 & \shortstack{+2.12\\{}[1.76, 2.49]} & 57.65 & \shortstack{+2.09\\{}[1.66, 2.53]} \\[3pt]
\((0,0.05]\) & 59.51 & 61.85 & \shortstack{+2.35\\{}[1.83, 2.83]} & 61.79 & \shortstack{+2.28\\{}[1.72, 2.85]} \\[3pt]
\((0.05,0.1]\) & 64.48 & 68.11 & \shortstack{+3.63\\{}[2.87, 4.39]} & 68.69 & \shortstack{+4.21\\{}[3.36, 4.98]} \\[3pt]
\((0.1,0.2]\) & 47.60 & 48.33 & \shortstack{+0.73\\{}[0.14, 1.34]} & 47.86 & \shortstack{+0.27\\{}[-0.36, 0.90]} \\[3pt]
\((0.2,0.5]\) & 37.36 & 39.03 & \shortstack{+1.66\\{}[0.80, 2.52]} & 39.28 & \shortstack{+1.92\\{}[1.04, 2.79]} \\[3pt]
\end{longtable}
\endgroup
\begingroup\setlength{\tabcolsep}{3pt}
\begin{longtable}{@{}lrrrr@{}}
\caption{LRR: clean TPR and human FPR for the two universal fits (Raw/Clip).}\label{tab:raid-universal-cost-lrr}\\
\toprule
Fit & Clean TPR & Change [95\% CI] & Human FPR & Change [95\% CI] \\
\midrule\endfirsthead
\multicolumn{5}{l}{\textit{Continued}}\\
\toprule
Fit & Clean TPR & Change [95\% CI] & Human FPR & Change [95\% CI] \\
\midrule\endhead
\bottomrule\endfoot
Full & 66.05/70.91 & +4.86 [4.19, 5.48] & 5.55/5.63 & +0.08 [-0.37, 0.52] \\
Eligible & 66.05/69.86 & +3.81 [3.13, 4.54] & 5.55/5.54 & -0.02 [-0.54, 0.47] \\
\end{longtable}
\endgroup
\begingroup\setlength{\tabcolsep}{3pt}
\begin{longtable}{@{}lrrrrr@{}}
\caption{Entropy: secondary universal rules evaluated on matched test populations.}\label{tab:raid-universal-entropy}\\
\toprule
Test population & Raw & Full & Change [95\% CI] & Eligible & Change [95\% CI] \\
\midrule\endfirsthead
\multicolumn{6}{l}{\textit{Continued}}\\
\toprule
Test population & Raw & Full & Change [95\% CI] & Eligible & Change [95\% CI] \\
\midrule\endhead
\bottomrule\endfoot
All attacks & 37.52 & 37.68 & \shortstack{+0.16\\{}[0.00, 0.32]} & 37.91 & \shortstack{+0.39\\{}[0.29, 0.50]} \\[3pt]
\(0<\rho\leq0.5\) & 24.78 & 24.75 & \shortstack{-0.04\\{}[-0.25, 0.16]} & 25.17 & \shortstack{+0.38\\{}[0.26, 0.50]} \\[3pt]
\((0,0.05]\) & 26.11 & 26.20 & \shortstack{+0.09\\{}[-0.19, 0.35]} & 26.65 & \shortstack{+0.53\\{}[0.38, 0.69]} \\[3pt]
\((0.05,0.1]\) & 35.07 & 35.50 & \shortstack{+0.43\\{}[0.00, 0.87]} & 35.73 & \shortstack{+0.66\\{}[0.37, 0.97]} \\[3pt]
\((0.1,0.2]\) & 18.50 & 17.82 & \shortstack{-0.67\\{}[-1.01, -0.35]} & 18.37 & \shortstack{-0.13\\{}[-0.33, 0.10]} \\[3pt]
\((0.2,0.5]\) & 16.28 & 16.19 & \shortstack{-0.09\\{}[-0.44, 0.26]} & 16.48 & \shortstack{+0.20\\{}[-0.06, 0.48]} \\[3pt]
\end{longtable}
\endgroup
\begingroup\setlength{\tabcolsep}{3pt}
\begin{longtable}{@{}lrrrr@{}}
\caption{Entropy: clean TPR and human FPR for the two universal fits (Raw/Clip).}\label{tab:raid-universal-cost-entropy}\\
\toprule
Fit & Clean TPR & Change [95\% CI] & Human FPR & Change [95\% CI] \\
\midrule\endfirsthead
\multicolumn{5}{l}{\textit{Continued}}\\
\toprule
Fit & Clean TPR & Change [95\% CI] & Human FPR & Change [95\% CI] \\
\midrule\endhead
\bottomrule\endfoot
Full & 34.65/35.42 & +0.78 [0.33, 1.22] & 5.75/5.79 & +0.04 [-0.23, 0.31] \\
Eligible & 34.65/35.60 & +0.95 [0.66, 1.26] & 5.75/5.96 & +0.21 [0.00, 0.43] \\
\end{longtable}
\endgroup
\begingroup\setlength{\tabcolsep}{3pt}
\begin{longtable}{@{}lrrrrr@{}}
\caption{Entropy gap: secondary universal rules evaluated on matched test populations.}\label{tab:raid-universal-entropy-gap}\\
\toprule
Test population & Raw & Full & Change [95\% CI] & Eligible & Change [95\% CI] \\
\midrule\endfirsthead
\multicolumn{6}{l}{\textit{Continued}}\\
\toprule
Test population & Raw & Full & Change [95\% CI] & Eligible & Change [95\% CI] \\
\midrule\endhead
\bottomrule\endfoot
All attacks & 60.31 & 60.31 & \shortstack{+0.00\\{}[0.00, 0.00]} & 59.79 & \shortstack{-0.52\\{}[-0.68, -0.35]} \\[3pt]
\(0<\rho\leq0.5\) & 61.04 & 61.04 & \shortstack{+0.00\\{}[0.00, 0.00]} & 62.23 & \shortstack{+1.19\\{}[1.01, 1.37]} \\[3pt]
\((0,0.05]\) & 66.55 & 66.55 & \shortstack{+0.00\\{}[0.00, 0.00]} & 68.15 & \shortstack{+1.60\\{}[1.37, 1.86]} \\[3pt]
\((0.05,0.1]\) & 71.64 & 71.64 & \shortstack{+0.00\\{}[0.00, 0.00]} & 72.30 & \shortstack{+0.66\\{}[0.19, 1.13]} \\[3pt]
\((0.1,0.2]\) & 46.77 & 46.77 & \shortstack{+0.00\\{}[0.00, 0.00]} & 47.44 & \shortstack{+0.67\\{}[0.31, 1.03]} \\[3pt]
\((0.2,0.5]\) & 44.58 & 44.58 & \shortstack{+0.00\\{}[0.00, 0.00]} & 45.24 & \shortstack{+0.66\\{}[0.20, 1.14]} \\[3pt]
\end{longtable}
\endgroup
\begingroup\setlength{\tabcolsep}{3pt}
\begin{longtable}{@{}lrrrr@{}}
\caption{Entropy gap: clean TPR and human FPR for the two universal fits (Raw/Clip).}\label{tab:raid-universal-cost-entropy-gap}\\
\toprule
Fit & Clean TPR & Change [95\% CI] & Human FPR & Change [95\% CI] \\
\midrule\endfirsthead
\multicolumn{5}{l}{\textit{Continued}}\\
\toprule
Fit & Clean TPR & Change [95\% CI] & Human FPR & Change [95\% CI] \\
\midrule\endhead
\bottomrule\endfoot
Full & 75.06/75.06 & +0.00 [0.00, 0.00] & 5.07/5.07 & +0.00 [0.00, 0.00] \\
Eligible & 75.06/74.93 & -0.14 [-0.47, 0.17] & 5.07/5.09 & +0.02 [-0.23, 0.27] \\
\end{longtable}
\endgroup
\begingroup\setlength{\tabcolsep}{3pt}
\begin{longtable}{@{}lrrrrr@{}}
\caption{Binoculars: secondary universal rules evaluated on matched test populations.}\label{tab:raid-universal-binocular-origin}\\
\toprule
Test population & Raw & Full & Change [95\% CI] & Eligible & Change [95\% CI] \\
\midrule\endfirsthead
\multicolumn{6}{l}{\textit{Continued}}\\
\toprule
Test population & Raw & Full & Change [95\% CI] & Eligible & Change [95\% CI] \\
\midrule\endhead
\bottomrule\endfoot
All attacks & 72.15 & 72.32 & \shortstack{+0.17\\{}[0.09, 0.25]} & 71.49 & \shortstack{-0.66\\{}[-0.82, -0.49]} \\[3pt]
\(0<\rho\leq0.5\) & 71.06 & 71.38 & \shortstack{+0.32\\{}[0.22, 0.41]} & 71.29 & \shortstack{+0.22\\{}[0.04, 0.42]} \\[3pt]
\((0,0.05]\) & 74.27 & 74.58 & \shortstack{+0.31\\{}[0.19, 0.43]} & 74.35 & \shortstack{+0.08\\{}[-0.15, 0.30]} \\[3pt]
\((0.05,0.1]\) & 82.79 & 83.04 & \shortstack{+0.26\\{}[0.04, 0.47]} & 83.06 & \shortstack{+0.28\\{}[-0.13, 0.66]} \\[3pt]
\((0.1,0.2]\) & 60.88 & 61.30 & \shortstack{+0.42\\{}[0.20, 0.65]} & 61.75 & \shortstack{+0.87\\{}[0.49, 1.26]} \\[3pt]
\((0.2,0.5]\) & 57.88 & 58.14 & \shortstack{+0.26\\{}[-0.03, 0.56]} & 57.56 & \shortstack{-0.32\\{}[-0.83, 0.18]} \\[3pt]
\end{longtable}
\endgroup
\begingroup\setlength{\tabcolsep}{3pt}
\begin{longtable}{@{}lrrrr@{}}
\caption{Binoculars: clean TPR and human FPR for the two universal fits (Raw/Clip).}\label{tab:raid-universal-cost-binocular-origin}\\
\toprule
Fit & Clean TPR & Change [95\% CI] & Human FPR & Change [95\% CI] \\
\midrule\endfirsthead
\multicolumn{5}{l}{\textit{Continued}}\\
\toprule
Fit & Clean TPR & Change [95\% CI] & Human FPR & Change [95\% CI] \\
\midrule\endhead
\bottomrule\endfoot
Full & 80.00/80.09 & +0.10 [-0.10, 0.29] & 4.62/4.64 & +0.02 [-0.16, 0.19] \\
Eligible & 80.00/79.72 & -0.27 [-0.62, 0.04] & 4.62/4.66 & +0.04 [-0.25, 0.33] \\
\end{longtable}
\endgroup

\subsection{Selected directions and clipping bounds}\label{app:raid-bounds}
The tables list all 119 fitted specifications.
A floor acts on oriented additive token evidence.
The NLL cap bounds \(-\log p_i\); LRR also caps \(\log r_i\).
Dashes in all three bound columns mean no clipping.
Directions are shared across the fitting groups for a detector.
The displayed values are rounded to six significant digits; evaluation uses
the saved full-precision values. These are clipping bounds, not decision
thresholds. The tuning-text column counts the attacked texts used in the
objective; every fit also uses the clean tuning references.

\begingroup\setlength{\tabcolsep}{3pt}
\begin{longtable}{@{}lrrrr@{}}
\caption{Log likelihood: selected bounds; direction \(d=+1\).}\label{tab:raid-bounds-log-likelihood}\\
\toprule
Fitting group & Tuning texts & Floor & NLL cap & Log-rank cap \\
\midrule\endfirsthead
\multicolumn{5}{l}{\textit{Continued}}\\
\toprule
Fitting group & Tuning texts & Floor & NLL cap & Log-rank cap \\
\midrule\endhead
\bottomrule\endfoot
\((0,0.05]\) & 19989 & -11.1402 & -- & -- \\
\((0.05,0.1]\) & 4672 & -- & -- & -- \\
\((0.1,0.2]\) & 7169 & -- & -- & -- \\
\((0.2,0.5]\) & 3406 & -- & -- & -- \\
Alternative spelling & 5148 & -- & -- & -- \\
Article deletion & 5148 & -- & -- & -- \\
Homoglyph & 5148 & -- & -- & -- \\
Paragraph insertion & 5148 & -- & -- & -- \\
Number change & 5148 & -11.1402 & -- & -- \\
Paraphrase & 5148 & -- & -- & -- \\
Misspelling & 5148 & -11.1402 & -- & -- \\
Synonym & 5148 & -- & -- & -- \\
Case change & 5148 & -8.29309 & -- & -- \\
Whitespace & 5148 & -- & -- & -- \\
Zero-width space & 5148 & -- & -- & -- \\
Full universal & 56628 & -- & -- & -- \\
Eligible universal & 35236 & -11.1402 & -- & -- \\
\end{longtable}
\endgroup
\begingroup\setlength{\tabcolsep}{3pt}
\begin{longtable}{@{}lrrrr@{}}
\caption{Rank: selected bounds; direction \(d=-1\).}\label{tab:raid-bounds-rank}\\
\toprule
Fitting group & Tuning texts & Floor & NLL cap & Log-rank cap \\
\midrule\endfirsthead
\multicolumn{5}{l}{\textit{Continued}}\\
\toprule
Fitting group & Tuning texts & Floor & NLL cap & Log-rank cap \\
\midrule\endhead
\bottomrule\endfoot
\((0,0.05]\) & 19989 & -21 & -- & -- \\
\((0.05,0.1]\) & 4672 & -21 & -- & -- \\
\((0.1,0.2]\) & 7169 & -66 & -- & -- \\
\((0.2,0.5]\) & 3406 & -66 & -- & -- \\
Alternative spelling & 5148 & -66 & -- & -- \\
Article deletion & 5148 & -66 & -- & -- \\
Homoglyph & 5148 & -6 & -- & -- \\
Paragraph insertion & 5148 & -66 & -- & -- \\
Number change & 5148 & -66 & -- & -- \\
Paraphrase & 5148 & -66 & -- & -- \\
Misspelling & 5148 & -66 & -- & -- \\
Synonym & 5148 & -10 & -- & -- \\
Case change & 5148 & -10 & -- & -- \\
Whitespace & 5148 & -66 & -- & -- \\
Zero-width space & 5148 & -6 & -- & -- \\
Full universal & 56628 & -66 & -- & -- \\
Eligible universal & 35236 & -21 & -- & -- \\
\end{longtable}
\endgroup
\begingroup\setlength{\tabcolsep}{3pt}
\begin{longtable}{@{}lrrrr@{}}
\caption{Log rank: selected bounds; direction \(d=-1\).}\label{tab:raid-bounds-log-rank}\\
\toprule
Fitting group & Tuning texts & Floor & NLL cap & Log-rank cap \\
\midrule\endfirsthead
\multicolumn{5}{l}{\textit{Continued}}\\
\toprule
Fitting group & Tuning texts & Floor & NLL cap & Log-rank cap \\
\midrule\endhead
\bottomrule\endfoot
\((0,0.05]\) & 19989 & -5.2575 & -- & -- \\
\((0.05,0.1]\) & 4672 & -5.2575 & -- & -- \\
\((0.1,0.2]\) & 7169 & -6.49979 & -- & -- \\
\((0.2,0.5]\) & 3406 & -6.49979 & -- & -- \\
Alternative spelling & 5148 & -6.49979 & -- & -- \\
Article deletion & 5148 & -6.49979 & -- & -- \\
Homoglyph & 5148 & -7.27309 & -- & -- \\
Paragraph insertion & 5148 & -6.49979 & -- & -- \\
Number change & 5148 & -5.2575 & -- & -- \\
Paraphrase & 5148 & -7.27309 & -- & -- \\
Misspelling & 5148 & -6.49979 & -- & -- \\
Synonym & 5148 & -6.49979 & -- & -- \\
Case change & 5148 & -4.18965 & -- & -- \\
Whitespace & 5148 & -6.49979 & -- & -- \\
Zero-width space & 5148 & -5.2575 & -- & -- \\
Full universal & 56628 & -6.49979 & -- & -- \\
Eligible universal & 35236 & -5.2575 & -- & -- \\
\end{longtable}
\endgroup
\begingroup\setlength{\tabcolsep}{3pt}
\begin{longtable}{@{}lrrrr@{}}
\caption{LRR: selected bounds; direction \(d=+1\).}\label{tab:raid-bounds-lrr}\\
\toprule
Fitting group & Tuning texts & Floor & NLL cap & Log-rank cap \\
\midrule\endfirsthead
\multicolumn{5}{l}{\textit{Continued}}\\
\toprule
Fitting group & Tuning texts & Floor & NLL cap & Log-rank cap \\
\midrule\endhead
\bottomrule\endfoot
\((0,0.05]\) & 19989 & -- & 3.59968 & 3.04452 \\
\((0.05,0.1]\) & 4672 & -- & 3.59968 & 3.04452 \\
\((0.1,0.2]\) & 7169 & -- & 3.59968 & 4.18965 \\
\((0.2,0.5]\) & 3406 & -- & 3.59968 & 3.04452 \\
Alternative spelling & 5148 & -- & 3.59968 & 4.18965 \\
Article deletion & 5148 & -- & 3.59968 & 4.18965 \\
Homoglyph & 5148 & -- & 3.59968 & 6.49979 \\
Paragraph insertion & 5148 & -- & 3.59968 & 4.18965 \\
Number change & 5148 & -- & 4.33537 & 3.04452 \\
Paraphrase & 5148 & -- & 3.59968 & 4.18965 \\
Misspelling & 5148 & -- & 3.59968 & 4.18965 \\
Synonym & 5148 & -- & 3.59968 & 1.79176 \\
Case change & 5148 & -- & 3.59968 & 2.30259 \\
Whitespace & 5148 & -- & 3.59968 & 5.2575 \\
Zero-width space & 5148 & -- & 3.59968 & 5.2575 \\
Full universal & 56628 & -- & 3.59968 & 4.18965 \\
Eligible universal & 35236 & -- & 3.59968 & 3.04452 \\
\end{longtable}
\endgroup
\begingroup\setlength{\tabcolsep}{3pt}
\begin{longtable}{@{}lrrrr@{}}
\caption{Entropy: selected bounds; direction \(d=-1\).}\label{tab:raid-bounds-entropy}\\
\toprule
Fitting group & Tuning texts & Floor & NLL cap & Log-rank cap \\
\midrule\endfirsthead
\multicolumn{5}{l}{\textit{Continued}}\\
\toprule
Fitting group & Tuning texts & Floor & NLL cap & Log-rank cap \\
\midrule\endhead
\bottomrule\endfoot
\((0,0.05]\) & 19989 & -3.59632 & -- & -- \\
\((0.05,0.1]\) & 4672 & -3.59632 & -- & -- \\
\((0.1,0.2]\) & 7169 & -5.63195 & -- & -- \\
\((0.2,0.5]\) & 3406 & -4.41172 & -- & -- \\
Alternative spelling & 5148 & -3.59632 & -- & -- \\
Article deletion & 5148 & -3.59632 & -- & -- \\
Homoglyph & 5148 & -3.59632 & -- & -- \\
Paragraph insertion & 5148 & -3.59632 & -- & -- \\
Number change & 5148 & -3.59632 & -- & -- \\
Paraphrase & 5148 & -3.59632 & -- & -- \\
Misspelling & 5148 & -3.59632 & -- & -- \\
Synonym & 5148 & -3.59632 & -- & -- \\
Case change & 5148 & -3.96486 & -- & -- \\
Whitespace & 5148 & -6.26594 & -- & -- \\
Zero-width space & 5148 & -3.59632 & -- & -- \\
Full universal & 56628 & -3.59632 & -- & -- \\
Eligible universal & 35236 & -4.41172 & -- & -- \\
\end{longtable}
\endgroup
\begingroup\setlength{\tabcolsep}{3pt}
\begin{longtable}{@{}lrrrr@{}}
\caption{Entropy gap: selected bounds; direction \(d=-1\).}\label{tab:raid-bounds-entropy-gap}\\
\toprule
Fitting group & Tuning texts & Floor & NLL cap & Log-rank cap \\
\midrule\endfirsthead
\multicolumn{5}{l}{\textit{Continued}}\\
\toprule
Fitting group & Tuning texts & Floor & NLL cap & Log-rank cap \\
\midrule\endhead
\bottomrule\endfoot
\((0,0.05]\) & 19989 & -6.7337 & -- & -- \\
\((0.05,0.1]\) & 4672 & -8.05884 & -- & -- \\
\((0.1,0.2]\) & 7169 & -4.91388 & -- & -- \\
\((0.2,0.5]\) & 3406 & -4.91388 & -- & -- \\
Alternative spelling & 5148 & -8.05884 & -- & -- \\
Article deletion & 5148 & -- & -- & -- \\
Homoglyph & 5148 & -- & -- & -- \\
Paragraph insertion & 5148 & -- & -- & -- \\
Number change & 5148 & -6.7337 & -- & -- \\
Paraphrase & 5148 & -- & -- & -- \\
Misspelling & 5148 & -8.05884 & -- & -- \\
Synonym & 5148 & -3.47183 & -- & -- \\
Case change & 5148 & -4.91388 & -- & -- \\
Whitespace & 5148 & -8.05884 & -- & -- \\
Zero-width space & 5148 & -- & -- & -- \\
Full universal & 56628 & -- & -- & -- \\
Eligible universal & 35236 & -6.7337 & -- & -- \\
\end{longtable}
\endgroup
\begingroup\setlength{\tabcolsep}{3pt}
\begin{longtable}{@{}lrrrr@{}}
\caption{Binoculars: selected bounds; direction \(d=-1\).}\label{tab:raid-bounds-binocular-origin}\\
\toprule
Fitting group & Tuning texts & Floor & NLL cap & Log-rank cap \\
\midrule\endfirsthead
\multicolumn{5}{l}{\textit{Continued}}\\
\toprule
Fitting group & Tuning texts & Floor & NLL cap & Log-rank cap \\
\midrule\endhead
\bottomrule\endfoot
\((0,0.05]\) & 19989 & -- & 10.25 & -- \\
\((0.05,0.1]\) & 4672 & -- & 8.625 & -- \\
\((0.1,0.2]\) & 7169 & -- & 8.625 & -- \\
\((0.2,0.5]\) & 3406 & -- & -- & -- \\
Alternative spelling & 5148 & -- & 11.3125 & -- \\
Article deletion & 5148 & -- & 10.25 & -- \\
Homoglyph & 5148 & -- & -- & -- \\
Paragraph insertion & 5148 & -- & 11.3125 & -- \\
Number change & 5148 & -- & 11.3125 & -- \\
Paraphrase & 5148 & -- & -- & -- \\
Misspelling & 5148 & -- & 11.3125 & -- \\
Synonym & 5148 & -- & 3.96875 & -- \\
Case change & 5148 & -- & 8.625 & -- \\
Whitespace & 5148 & -- & 10.25 & -- \\
Zero-width space & 5148 & -- & -- & -- \\
Full universal & 56628 & -- & 11.3125 & -- \\
Eligible universal & 35236 & -- & 8.625 & -- \\
\end{longtable}
\endgroup

\subsection{Modification rates and text truncation}
The first table summarizes the edit rates for each attack. Q1--Q3 gives the middle half of the observed rates. The last three columns show the percentages with no measured change, an eligible rate \(0<\rho\leq0.5\), or a dense rate \(\rho>0.5\). Each condition contains 5,149 test texts.

\Needspace{20\baselineskip}
\begingroup
\begin{longtable}{lrrrrr}
\caption{Modification rates by attack, measured on 5,149 test texts per condition.}\label{tab:raid-rho}\\
\toprule
Condition & Median & Q1--Q3 & Zero (\%) & Eligible (\%) & Dense (\%) \\
\midrule\endfirsthead
\multicolumn{6}{l}{\textit{Continued from previous page}}\\
\toprule
Condition & Median & Q1--Q3 & Zero (\%) & Eligible (\%) & Dense (\%) \\
\midrule\endhead
\bottomrule\endfoot
None & 0.000 & 0.000--0.000 & 100.0 & 0.0 & 0.0 \\
Alternative spelling & 0.006 & 0.000--0.015 & 36.1 & 63.9 & 0.0 \\
Article deletion & 0.027 & 0.015--0.037 & 10.4 & 89.6 & 0.0 \\
Homoglyph & 0.963 & 0.933--0.977 & 0.0 & 0.0 & 100.0 \\
Paragraph insertion & 0.051 & 0.034--0.068 & 6.9 & 93.1 & 0.0 \\
Number & 0.000 & 0.000--0.013 & 54.0 & 46.0 & 0.0 \\
Paraphrase & 0.613 & 0.523--0.708 & 0.0 & 20.2 & 79.8 \\
Misspelling & 0.012 & 0.000--0.024 & 28.5 & 71.5 & 0.0 \\
Synonym & 0.183 & 0.144--0.219 & 0.1 & 99.9 & 0.0 \\
Case change & 0.040 & 0.035--0.045 & 0.3 & 99.7 & 0.0 \\
Whitespace & 0.124 & 0.110--0.137 & 0.1 & 99.9 & 0.0 \\
Zero-width space & 0.984 & 0.975--0.990 & 0.0 & 0.0 & 100.0 \\
\end{longtable}
\endgroup

Homoglyph and zero-width-space attacks change many Falcon tokens even when the text looks similar. Their median edit rates are 0.963 and 0.984. The 512-token scoring limit truncates 87.3\% of homoglyph outputs and 99.4\% of zero-width-space outputs, compared with 5.2\% of clean outputs. Their full-token edit distance exceeds the scored-window distance in 87.1\% and 99.4\% of cases, respectively.

\subsection{Sample composition}

The following tables show how the 12,871 selected sources are distributed across generators, decoding methods, and domains in each split. Each source is counted once, using its selected machine output. Generators and decoding methods have different sample sizes because family selection was performed within each source. The selected generations comprise 6,507 greedy outputs and 6,364 sampled outputs.

\Needspace{20\baselineskip}
\begingroup
\begin{longtable}{lrrrr}
\caption{Observed generator identifier: one selected machine family per human-source group.}\label{tab:raid-composition-generator}\\
\toprule
Observed generator identifier & Tuning & Calibration & Test & Total \\
\midrule\endfirsthead
\multicolumn{5}{l}{\textit{Continued from previous page}}\\
\toprule
Observed generator identifier & Tuning & Calibration & Test & Total \\
\midrule\endhead
\bottomrule\endfoot
chatgpt & 337 & 150 & 320 & 807 \\
cohere & 315 & 146 & 292 & 753 \\
cohere-chat & 284 & 147 & 285 & 716 \\
gpt2 & 610 & 268 & 593 & 1,471 \\
gpt3 & 276 & 162 & 317 & 755 \\
gpt4 & 294 & 169 & 329 & 792 \\
llama-chat & 637 & 303 & 597 & 1,537 \\
mistral & 602 & 326 & 600 & 1,528 \\
mistral-chat & 597 & 308 & 601 & 1,506 \\
mpt & 586 & 313 & 617 & 1,516 \\
mpt-chat & 610 & 282 & 598 & 1,490 \\
\textbf{Total} & 5,148 & 2,574 & 5,149 & 12,871 \\
\end{longtable}
\endgroup

\Needspace{17\baselineskip}
\begingroup
\begin{longtable}{lrrrr}
\caption{Source domain: one selected machine family per human-source group.}\label{tab:raid-composition-domain}\\
\toprule
Source domain & Tuning & Calibration & Test & Total \\
\midrule\endfirsthead
\multicolumn{5}{l}{\textit{Continued from previous page}}\\
\toprule
Source domain & Tuning & Calibration & Test & Total \\
\midrule\endhead
\bottomrule\endfoot
abstracts & 681 & 341 & 681 & 1,703 \\
books & 687 & 344 & 687 & 1,718 \\
news & 687 & 343 & 687 & 1,717 \\
poetry & 683 & 342 & 683 & 1,708 \\
recipes & 684 & 342 & 684 & 1,710 \\
reddit & 687 & 343 & 687 & 1,717 \\
reviews & 352 & 176 & 353 & 881 \\
wiki & 687 & 343 & 687 & 1,717 \\
\textbf{Total} & 5,148 & 2,574 & 5,149 & 12,871 \\
\end{longtable}
\endgroup

\end{document}